\documentclass[twoside]{article}

\usepackage[accepted]{aistats2025}

\usepackage{import, graphicx, caption, subcaption, algorithm, algpseudocode, enumitem, adjustbox, dblfloatfix, afterpage}
\usepackage{booktabs}

\usepackage{amsfonts, amsthm, mathtools, thmtools, thm-restate, cancel, dsfont, aligned-overset} %

\DeclareMathOperator*{\argmax}{argmax}

\usepackage{natbib}
\bibliographystyle{plainnat}
\setlength{\bibsep}{0pt}
\setcitestyle{aysep={}} 
\newcommand{\parencite}{\citep} 

\newtheorem{prop}{Proposition}
\newtheorem{prop_inf}[prop]{Informal Proposition}
\newtheorem{lem}{Lemma}

\newtheorem{defn}{Definition}

\usepackage{pgfplots}
\usepgfplotslibrary{external}
\tikzexternalize
\everymath=\expandafter{\the\everymath\displaystyle}
\NewDocumentCommand{\incplt}{O{\columnwidth}m}{%
  \begin{center}
    \adjustbox{width=#1}{\import{./graphics/}{#2.pgf}}
  \end{center}
}
\pgfplotsset{width=10cm,compat=1.9}
\usepackage{xcolor}
\definecolor{hanblue}{rgb}{0.27, 0.42, 0.81}
\definecolor{deepred}{HTML}{900C3F}
\definecolor{deepgreen}{HTML}{2F6960}
\usepackage[hidelinks]{hyperref}
\hypersetup{
    colorlinks=true,
    linkcolor=hanblue,
    urlcolor=hanblue,
    citecolor=hanblue,
    anchorcolor=hanblue}

 \usepackage[capitalize,noabbrev]{cleveref}

\DeclarePairedDelimiter\parentheses{(}{)}
\DeclarePairedDelimiter\brackets{[}{]}

\newcommand{\lite}{\textsc{LITE}}
\newcommand{\ts}{\textsc{TS-MC}}
 
\newcommand{\ia}{\textsc{Independence Assumption}}
\newcommand{\ias}{\textsc{Indep. Assum.}}
\newcommand{\alite}{\textsc{A-LITE}}
\newcommand{\flite}{\textsc{F-LITE}}
\newcommand{\fvapor}{\textsc{F-VAPOR}}
\newcommand{\vapor}{\textsc{VAPOR}}
\newcommand{\est}{\textsc{EST}}
\newcommand{\ms}{\textsc{MEANS}}
\newcommand{\tss}{\textsc{TS}}

\NewDocumentCommand{\fnPr}{}{\mathbb{P}}
\RenewDocumentCommand{\Pr}{om}{\fnPr\IfValueT{#1}{_{#1}}\parentheses*{#2}}
\RenewDocumentCommand{\H}{mo}{\ensuremath{\mathrm{H}\IfValueTF{#2}{\!\left[#1\ \middle|\ #2\right]}{\brackets*{#1}}}}
\NewDocumentCommand{\Hsm}{mo}{\mathrm{H}\IfValueTF{#2}{[#1 \mid #2]}{\brackets{#1}}}

\begin{document}

\twocolumn[

\aistatstitle{LITE: Efficiently Estimating Gaussian Probability of Maximality}

\aistatsauthor{ Nicolas Menet \And Jonas Hübotter \And  Parnian Kassraie \And Andreas Krause}

\aistatsaddress{ ETH Zurich \And  ETH Zurich \And ETH Zurich \And ETH Zurich } ]
\addtocontents{toc}{\protect\setcounter{tocdepth}{0}}
\begin{abstract}
\looseness -1 We consider the problem of computing the {\em probability of maximality} (PoM) of a Gaussian random vector, i.e., the probability for each dimension to be maximal.
This is a key challenge in applications ranging from Bayesian optimization to reinforcement learning, where
the PoM not only helps with finding an optimal action, but yields a fine-grained analysis of the action domain, crucial in tasks such as drug discovery.
Existing techniques are costly, scaling polynomially in computation and memory with the vector size.
We introduce \lite{}, the first approach for estimating Gaussian PoM with {\em almost-linear time and memory} complexity.
\lite{} achieves SOTA accuracy on a number of tasks, while being in practice several orders of magnitude faster than the baselines. This also translates to a better performance on downstream tasks such as entropy estimation and optimal control of bandits. Theoretically, we cast \lite{} as entropy-regularized UCB and connect it to prior PoM estimators. %

\end{abstract}

\section{INTRODUCTION} \label{sec:intro}

Bayesian optimization~\parencite{garnett_bayesoptbook_2023} has emerged as a cornerstone for large-scale experimental design and automated discovery. Similarly, contextual bandits~\parencite{lattimore2020bandit} have been established as the leading model for personalized recommender systems~\parencite{li2010contextual} and have proven essential in the alignment of large language models~\parencite{christiano2017deep, rafailov2024direct}. Finally, reinforcement learning~\parencite{sutton2018reinforcement} has become indispensable in control systems and robotics~\parencite{kober2013reinforcement}. In spite of the vastly different application domains, these fields of study are highly related: they all adopt a Bayesian perspective on an unknown \textit{reward vector} $F$ over actions $\mathcal X$, whose posterior $p(F|\mathcal D)$ is used for informed decision-making, where $\mathcal{D}$ denotes the evidential data. %
Viewed as an interactive game between an agent and the world, these applications differ in the input context, the number of turns of the game (optimal trajectories vs.\ single-step optimal actions) and the definition of the reward. 
However, the key notion of 
\textit{probability of maximality} (PoM) naturally occurs in all these scenarios, by assisting the agent in solving the decision-making problem. 
PoM is the probability measure that \textit{Thompson sampling}~\parencite{thompson1933likelihood, russo2016information, russo2018tutorial, chapelle2011empirical} chooses actions from. Moreover, the entropy of this distribution is the objective that information-theoretic Bayesian optimization seeks to minimize~\parencite{hennig2012entropy, hernandez2014predictive, pmlr-v70-wang17e, hvarfner2022joint}. Lastly, under a suitable framing, PoM describes the data likelihood in inverse reinforcement learning~\parencite{thurstone1927method, guo2010gaussian, benavoli2021preferential}. \looseness=-1

As a concrete example, let us devise a recall-optimal bandit strategy for virtual screening in molecular design \parencite{gao2022sample, wang2023graph}. The goal of this task is to suggest a small set $E$ from a large domain of molecules $\mathcal{X}$, so that the probability of $E$ containing the optimal molecule, a.k.a.~the recall, is maximized. 
Figure~\ref{fig:quadcopter_recall} compares three solutions to this problem, and plots the recall as $|E|$ grows. Two baselines~\parencite{komiyama2015optimal} are provided by the naive methods of selecting $E$ via Thompson sampling (TS) or by choosing the top-$|E|$ molecules 
 with the largest expected rewards (MEANS). We propose to instead first estimate PoM using \lite{}, and then choose its $|E|$ largest entries.
The PoM-based method markedly outperforms the alternatives, and is in fact the provably optimal solution, under mild assumptions. \looseness=-1

\begin{figure}[ht]
    \centering
    \incplt[0.8\linewidth]{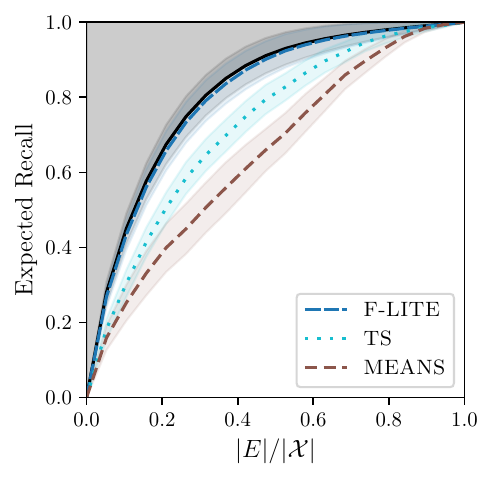}
    \vspace{-10pt}
    \caption{Selection of $E \subseteq \mathcal X$ according to the PoM estimates using \lite{} is near optimal (the gray shaded area is unachievable in expectation) and outperforms standard heuristics such as TS or selection based on the expected rewards (cf.~\cref{app:fig1_detail} for details).}
    \label{fig:quadcopter_recall}
\end{figure}

Despite the key role of Gaussian probability of maximality in Bayesian optimization~\parencite{garnett_bayesoptbook_2023}, contextual bandits~\parencite{krause2011contextual}, and reinforcement learning~\parencite{strens2000bayesian}, there has been limited investigation into its efficient estimation.
In practice, Thompson sampling is often used to calculate a Monte Carlo estimate of PoM~\parencite{hennig2012entropy}.
We refer to this technique as \ts{}\footnote{\cref{sec:TSE} presents a primer on TS and \ts{}.}, and demonstrate in Figure~\ref{fig:runtime_over_domain_size} that it becomes infeasible on sizable domains $|\mathcal X| \gg 1$, preventing large-scale real-world applications.
A handful of works, which we cover next,
provide explicit methods for estimation of PoM given a Gaussian distribution over the reward. \cref{fig:runtime_over_domain_size} compares our solution, \lite{}, with these works with respect to their computational complexity.\looseness -1

Avoiding a direct estimation of PoM, \est{}\footnote{EST is short for ``optimization as \textbf{est}imation with Gaussian processes in bandit settings''.} calculates a lower bound to Gaussian PoM \parencite{wang2016optimization} and provides a faster alternative to \ts{}.
However, as our results demonstrate, this comes at the cost of a lower accuracy (cf.~\cref{{tab:tv_distance_summary}}). 
\lite{} not only outperforms \est{}, but also computationally scales better to large domains. \looseness=-1

Our approach is most closely related to a recent result on probabilistic inference in reinforcement learning~\parencite{tarbouriech2024probabilistic} which proposed \vapor{},\!\footnote{\vapor{} is short for ``\textbf{v}ariational \textbf{a}pproximation
of the posterior \textbf{p}robability of \textbf{o}ptimality in \textbf{R}L''} a method for estimating sub-Gaussian PoM. \vapor{} numerically solves a variational objective to obtain an approximation to PoM.
In this work, we point out an interpretable near-closed-form solution to \vapor{}. Furthermore, we demonstrate that \lite{} achieves a significantly more accurate estimation of Gaussian PoM.\looseness=-1

Our work adds to the literature on Gaussian PoM estimation through the following contributions:

\begin{itemize}

    \item We introduce \lite{} (\textit{Linear-Time Independence-based Estimators}), a novel family of efficient estimators for computing Gaussian PoM with two variants: \textbf{A}-\lite{} and \textbf{F}-\lite{}, which are designed for higher \textbf{a}ccuracy or \textbf{f}aster runtime.%

    \item \lite{} scales almost-linearly in complexity as the domain size grows. This is enabled by our key idea of adopting an \ia{}, reducing the complexity by a factor of at least $|\mathcal X|$.

    \item We empirically analyze the statistical accuracy, time, and memory scaling of PoM estimation using \lite{} and existing baselines. \lite{} achieves the pareto-optimal performance for these criteria.

\end{itemize}

\begin{figure*}[ht]

    \begin{minipage}{0.5 \linewidth}
    \begin{center}
        \begin{tabular}{ll}
        \toprule
            \textbf{Method}  &\textbf{Operations} \\
            \midrule
            \ts{}
            &\(\Theta(|\mathcal X|^3 + |\mathcal X|^2 %
            / \epsilon^2)\)\\
            \ias{} & \(\Theta( | \mathcal X| \sqrt{\log(1/\epsilon)} / \epsilon)\)\\
            \lite{} (ours) & $\Theta(|\mathcal X| \log(\log (| \mathcal X|)/\epsilon))$ \\
            \fvapor{} (ours) & $\Theta(|\mathcal X| \log(\log (| \mathcal X|)/\epsilon))$ \\ \\
            & \textbf{Memory} \\
            \midrule
            \ts{}
            & \(\Theta(|\mathcal X|^2)\)\\
            \ias{} & \(\Theta(|\mathcal X| + \sqrt{\log(1/\epsilon)} / \epsilon)\)\\
            \lite{} (ours) & $\Theta(|\mathcal X|)$ \\
            \fvapor{} (ours) & $\Theta(|\mathcal X|)$ \\
            \bottomrule
        \end{tabular}
    \end{center}
    \vspace{15pt}
    
    \end{minipage}\begin{minipage}{0.5\linewidth}
        \begin{center}
            \incplt[0.8\linewidth]{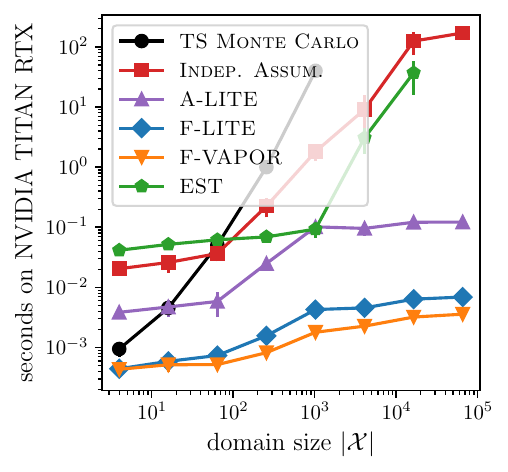}
        \end{center}
    \end{minipage}

    \vspace{-5pt}
    \caption[Description of asymptotic and empirical scaling of PoM estimators.]{Asymptotic and empirical scaling of PoM estimators.%
    ~Only \lite{} and \fvapor{} remain computationally feasible on large domains $|\mathcal X| \gg 1$ for the convergence threshold $\epsilon \in \Theta(1/|\mathcal X|)$. The minimal gap between \flite{} and \fvapor{} stems from evaluation of the slightly more expensive standard Gaussian cumulative distribution function as opposed to the exponential function. Appendix~\ref{sec:describing_fig_runtime_over_domain_size} details the experimental setup.}
    \label{fig:runtime_over_domain_size}
\end{figure*}

\section{PRELIMINARIES} \label{sec:problem_setting}

We study random reward functions over large but finite action domains~$\mathcal{X}$, concisely expressed as random vectors $F$ of length~$|\mathcal{X}|$.
These reward vectors are assumed to follow a multivariate Gaussian, i.e.,
\begin{equation*}
    F \sim \mathcal N(\mu_F, \Sigma_F)
\end{equation*} with mean $\mu_F$ and covariance matrix $\Sigma_F$.\!\footnote{As we suggest in Section~\ref{sec:future_work}, the 
Gaussian assumption may be relaxed
to all Lévy alpha-stable distributions.}
We let $F^* := \max\nolimits_x F_x$ and $X^* := \arg\max\nolimits_x F_x$ be its maximum and maximizer, respectively.
We assume the maximizer to be unique almost surely, which is satisfied automatically as long as $F$ does not contain same-mean, perfectly-correlated entries:

\begin{restatable}{ass}{uniqueMaximiser}\label{ass:unique_maximiser} $X^*$ is almost surely unique, which is equivalently expressed as $\textstyle\sum_{x \in \mathcal X} \mathbb P[x \in X^*] = 1$.
\end{restatable}\vspace{-5pt}

Under this model, we are interested in calculating the \textit{probability of maximality} (PoM), the probability of any coordinate being the maximizer: 
\begin{equation*}
    p_x \:\!:=\!\: \mathbb P[x \!\in\! X^*] \:\!=\!\: \mathbb P[F_x \!=\! F^*] \:\!=\!\: \mathbb P[F_x \!\geq\! F_z\ \forall z \!\not =\! x].
\end{equation*}

To see how $p_x$ can be used, consider the recall-optimal bandit problem, in which the goal is to find a set of $k$ arms $E \subseteq \mathcal X$ that maximizes the expected recall (true positives of maximizers). In other words, we solve
\begin{equation*}
    \argmax_{E \subseteq \mathcal X : |E|=k} \mathbb P[X^* \in E] = \argmax_{E \subseteq \mathcal X : |E|=k} \sum_{x \in E} p_x,
\end{equation*}
where equality holds under Assumption~\ref{ass:unique_maximiser}. This objective is maximized precisely by setting $E$ to the indices of the $k$ largest entries of PoM, motivating our study.

The PoM is an elusive quantity: direct numerical integration over the probability density function of $F$ must cover $|\mathcal X|$-dimensional space, and Monte Carlo integration based on $n$ i.i.d. Thompson samples (which we call \ts{}) %
converges very slowly at rate~$1/\sqrt{n}$~\parencite{morokoff1995quasi}. To make matters worse, PoM is usually rather small, scaling inversely with $|\mathcal X|$.\footnote{To see this, consider $F:\mathcal X \to \mathbb R$ as a discretization on a regular grid of a continuous Gaussian process on $[0,1]^d$. Then the existence of the PDF of $X^*$ mandates that PoM scale inversely to $|\mathcal X|$ as $|\mathcal X| \to \infty$. See also Appendix~\ref{sec:high_Thompson_sample_requirement_of_PoM}.}
Therefore, to yield useful approximations, estimators of PoM need to be ``$\epsilon$-accurate'' with $\epsilon \in \Theta(1/|\mathcal X|)$, that is, they need to run until $\epsilon$-convergence to their analytical limit. \looseness=-1

Figures~\ref{fig:runtime_over_domain_size} and~\ref{fig:accuracy_over_runtime} demonstrate that  \ts{}, the standard estimator for Gaussian PoM~\parencite{hennig2012entropy}, unfortunately does not scale to real-world domains (where $|\mathcal X|$ is often very large). Addressing these scalability issues, in this work we develop efficient estimators of Gaussian PoM that rely on the following key assumption:\looseness=-1

\begin{restatable}{ass}{Independence}\label{ass:independence} $F$ is such that PoM can be reasonably approximated assuming independent entries in $F$, i.e.,
$$p_x = \mathbb P[F_x \geq F_z\ \forall z \not = x] \approx \tilde p_x = \mathbb P[\tilde F_x \geq \tilde F_z\ \forall z \not = x]$$ where $F \sim \mathcal N(\mu_F, \Sigma_F)$ and $\tilde F \sim \mathcal N(\mu_F, \mathrm{diag}(\Sigma_F))$.
\end{restatable}\vspace{-5pt}

This mean-field approximation may hold by design, for instance in large-scale inverse reinforcement learning such as RLHF~\parencite{christiano2017deep}, or under a sufficiently coarse discretization of %
a continuous Gaussian process~\parencite{wang2016optimization, pmlr-v70-wang17e}.
As we show experimentally in Section~\ref{sec:results}, LITE effectively estimates PoM in presence of dependence structure. For further discussion on the bias introduced by Assumption~\ref{ass:independence}, we refer the reader to Appendix~\ref{sec:ndependence_assumption}.

\section{ALMOST-LINEAR TIME POM ESTIMATION WITH \lite{}} \label{sec:method}

We obtain the almost-linear-time estimator of PoM, LITE, in two steps. In the remainder of this paper we denote by $\phi$ the PDF and by~$\Phi$ the CDF of the standard Gaussian, and defer all proofs to Appendix~\ref{sec:proofs}.\looseness=-1

\paragraph{First step.}
Under the independence assumption, we consider ${\tilde F \sim \mathcal N(\mu_F, \mathrm{diag}(\sigma_{F_1}^2, \ldots, \sigma_{F_{|\mathcal X|}}^2))}$ instead of $F$, and obtain its PoM via 
\begin{equation}\label{eq:independence_assumption_simplification}
    \tilde p_x = \mathbb{P}[\tilde F_z \leq \tilde F_x\ \forall z \not= x] = \mathbb E \prod_{z \not = x} \mathbb P[ \tilde F_z \leq \tilde F_x \mid \tilde F_x] .
\end{equation}
This formulation enables us to evaluate a tractable one-dimensional integral instead of the intractable $|\mathcal X|$-dimensional integral under dependency structure.
We denote the integrand of Equation~\eqref{eq:independence_assumption_simplification} by
\begin{equation*}
    g^x(f) := \prod_{z \not = x} \mathbb P [\tilde F_z \leq f] = g(f) / \mathbb P[\tilde F_x \leq f]
\end{equation*}
with $g(f) := \textstyle\prod\nolimits_z \mathbb P[\tilde F_z \leq f]$. Through reuse of evaluations of $g(f)$, it costs as much to compute $(g^x(f_i))_{i=1}^n$ for one $x$ as it does for all $x \in \mathcal X$. A good choice of $n+1 \in \Theta(\sqrt{\log(1/\epsilon)} / \epsilon)$ shared integration points then guarantees uniformly $\epsilon$-convergent predictions:

\begin{prop_inf}[Formalized in Proposition~\ref{cor:computing_the_independence_estimator_for_Gaussian_processes_simplified}]
    Let $\epsilon \in (0,1/4]$.
    With $n+1 \in \Theta(\sqrt{\log(1/\epsilon)} / \epsilon)$ appropriately set integration points $f_0, \dots, f_n \in \mathbb{R} \cup \{\pm \infty\}$, we estimate Gaussian PoM $\forall x \in \mathcal X$ by \begin{align*}
        \tilde q_x := \sum_{i=0}^{n-1} \frac{g^x(f_{i+1}) + g^x(f_i)}{2} \mathbb P[\tilde F_x \!\in\! (f_i, f_{i+1}]].
    \end{align*}
    It then holds for all $x \in \mathcal X$ that $| \tilde p_x - \tilde q_x | \! \leq\! \epsilon.$
\end{prop_inf}

The shared integrand $g(f)$ is computed in $\Theta(|\mathcal{X}|)$ for a single integration point.
So, under the independence assumption, consistent estimation of PoM can be performed in just $\Theta(|\mathcal X| \sqrt{\log(1/\epsilon)}/\epsilon)$, our first significant runtime improvement over \ts{}.

\paragraph{Second step.}
To remove the linear scaling in $1/\epsilon$ that stems from numerical integration, we propose to approximate $g^x(f)$ %
with the CDF of a Gaussian: %
\begin{equation*}
    g^x(f) = \prod_{z \not = x} \Phi\!\left(\frac{f - \mu_{F_z}}{\sigma_{F_z}}\right) \approx \Phi\!\left(\frac{f-m_x}{s_x}\right).
\end{equation*}
Under this variational approximation, we can solve the integral of Equation~\eqref{eq:independence_assumption_simplification} in closed-form:
\begin{equation}\label{eq:lite_closed_form_PoM_prediction}
	\tilde p_x = \mathbb E[g^x(\tilde F_x)] \approx \mathbb E \Phi\!\left(\frac{\tilde F_x -m_x}{s_x}\right) = \Phi\Bigg(\frac{\mu_{F_x} - m_x}{\sqrt{\sigma_{F_x}^2 + s_x^2}}\Bigg).
\end{equation}
Both variants of \lite{} rely on Equation~\eqref{eq:lite_closed_form_PoM_prediction}, but differ in how they approximate $g^x$, i.e., in how they determine the free variables $m_x$ and $s_x$:
\begin{itemize}
\item \alite{} uses nested binary search to match the quartiles of $\Phi((\, \cdot\, -m_x) / s_x)$ to those of~$g^x$.
\item \flite{} sets $s_x = 0$ and leverages Assumption~\ref{ass:unique_maximiser} to find a shared normalizing threshold $m_x = \kappa^*$.
\end{itemize}

In the following, we focus our exposition on the ``fast'' (and simpler) variant \flite{}, even though we find in our experiments that the ``accurate'' variant \alite{} tends to be the more faithful estimator.
We include a detailed discussion of \alite{} in Appendix~\ref{sec:alite}.\looseness=-1 %

\subsection{Fast \lite{}}
\flite{} approximates the Gaussian PoM in Equation~\eqref{eq:lite_closed_form_PoM_prediction} with $s_x = 0$, which is suggested by concentration of measure of the maximum,\!\footnote{Proposition~\ref{cor:maximum_of_gaussians_converges_in_probability} in Appendix~\ref{sec:proofs} shows that the distribution of the maximum concentrates as $|\mathcal X| \to \infty$.} and leverages
Assumption~\ref{ass:unique_maximiser} to find a shared normalizing threshold~$\kappa^*$:
\begin{equation*}
    \tilde p_x \approx q_x := \Phi\!\left(\frac{\mu_{F_x} - \kappa^*}{\sigma_{F_x}}\right) \text{ with $\kappa^*$ s.t. } \sum_x q_x = 1.
\end{equation*}
Here, $\kappa^*$ can be found efficiently using binary search. %
We summarize \flite{} in Algorithm~\ref{alg:CME}. The boundaries of the binary search window and the implied complexity is derived in the following proposition:\looseness=-1

\begin{prop_inf}[Formalized in Proposition~\ref{cor:logarithmic_search_to_obtain_gaussian_probability_of_optimality}]
    Observe that $\textstyle\sum\nolimits_{x \in \mathcal X} \Phi\!\left((\mu_{F_x} - \kappa) / \sigma_{F_x}\right)$ is continuous and monotonically decreasing in $\kappa$.
    We determine bounds $\kappa_{low}, \kappa_{up}$ on $\kappa^*$ such that $\kappa_{up} - \kappa_{low} \in \Theta(\sqrt{\log |\mathcal X|})$.
    Therefore, with $\kappa^k$ the $k$-th iterate of binary search and $k \in \Theta(\log(\log(|\mathcal X|) / \epsilon))$ it holds for all $x \in \mathcal{X}$ that $|\mathbb P[F_x \geq \kappa^*] - \mathbb P[F_x \geq \kappa^k]| \leq \epsilon$.
\end{prop_inf}

Each iteration of binary search requires summing the entries $q_x$, and therefore the compute cost of \flite{} is almost-linear at $\Theta(|\mathcal X| \log(\log(|\mathcal X|) / \epsilon))$ operations.
This provides us with an efficient PoM estimator that can be applied to real-world tasks with large domains.\looseness=-1

\begin{algorithm}[t!]
\caption{\flite{}}\label{alg:CME}
\begin{algorithmic}
\Require $\mu_{F}, \sigma_{F}, \epsilon$
\State $\kappa_{low} \gets \mu_F^{min} + \sigma_F^{min} \cdot - \Phi^{-1}(1/|\mathcal X|)$
\State $\kappa_{up} \gets \mu_F^{max} + \sigma_F^{max} \cdot - \Phi^{-1}(1/|\mathcal X|)$%
\State $\text{max-error} \gets \epsilon$
\While{$\text{max-error} \geq \epsilon$}
\State $\kappa \gets \tfrac{1}{2}\kappa_{up} + \tfrac{1}{2}\kappa_{low}$
\State $s \gets \sum\nolimits_{x \in \mathcal X} \Phi(\frac{\mu_{F_x} - \kappa}{\sigma_{F_x}})$ %
\State \textbf{if} $s > 1$  \textbf{then} $\kappa_{low} \gets \kappa$ \textbf{else} $\kappa_{up} \gets \kappa$
\State $\text{max-error} \gets \max_{x \in \mathcal X} \Phi(\frac{\mu_{F_x} \text{-} \kappa_{low}}{\sigma_{F_x}}) - \Phi(\frac{\mu_{F_x} \text{-} \kappa_{up}}{\sigma_{F_x}})$%
\EndWhile
\State $q_x \gets \tfrac{1}{2}\Phi(\frac{\mu_{F_x} - \kappa_{low}}{\sigma_{F_x}}) + \tfrac{1}{2}\Phi(\frac{\mu_{F_x} - \kappa_{up}}{\sigma_{F_x}})\ \ \forall x \in \mathcal X$%
\\
\Return $(q_x / \sum\nolimits_{z \in \mathcal X} q_z)_{x \in \mathcal X}$ %
\end{algorithmic}
\end{algorithm}

\subsection{Properties of \flite{}}\label{sec:differentiating_through_flite}

\paragraph{Differentiability.}
\flite{} admits a closed-form expression for the derivatives of the estimated PoMs w.r.t. the parameters $\mu_F$ and $\sigma_F$ of the Gaussian reward vector.
Such derivatives are essential for the use of PoM estimates as data likelihoods in machine learning.
For example, the likelihood of $k$-option preference feedback (a case of inverse RL) is measured by PoM~\parencite{christiano2017deep, bradley1952rank, thurstone1927method}, and derivatives are key to end-to-end learning of such preferences.

\begin{restatable}%
[]{prop}{gradientsOfCME}\label{thm:gradients_of_cme}
    Let $h_x := \phi\!\left(\frac{\mu_{F_x} - \kappa^*}{\sigma_{F_x}}\right) \frac{1}{\sigma_{F_x}}$. Then %
    \vspace{-6pt}\begin{align}\label{eq:flite_derivatives_mu}
        \frac{d q_x}{d \mu_{F_z}} & = h_x \cdot \left(\mathds{1}_{x=z}-\frac{h_z}{\sum_{w \in \mathcal X} h_w}\right)\\ \label{eq:flite_derivatives_sigma}
        \frac{d q_x}{d \sigma_{F_z}} & = h_x \cdot\left(\mathds{1}_{x=z} -  \frac{h_z}{\sum_{w \in \mathcal X} h_w}\right) \cdot \frac{\kappa^* - \mu_{F_z}}{\sigma_{F_z}}.
    \end{align}
\end{restatable}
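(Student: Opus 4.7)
The plan is to apply implicit differentiation to the defining equation of $\kappa^*$, namely $\sum_{x \in \mathcal X} q_x = \sum_{x \in \mathcal X} \Phi\bigl((\mu_{F_x} - \kappa^*)/\sigma_{F_x}\bigr) = 1$, and then propagate through the chain rule in the closed form $q_x = \Phi\bigl((\mu_{F_x} - \kappa^*)/\sigma_{F_x}\bigr)$. Before starting, I would note that the final normalization step in Algorithm~\ref{alg:CME} is vacuous at the converged value, because at $\kappa^*$ we have $\sum_z q_z = 1$ exactly, so differentiating $q_x$ and $q_x / \sum_z q_z$ yields the same result.

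First I would invoke the implicit function theorem. The map $\kappa \mapsto \sum_x \Phi((\mu_{F_x} - \kappa)/\sigma_{F_x})$ is continuously differentiable with derivative $-\sum_w h_w < 0$, so $\kappa^*$ is a $C^1$ function of $(\mu_F, \sigma_F)$ and may be differentiated. Differentiating the identity $\sum_x \Phi((\mu_{F_x}-\kappa^*)/\sigma_{F_x}) = 1$ with respect to $\mu_{F_z}$ yields $h_z - \bigl(\sum_w h_w\bigr)\,\partial \kappa^*/\partial \mu_{F_z} = 0$, hence
\begin{equation*}
\frac{\partial \kappa^*}{\partial \mu_{F_z}} = \frac{h_z}{\sum_{w \in \mathcal X} h_w}.
\end{equation*}
Analogously, differentiating with respect to $\sigma_{F_z}$ and using $\partial_{\sigma_{F_z}} \Phi((\mu_{F_z}-\kappa^*)/\sigma_{F_z}) = h_z \cdot (\kappa^* - \mu_{F_z})/\sigma_{F_z}$ gives
\begin{equation*}
\frac{\partial \kappa^*}{\partial \sigma_{F_z}} = \frac{h_z}{\sum_w h_w} \cdot \frac{\kappa^* - \mu_{F_z}}{\sigma_{F_z}}.
\end{equation*}

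Next I would apply the chain rule to $q_x = \Phi((\mu_{F_x}-\kappa^*)/\sigma_{F_x})$. For the $\mu$-derivative, the explicit dependence on $\mu_{F_z}$ contributes $h_x \mathds{1}_{x=z}$ while the implicit dependence through $\kappa^*$ contributes $-h_x \cdot \partial \kappa^*/\partial \mu_{F_z}$, which combined with the formula above produces \eqref{eq:flite_derivatives_mu}. For the $\sigma$-derivative, the explicit dependence on $\sigma_{F_z}$ contributes $h_x \mathds{1}_{x=z} \cdot (\kappa^* - \mu_{F_x})/\sigma_{F_x}$ (which when $x=z$ equals $h_x \mathds{1}_{x=z} \cdot (\kappa^* - \mu_{F_z})/\sigma_{F_z}$) while the implicit contribution is $-h_x \cdot \partial \kappa^*/\partial \sigma_{F_z}$; collecting the common factor $(\kappa^* - \mu_{F_z})/\sigma_{F_z}$ yields \eqref{eq:flite_derivatives_sigma}.

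The only mildly delicate step is verifying that the explicit $\sigma_{F_z}$-contribution, which appears only when $x=z$ and carries the factor $(\kappa^* - \mu_{F_x})/\sigma_{F_x}$, can be written uniformly as $(\kappa^* - \mu_{F_z})/\sigma_{F_z}$ so that it cleanly pulls out of the bracket $\mathds{1}_{x=z} - h_z / \sum_w h_w$; this is immediate since the explicit term is nonzero precisely when $x=z$, in which case the two fractions coincide. Beyond this bookkeeping, the derivation is a direct application of the implicit function theorem and the chain rule, so I do not expect any real obstacle.
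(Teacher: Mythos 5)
Your proposal is correct and follows essentially the same route as the paper: implicit differentiation of the normalization constraint $\sum_x \Phi((\mu_{F_x}-\kappa^*)/\sigma_{F_x})=1$ to obtain $\partial\kappa^*/\partial\mu_{F_z}$ and $\partial\kappa^*/\partial\sigma_{F_z}$, followed by the chain rule on $q_x$. Your additional remarks (nonvanishing derivative $-\sum_w h_w$ justifying the implicit function theorem, and the vacuity of the final normalization at $\kappa^*$) are sound refinements of the same argument.
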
\vspace{-5pt}
Here, $h_x$ is a sensitivity factor.
Equations~\eqref{eq:flite_derivatives_mu} and~\eqref{eq:flite_derivatives_sigma} are remarkably interpretable: increasing $\mu_{F_z}$ renders $z$ a more likely and $x \not = z$ a less likely maximizer. Moreover, increasing $\sigma_{F_z}$ renders $z$ a more likely and $x \not = z$ a less likely maximizer if $q_z < 0.5$ (here uncertainty helps), otherwise $z$ becomes a less likely and $x \not = z$ a more likely maximizer.

\paragraph{Balancing two sources of exploration.}
Efficient exploration is a key challenge in many domains of machine learning, including Bayesian optimization and reinforcement learning.
The necessity for exploration in optimization arises when we are uncertain about the rewards of actions.
In estimation of PoM, we face the same challenge: a faithful estimate of PoM needs to account for what we do not know, and assign a larger PoM to points with low mean and large variance than to points with low mean and low variance.
Remarkably, we show in the following that \flite{} can be seen as a combination of two common exploration-inducing approaches: optimism in the form of an upper-confidence bound~\parencite{garnett_bayesoptbook_2023, jones2001taxonomy, srinivas2009gaussian, vanchinathan2015discovering,chen2017interactive}, short UCB, and entropy regularization~\parencite{ziebart2010modeling, neu2017unified, geist2019theory, mnih2016asynchronous, haarnoja2018soft}.%

\begin{restatable}%
[]{prop}{varPrincipleForCme}\label{thm:variational_principle_encoding_concentrated_maximum_estimator}
    Define the variational objective
    \begin{equation}\label{eq:variational_objective_with_CME_as_solution}
        \mathcal W(p) := \sum_{x \in \mathcal X} p_x \cdot \Big(\mu_{F_x} + \underbrace{\sqrt{2 \tilde I(p_x)} \cdot \sigma_{F_x}}_{\text{exploration bonus}} \Big),
    \end{equation} with the quasi-surprisal $\tilde I(u) := (\phi(\Phi^{-1}(u))/u)^2 / 2$.
    Then the maximizer of $\mathcal{W}$ among elements of the probability simplex is given by \flite{}, i.e., by $q$ with
    \begin{equation*}
        q_x := \Phi\!\left(\frac{\mu_{F_x} - \kappa^*}{\sigma_{F_x}}\right) \text{ with $\kappa^*$ s.t. } \sum_x q_x = 1.
    \end{equation*}
\end{restatable}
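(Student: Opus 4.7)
The plan is to reduce the variational problem to a standard concave maximization with a single equality constraint, using a convenient rewriting of the entropy bonus. The key observation is that
\[
p_x \sqrt{2\tilde I(p_x)} = p_x \cdot \frac{\phi(\Phi^{-1}(p_x))}{p_x} = \phi(\Phi^{-1}(p_x)),
\]
so the objective can be rewritten as
\[
\mathcal W(p) = \sum_{x \in \mathcal X} p_x \mu_{F_x} + \sum_{x \in \mathcal X} \sigma_{F_x}\, \phi(\Phi^{-1}(p_x)).
\]
This already simplifies things considerably, since the second sum is separable in the $p_x$.

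Next, I would establish strict concavity of $\mathcal W$. Setting $\psi(u) := \phi(\Phi^{-1}(u))$ on $(0,1)$, the chain rule combined with the identity $\phi'(z) = -z\phi(z)$ gives
\[
\psi'(u) = -\Phi^{-1}(u), \qquad \psi''(u) = -\frac{1}{\phi(\Phi^{-1}(u))} < 0.
\]
Hence each term $\sigma_{F_x}\psi(p_x)$ is strictly concave (as $\sigma_{F_x}>0$), and so $\mathcal W$ is strictly concave on the simplex. This guarantees that a maximizer exists and is unique; moreover, because $\psi'(u) \to +\infty$ as $u \to 0^+$, the gradient of $\mathcal W$ points strictly into the simplex along its boundary, so the maximizer lies in the relative interior, where the KKT conditions reduce to pure Lagrange conditions.

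Introducing a multiplier $\lambda$ for the constraint $\sum_x p_x = 1$, the first-order condition at the optimum is
\[
\mu_{F_x} + \sigma_{F_x}\psi'(p_x) = \lambda \quad\Longleftrightarrow\quad \Phi^{-1}(p_x) = \frac{\mu_{F_x} - \lambda}{\sigma_{F_x}},
\]
i.e. $p_x = \Phi((\mu_{F_x}-\lambda)/\sigma_{F_x})$. Substituting this back into the normalization constraint $\sum_x p_x = 1$ identifies $\lambda$ uniquely with the \flite{} threshold $\kappa^*$ characterized in the preceding paragraph, completing the identification $p = q$.

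The only real subtlety is the boundary analysis of $\psi$, but it is routine: one just needs $\psi$ to be $C^2$ on $(0,1)$ with divergent one-sided derivatives at the endpoints, both of which follow from standard facts about $\Phi^{-1}$. No heavy machinery is needed; the crux of the proof is really the algebraic simplification $p\sqrt{2\tilde I(p)} = \phi(\Phi^{-1}(p))$, which turns the variational problem into a transparent strictly-concave program whose Lagrangian optimality condition is exactly the defining equation of \flite{}.
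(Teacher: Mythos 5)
Your proposal is correct and follows essentially the same route as the paper's proof: rewrite the bonus via $p_x\sqrt{2\tilde I(p_x)}=\phi(\Phi^{-1}(p_x))$, establish strict concavity from the (diagonal) second derivative $-\sigma_{F_x}/\phi(\Phi^{-1}(p_x))$, rule out boundary maximizers via the exploding one-sided derivatives, and read off $p_x=\Phi((\mu_{F_x}-\lambda)/\sigma_{F_x})$ from the Lagrange stationarity condition, identifying $\lambda$ with $\kappa^*$ through normalization. The only (immaterial) difference is that you argue concavity termwise via the separable structure rather than writing out the full Hessian.
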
\vspace{-5pt}
The quasi-surprisal $\tilde I(\cdot)$ behaves similarly to the surprisal $-\ln(\cdot)$, a key quantity in information theory~\parencite{cover1999elements}.
In fact, their asymptotics coincide: \begin{align*}
    \tilde I(1) = 0 = - \ln(1) \text{ and } \tilde I(u) \sim - \ln u \text{ as } u \to 0^+.
\end{align*}
The objective from Equation~\eqref{eq:variational_objective_with_CME_as_solution} is maximized for those probability distributions $p$ that are concentrated around points with large mean $\mu_{F_x}$ and points with large exploration bonus.
The uncertainty $\sigma_{F_x}$ about $F_x$ is the standard exploration bonus of UCB algorithms.
In Equation~\eqref{eq:variational_objective_with_CME_as_solution}, $\sigma_{F_x}$ is weighted by the quasi-surprisal, which acts as entropy regularization: it increases the entropy of $p$ by uniformly pushing $p_x$ away from zero.
The variational objective suggests that Thompson sampling~\parencite{thompson1933likelihood, russo2016information, russo2018tutorial, chapelle2011empirical}, i.e., sampling from PoM, achieves exploration through two means:\looseness=-1 \begin{enumerate}
    \item \textbf{Optimism:} by preferring points with large uncertainty $\sigma_{F_x}$ about the reward value $F_x$.
    \item \textbf{Decision uncertainty:} by assigning some probability mass to all $x$, that is, by remaining uncertain about which $x$ is the maximizer.
\end{enumerate}
Interestingly, the recall task from Figure~\ref{fig:quadcopter_recall} is solved by choosing actions with highest PoM.
Contrary to initial intuition, the good performance of \lite{} in the recall task indicates that optimism and decision uncertainty, normally associated with exploration, are also useful for pure exploitation.

\section{LANDSCAPE OF POM ESTIMATION} \label{sec:connections}
Motivated by the intimate relation between PoM estimation in the form of \flite{} and decision-making, we next connect PoM to several methods developed for Bayesian optimization and reinforcement learning.

\paragraph{Probability of improvement.}
\label{sec:connection_to_PI}
\flite{} measures the probability of improvement over the normalizing threshold $\kappa^*$: ${q_x := \Phi((\mu_{F_x} - \kappa^*)/\sigma_{F_x}) = \mathbb P[F_x \geq \kappa^*]}$.
Similarly, the true PoM can be seen as measuring a probability of improvement: $p_x = \mathbb P[F_x \geq F^*]$.
By comparing the two expressions, the normalizing threshold in \flite{} can be understood as a deterministic surrogate for the maximum.
Probability of improvement is widely known as an acquisition function in Bayesian optimization~\parencite{kushner1964new, garnett_bayesoptbook_2023, jones2001taxonomy, vzilinskas1992review}, with the threshold $\kappa^*$ typically set to the best observation.
\looseness=-1

\begin{table*}[t]
    \centering
    \begin{tabular}{lccccc}
    \toprule
        & Synthetic Distributions & 1-dim GP & 2-dim GP (\ref{sec:sampling_from_2d_laplacian_GP}) & DropWave (\ref{sec:drop-wave_during_sampling}) & Quadcopter\\
        \midrule
        \est{}
        & $11.54 \pm 0.20$ & $45.6 \pm 2.7$ & $15.1 \pm 1.2$ & $5.17 \pm 0.64$&  $14.3 \pm 2.0$\\

        \vapor{}
        & $\phantom{0}9.89 \pm 0.11$ & $37.0 \pm 2.0$ & $15.7 \pm 1.0$ & $5.70 \pm 0.72$ &  $17.2 \pm 2.5$\\

        \flite{} (ours) & $\phantom{0}4.65 \pm 0.08 $ & $\mathbf{13.7 \pm 1.0}$ & $ 10.9 \pm 0.7$ & $\mathbf{4.87 \pm 0.60}$ &  $11.1 \pm 1.4$\\

        \alite{} (ours) &  $\mathbf{\phantom{0}3.76 \pm 0.06} $ & $\mathbf{14.1 \pm 1.0} $ & $\mathbf{\phantom{0}7.5 \pm 0.5}$ & $\mathbf{4.32 \pm 0.53}$ &  $\mathbf{\phantom{0}8.7 \pm 0.9}$\\

        \midrule

        \ias{} & $\phantom{0}0.00 \pm 0.00$ & $\phantom{0}6.7 \pm 0.4$ & $\phantom{0}6.6 \pm 0.2$ & $3.85 \pm 0.54$ & $\phantom{0}9.0 \pm 1.0$\\
    \bottomrule
    \end{tabular}
    \caption{Mean and standard error of TV distance (averaged across $|\mathcal X|$ and BO-steps) in percentage \%. \alite{} and \flite{} consistently outperform competing efficient PoM estimators from the literature. The \ia{} is provided as an expensive baseline, since all considered efficient estimators build on it.}
    \label{tab:tv_distance_summary}
\end{table*}

\paragraph{Estimating the maximum reward value.}\label{sec:connection_to_EST}
The EST(-imate) algorithm~\parencite{wang2016optimization} proposes to approximate Gaussian PoM with its lower bound
\begin{equation*}
    \tilde p_x \approx \frac{\mathbb P[\tilde F_x \geq \tilde \kappa]}{1-\mathbb P[\tilde F_x \geq \tilde \kappa]} \prod\nolimits_{z\in\mathcal X}\mathbb P[\tilde F_z \leq \tilde \kappa],
\end{equation*}
where $\tilde \kappa = \mathbb E[\tilde F^*]$ with $\tilde F \sim \mathcal N(\mu_F, \mathrm{diag}(\Sigma_F))$. It then directly uses this lower bound as an acquisition function for Bayesian optimization. With the denominator being usually close to $1$, \est{} corresponds to a globally rescaled \flite{}, but using the expectation of $\tilde F^*$ %
instead of the normalizing threshold~$\kappa^*$ as a surrogate for the maximum. In our experiments, we linearly normalize the PoM predicted by \est{} to sum to $1$, creating a stronger baseline for us to compare against.%
\looseness=-1

\paragraph{UCB + entropy regularization.}\label{sec:connection_to_vapor}
In analogy to our variational formulation of \flite{}, \vapor{}~\parencite{tarbouriech2024probabilistic} proposes to maximize the variational objective
\begin{equation}\label{eq:vapor_variational_objective}
    \mathcal V(p) = \sum\nolimits_{x \in \mathcal X} p_x \cdot \left( \mu_{F_x} + \sqrt{2 \ln (1/p_x}) \cdot \sigma_{F_x} \right)
\end{equation}
on the probability simplex to estimate PoM.
To solve Equation~\eqref{eq:vapor_variational_objective}, they use Frank-Wolfe~\parencite{jaggi2013revisiting, lacoste2015global} with ${k \in \Theta(\epsilon^{-5} |\mathcal X|^4)}$ steps to ensure $\mathcal V(p^*) - \mathcal V(p) \leq \epsilon$ with no bounds on $\|p^*-p\|_\infty$~\parencite{bolte2023iterates}. Instead, we derive a previously unknown near-closed-form solution to \vapor{} whose iterates converge exponentially at a linear rate:\looseness=-1

\begin{restatable}%
[Fast \vapor{}]{prop}{vaporClosedFormSolution}\label{thm:closed_form_solution_vapor}
    The maximizer to Equation~\eqref{eq:vapor_variational_objective} on the probability simplex admits the closed-form expression
    \begin{equation*}
        v_x := v\!\left(\frac{\mu_{F_x} - \nu^*}{\sigma_{F_x}}\right) \text{ with $\nu^*$ s.t. } \sum_x v_x = 1,
    \end{equation*}
    where $v(c) := \exp(-(\sqrt{c^2 + 4} - c)^2 / 8)$.
    
    Moreover, to find $\nu^*$ we can use binary search with $k \in \Theta(\log(\sqrt{\log |\mathcal X|}/\epsilon))$ iterations, ensuring that the $k$-th iterate $v^k$ satisfies $\|v^* - v^k\|_\infty < \epsilon$.
\end{restatable}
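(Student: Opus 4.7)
The plan is to establish the closed form via a Lagrangian calculation, and then analyze the one-dimensional bisection in $\nu$. First, $\mathcal V$ is strictly concave on the interior of the simplex: it splits as $\sum_x p_x \mu_{F_x} + \sum_x \sigma_{F_x} f(p_x)$ with $f(p) := p \sqrt{-2 \ln p}$, and a direct computation gives $f''(p) < 0$ on $(0,1)$. Since $f'(p) \to +\infty$ as $p \to 0^+$ and $f'(p) \to -\infty$ as $p \to 1^-$, the unique maximizer is interior, so KKT with a single multiplier $\nu$ for $\sum_x p_x = 1$ is both necessary and sufficient.

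Setting $\partial \mathcal V / \partial p_x = \nu$ and substituting $t := \sqrt{-2 \ln p_x}$ and $c_x := (\mu_{F_x} - \nu)/\sigma_{F_x}$, the stationarity condition $\mu_{F_x} + \sigma_{F_x}(t - 1/t) = \nu$ collapses to the quadratic $t^2 + c_x t - 1 = 0$, whose unique positive root is $t = (\sqrt{c_x^2 + 4} - c_x)/2$. Back-substituting $p_x = e^{-t^2/2}$ recovers exactly $v_x = v(c_x)$, and $\nu^*$ is pinned down by the primal feasibility constraint $\sum_x v_x = 1$.

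For the bisection bound, I would verify strict monotonicity of $c \mapsto v(c)$ by direct differentiation, obtaining
\[v'(c) \;=\; \frac{t^2 \, e^{-t^2/2}}{\sqrt{c^2+4}}, \qquad t = \tfrac{1}{2}\bigl(\sqrt{c^2+4} - c\bigr) > 0.\]
With $s := t^2/2$ the numerator becomes $2 s e^{-s} \leq 2/e$, and together with $\sqrt{c^2+4} \geq 2$ this yields the uniform bound $|v'(c)| \leq 1/e$. Strict monotonicity and continuity of $\nu \mapsto \sum_x v((\mu_{F_x}-\nu)/\sigma_{F_x})$, with limits $|\mathcal X|$ and $0$, guarantee a unique $\nu^*$. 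To bracket it I would mimic the \flite{} argument, replacing $-\Phi^{-1}(1/|\mathcal X|)$ by $-v^{-1}(1/|\mathcal X|)$; both quantities are $\Theta(\sqrt{\log |\mathcal X|})$ because $v$ has a Gaussian-like left tail ($-\log v(c) \sim c^2/2$ as $c \to -\infty$), so the resulting bracketing window has width $\Theta(\sqrt{\log |\mathcal X|})$.

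Combining the pieces, after $k$ bisection steps the Lipschitz bound gives $\|v^k - v^*\|_\infty \leq (\nu_{up}-\nu_{low})/(e\, \sigma_F^{\min}\, 2^k)$; requiring this to be below $\epsilon$ and inverting yields $k \in \Theta(\log(\sqrt{\log |\mathcal X|}/\epsilon))$, treating $\sigma_F^{\min}$ as constant in $|\mathcal X|$. The one genuinely clever step is spotting the substitution $t = \sqrt{-2 \ln p_x}$ that exposes the KKT equation as a plain quadratic; after that, everything downstream is a short one-variable calculation. The analytic workhorse is the uniform bound $|v'(c)| \leq 1/e$, which is what keeps the iteration count from degrading faster than polylogarithmically in $|\mathcal X|$.
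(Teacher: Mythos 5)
Your proposal is correct and follows essentially the same route as the paper's proof: interiority of the maximizer from the exploding boundary derivatives plus concavity, a Lagrangian stationarity condition reduced to a quadratic in $\sqrt{-2\ln p_x}$ yielding $v(c_x)$, and a Lipschitz-plus-monotonicity argument for bisection over a window of width $\Theta(\sqrt{\log|\mathcal X|})$ bracketed via $v^{-1}(1/|\mathcal X|)$. Your explicit bound $|v'(c)|\le 1/e$ in fact sharpens (and fixes a sign typo in) the paper's stated bound $\tfrac{d}{dc}v(c)\in[0,0.4)$.
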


Note the similarity to \flite{}: we have only replaced $\Phi$ by the sigmoidal $v$. As such, Algorithm~\ref{alg:CME} is easily adapted to obtain a novel almost-linear-time implementation of \vapor{}, which we call \fvapor{}.\looseness=-1

\section{EXPERIMENTS}\label{sec:results}

\begin{figure*}[t]
    \centering
    \incplt[0.8\linewidth]{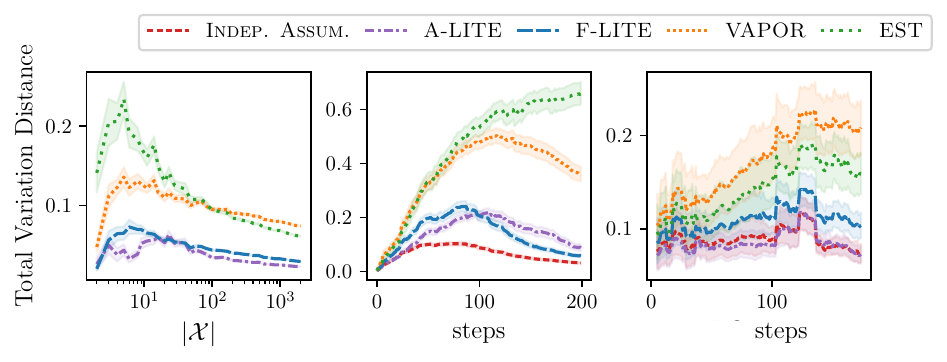}
    \vspace{-10pt}
    \begin{subfigure}{0.23\linewidth}
        \subcaption{Synthetic posteriors}
        \label{fig:estimators_tv_distance_to_independent_ground_truth}
    \end{subfigure}
    \begin{subfigure}{0.23\linewidth}
        \subcaption{$f_{true} \sim \mathcal{GP}$ [1-dim]}
        \label{fig:tv_distance_for_BO_of_GP_sample}
    \end{subfigure}
    \begin{subfigure}{0.23\linewidth}
        \subcaption{$f_{true}$ from Quadcopter}
        \label{fig:tv_distance_quadcopter}
    \end{subfigure}
    \vspace{-5pt}
    \caption{\lite{} universally outperforms \vapor{} and \est{} in terms of TV-distance to the ground-truth, which is estimated using the \ia{} (Figure~\ref{fig:estimators_tv_distance_to_independent_ground_truth}) and \ts{} (Figures~\ref{fig:tv_distance_for_BO_of_GP_sample},~\ref{fig:tv_distance_quadcopter}).} %
\end{figure*}

Next, we compare the PoMs estimated by \alite{} and \flite{} against the efficient baselines \est{} and \vapor{}.
We measure the total variation distance to the ``ground truth'' PoM obtained via expensive \ts{} as well as the root mean squared relative error on the down-stream task of entropy estimation. The \ia{} is computing an asymptotically exact estimate under Assumption~\ref{ass:independence}, which we report as a (up to significance) error lower bound for independence-based PoM estimators. The code is available at \url{https://github.com/lasgroup/LITE}.

\subsection{PoM Estimation}\label{sec:fidelity_of_PoM_estimation}
To compare PoM estimators in various settings (for various $(\mu_F, \Sigma_F)$), we rely on synthetic distributions as well as posteriors produced during Bayesian optimization. Table~\ref{tab:tv_distance_summary} provides a summary of our results.

\paragraph{Synthetic distributions.}
We obtain a set of synthetic $(\mu_F, \sigma_F)$ by independently sampling ${\mu_{F_x} \sim \mathcal U([0, 5])}$ and $\sigma_{F_x} \sim \mathcal U([1/2, 10])$ for all $x$. We employ Proposition~\ref{cor:computing_the_independence_estimator_for_Gaussian_processes_simplified} for the ground-truth PoM, i.e., estimation under the \ia{}. Figure~\ref{fig:estimators_tv_distance_to_independent_ground_truth} shows how \alite{} and \flite{} significantly outperform \vapor{} and \est{}. We remark that estimation of PoM seems to become easier on large domains. We suspect that more repetition in $\mu_{F}$ and $\sigma_F$ leads to a more uniform PoM that is easier to estimate. Similar results on alternative distributions over $\mu_F, \sigma_F$ are provided in Appendix~\ref{sec:fidelity_under_synthetic_posteriors_alternative_experiments}.

\paragraph{Samples from a Gaussian process.}\label{sec:main_text_fidelity_during_calibrated_BO}
Figure~\ref{fig:tv_distance_for_BO_of_GP_sample} shows the total variation distance between a ground-truth estimate using \ts{} and the PoM of the various estimators. The posteriors are derived from calibrated Bayesian optimization with $f_{true}$ sampled from a squared exponential prior on a one-dimensional domain. \alite{} and \flite{} outperform \vapor{} and \est{} by a large margin. \flite{} becomes most accurate at late stages of optimization, once $F^*$ becomes quite concentrated. The details of the experimental setup are in Appendix~\ref{sec:describing_fig_tv_distance_for_BO_of_GP_sample}. %

\paragraph{DropWave function.}
In practice, Bayesian optimization is run on a single test function and calibrated through marginal likelihood maximization of the prior parameters. Figure~\ref{fig:accuracy_over_runtime} demonstrates the accuracy/runtime operating points according to the various considered PoM estimators under different choices of the convergence parameter $\epsilon = 1/(\alpha \cdot |\mathcal X|)$. Here, $f_{true}$ is set to the drop-wave function, notorious for its difficulty in Bayesian optimization, quantized to $625$ points. Given sufficient compute, consistent estimation through \ts{} is recommended. However, as shown in Figure~\ref{fig:runtime_over_domain_size}, \ts{} scales worse than the \ia{} (and \lite{}) to large domains. Consequently, as the domain size $|\mathcal X|$ increases, the point at which \ts{} starts to outperform them is shifted to the right into a computationally infeasible region. Experimental details can be found in Appendix~\ref{sec:drop-wave_during_accuracy_runtime_details}. For additional experiments on drop-wave (which feature in Table~\ref{tab:tv_distance_summary}), see Appendix~\ref{sec:drop-wave_during_sampling}.
\begin{figure}[h]
    \incplt[0.9\linewidth]{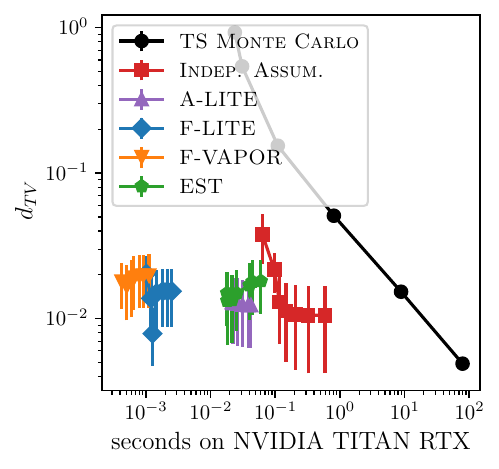}

    \vspace{-15pt}
    \caption{\ts{} is consistent, but only becomes competitive with high computational cost. Instead, the biased but efficient PoM estimators quickly converge to competitive accuracy at low computational cost. Here, $F$ is distributed according to the posteriors of Bayesian optimization with $f_{true}$ set to drop-wave.}
    \label{fig:accuracy_over_runtime}
    \vspace{-10pt}
\end{figure}

\paragraph{Quadcopter simulation.}\label{sec:main_text_quadcopter}
Finally, we consider the TV distance during Bayesian optimization of the parameters of a quadcopter controller~\parencite{hübotter2024transductiveactivelearningtheory}, see Figure~\ref{fig:tv_distance_quadcopter}. The ground-truth PoM is estimated using \ts{}. $f_{true}$, a function of the parameters, describes the degree to which a controller manages to stabilize a quadcopter in a simulated environment under randomly sampled perturbations. The controller presents eight degrees of freedom, 4 of which are solved using a heuristic, resulting in Bayesian optimization in four-dimensional space. To ensure tractable computation of a ground-truth PoM, we uniformly at random subsample the domain to $400$ discrete points. Details are in Appendix~\ref{sec:quadcopter_details}. As in the other experiments, estimation under the \ia{} is most accurate, swiftly followed by \alite{} and \flite{}. \vapor{} and \est{} are less performant in comparison.

\begin{table*}[t]
    \centering
    \begin{tabular}{lcccc}
    \toprule
        & 1-dim GP & 2-dim GP (\ref{sec:sampling_from_2d_laplacian_GP}) & DropWave (\ref{sec:drop-wave_during_sampling}) & Quadcopter\\
        \midrule
        \est{}
        & $215.5\ (195.4, 233.9)$ & $36.3\ (27.5, 43.3)$ & $\mathbf{\ 5.4\ (4.9, 5.8)}$ & $\mathbf{\ 3.4\ (2.9, 3.9)}$\\

        \vapor{}
        & $169.4\ (158.4, 179.7)$ & $33.7\ (26.9, 39.4)$ & $\ \, 8.2\ (5.9, 10.0)$ & $3.8\ (3.4, 4.2)$\\

        \flite{} (ours) & $\mathbf{35.9\ (34.7,  37.0)}\, $ & $\ \mathbf{12.4\ (10.8, 13.9)}$ & $\mathbf{\ 5.3\ (4.8, 5.7)}$ & $\mathbf{\ 2.6\ (1.9, 3.2)}$\\

        \alite{} (ours) &  $44.9\ (42.8, 47.0)\ \, $ & $\mathbf{11.0\ (9.5, 12.4)}\, $ & $\mathbf{\ 4.7\ (4.2, 5.1)}$ & $\mathbf{\ 3.0\ (2.4, 3.5)}$\\

        \midrule

        \ias{} & $18.4\ (17.8, 18.9)\ \,$ & $4.9\ (4.4, 5.4)\ \,$ & $4.6\ (4.2, 5.1)$ & $2.4\ (1.6, 3.0)$ \\
    \bottomrule
    \end{tabular}%
    \caption{Empirical root mean squared relative error of entropy in percentage \% (along with confidence bands). \alite{} and \flite{} consistently outperform competing efficient estimators of PoM. The confidence bands correspond to the square root of mean $\pm$ standard error of the squared relative error (averaged across BO-steps).}
    \label{tab:entropy_rmsre_summary}
\end{table*}

\subsection{PoM Entropy Estimation}\label{sec:entropy_estimation}
Information theory~\parencite{gray2011entropy} proposes to measure uncertainty with the Shannon entropy $H[X^*] := \sum\nolimits_x p_x \ln(1/p_x)$. Unfortunately, there is no known unbiased Monte Carlo estimator of entropy without access to $p_x$~\parencite{paninski2003estimation}. Furthermore, the standard procedure of using \ts{} provably underestimates the entropy unless many samples are used: let $q_x :=\sum\nolimits_{i=1}^n \mathds{1}_{x \in \arg\max f_i} / n$ for i.i.d. $f_i \sim p(f|\mathcal D)$. Then either $q_x \geq 1/n$ or $q_x = 0$, and hence it holds that

\begin{equation*}
    H[q_x] = \sum\nolimits_x q_x \ln(1/q_x) \leq \ln(n).
\end{equation*}

Only if $n$ exceeds $|\mathcal X|$ can entropy estimation using \ts{} span the full range of valid values $[0, \ln(|\mathcal X|)]$. As such, a runtime that scales in $\Theta(|\mathcal X|^3)$ would be required, which becomes prohibitive for large domains. In contrast, the exponential convergence of \lite{} allows efficient entropy estimation in $\Theta(|\mathcal X| \log(|\mathcal X|))$.

In our experiments, we report on the root mean squared relative error of PoM entropy estimation across multiple seeds of optimization, defined as
\begin{equation*}
    \sqrt{\frac{1}{m} \sum_{i=1}^m \left(\frac{H[E \mid \mathcal D^i] - H[X^* \mid \mathcal D^i]}{H[X^* \mid \mathcal D^i]}\right)^2}.
\end{equation*}
The ground-truth $H[X^* \mid \mathcal D^i]$ is estimated based on expensive \ts{}, whereas $H[E \mid \mathcal D^i]$ denotes the entropy estimation according to different PoM estimators. The relative error is a natural performance criterion, ensuring normalization across different stages of optimization and across various ground-truths $f_{true}$.

As Table~\ref{tab:entropy_rmsre_summary} %
demonstrates, the entropy of $X^*$ can be faithfully estimated based on the \ia{}.
Whereas the two variants of \lite{} remain competitive with the \ia{}, \vapor{} and \est{} are often much worse in their estimation of entropy. Here, the experimental setups correspond to Section~\ref{sec:fidelity_of_PoM_estimation}. In particular, the 1-dim GP experiment is described in Appendix~\ref{sec:describing_fig_tv_distance_for_BO_of_GP_sample}, the 2-dim GP experiment in Appendix~\ref{sec:sampling_from_2d_laplacian_GP}, DropWave in Appendix~\ref{sec:drop-wave_during_sampling}, and Quadrotor in Appendix~\ref{sec:quadcopter_details}.

\begin{figure}[ht]
    \centering
    \incplt[0.8\linewidth]{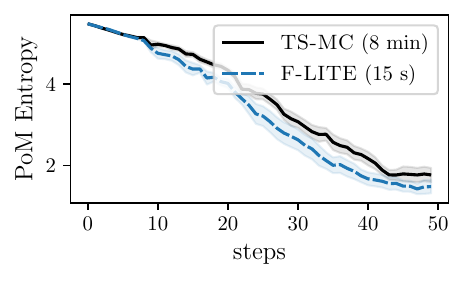}
    \vspace{-10pt}
    \caption{
    Entropy Search with \lite{} admits better computational and statistical efficiency than with \ts{}. We describe the setup in Appendix~\ref{sec:describing_fig_entropy_search}.}
    \label{fig:entropy_search}
\end{figure}

\subsection{Applications of PoM Entropy Estimation}

Entropy Search~\parencite{hennig2012entropy} is a widely used strategy in Bayesian optimization, which queries the reward at the point $x \in \mathcal X$ promising (in expectation) the largest reduction in entropy of Gaussian PoM. In Figure~\ref{fig:entropy_search}, we run calibrated Entropy Search on a $1$-dimensional Gaussian process with squared exponential kernel, discretized to a domain of size ${|\mathcal X| = 250}$. Simply replacing the standard PoM estimator (\ts{}) with \lite{} results in significantly shorter runtimes and better optimization trajectories, already for moderately large domains $\mathcal X$. This indicates that \lite{} can be used to markedly improve the scalability of Entropy Search.\looseness=-1

Finally, through its almost-linear time and memory complexity, the estimation of PoM entropy with \lite{} can be used to better understand the state of Bayesian optimization in large-scale settings where previous approaches for PoM entropy estimation would become intractable. %
To capture such a large-scale setting, we consider an objective $f_{true}$ set to a hyperplane in $1`000$ dimensions sampled to a finite domain with $|\mathcal X| = 10`000$ points.
On an NVIDIA A100 GPU, compared to the \ia{} and thus also \ts{}, \lite{} \emph{reduces computation time from $21$ days to $30$ seconds}.
We describe details in Appendix~\ref{sec:describing_fig_large_scale_exp}.\looseness=-1

\section{FUTURE WORK}\label{sec:future_work}
\paragraph{Generalization of \lite{}.} 
The developed methodology can be extended to distributions other than Gaussians. In fact, the \ia{} has a generalization to arbitrary distributions in the form of Proposition~\ref{thm:computing_the_independence_estimator} in Appendix~\ref{sec:proofs}. Moreover, the variational approximation of \lite{}, which allows analytical integration, can be extended to any Lévy alpha stable distribution: let $\mathbb P[F_x \leq f] = G((f-\mu_{F_x})/\sigma_{F_x})$ for a stable $G$, then approximating $g^x(f)$ with $G((f-m_x)/s_x)$ results in an analytical expression for PoM. Together, this indicates that \lite{} can be generalized to a much larger class of distributions than just Gaussians. In this work, we emphasize Gaussians due to their ubiquity across many applications domains and leave a more general analysis to future work.

\paragraph{Learning heteroscedastic reward models.}\hspace{-2pt}Given a random reward vector over actions, the data likelihood of (reward-maximizing) experts picking any one is precisely equal to PoM~\parencite{luce2005individual, christiano2017deep, bradley1952rank, thurstone1927method}. Through its closed-form derivatives stated in Proposition~\ref{sec:differentiating_through_flite}, \lite{} could allow efficient end-to-end learning of a (parametrized) reward model that simultaneously indicates the expected reward of an action, as well as its associated uncertainty. In contrast to Bradley-Terry and Thurstone, \lite{} naturally extends to heteroscedastic rewards, which could prove essential for faithful modeling of epistemic uncertainty.

\section{CONCLUSION}
In \lite{}, we developed estimators of Gaussian probability of maximality (PoM) that operate in near-linear efficiency with respect to the size of the Gaussian vector considered. In contrast, previous methods scale polynomially and thus quickly become computationally infeasible for moderately-sized vectors. Our empirical observations in multiple settings demonstrate that \lite{}, in comparison to \est{} and \vapor{}, delivers more accurate PoM estimates and results in better PoM entropy estimation. Theoretically, we revealed connections between \flite{} and the Bayesian optimization literature, spanning PI, \est{}, entropy-regularized UCB, and VAPOR. Based on a variational formulation of \lite{}, we uncovered how Thompson sampling achieves exploration by relying simultaneously on optimism and decision uncertainty, and how these two principles, unexpectedly, guide optimal behavior in a pure exploitation task.

Finally, we demonstrated that the achieved efficiency gains translate to better performance at downstream objectives such as recall-optimal control of bandits and Entropy Search. We envision that the scalability improvements achieved in this work inspire further development of algorithms that leverage the now-tractable notion of Gaussian PoM to tackle challenges in domains such as high-dimensional Bayesian optimization and reinforcement learning.

\section*{Acknowledgements}

We thank the anonymous reviewers for their valuable feedback on the paper.
This project was supported in part by the European Research Council~(ERC) under the European Union's Horizon 2020 research and Innovation Program Grant agreement no.~815943, and the Swiss National Science Foundation under NCCR Automation, grant agreement~51NF40~180545. Nicolas Menet was supported by the ETH Excellence Scholarship \& Opportunity Programme and Parnian Kassraie was supported by a Google Ph.D. Fellowship.


\bibliography{references}

\newpage\section*{Checklist}



\begin{enumerate}

\item For all models and algorithms presented, check if you include:
\begin{enumerate}
  \item A clear description of the mathematical setting, assumptions, algorithm, and/or model. [Yes]
  \item An analysis of the properties and complexity (time, space, sample size) of any algorithm. [Yes]
  \item (Optional) Anonymized source code, with specification of all dependencies, including external libraries. [Yes]
\end{enumerate}

\item For any theoretical claim, check if you include:
\begin{enumerate}
  \item Statements of the full set of assumptions of all theoretical results. [Yes]
  \item Complete proofs of all theoretical results. [Yes]
  \item Clear explanations of any assumptions. [Yes]     
\end{enumerate}

\item For all figures and tables that present empirical results, check if you include:
\begin{enumerate}
  \item The code, data, and instructions needed to reproduce the main experimental results (either in the supplemental material or as a URL). [Yes]
  \item All the training details (e.g., data splits, hyperparameters, how they were chosen). [Yes]
        \item A clear definition of the specific measure or statistics and error bars (e.g., with respect to the random seed after running experiments multiple times). [Yes]
        \item A description of the computing infrastructure used. (e.g., type of GPUs, internal cluster, or cloud provider). [Yes]
\end{enumerate}

\item If you are using existing assets (e.g., code, data, models) or curating/releasing new assets, check if you include:
\begin{enumerate}
  \item Citations of the creator If your work uses existing assets. [Not Applicable]
  \item The license information of the assets, if applicable. [Not Applicable]
  \item New assets either in the supplemental material or as a URL, if applicable. [Not Applicable]
  \item Information about consent from data providers/curators. [Not Applicable]
  \item Discussion of sensible content if applicable, e.g., personally identifiable information or offensive content. [Not Applicable]
\end{enumerate}

\item If you used crowdsourcing or conducted research with human subjects, check if you include:
\begin{enumerate}
  \item The full text of instructions given to participants and screenshots. [Not Applicable]
  \item Descriptions of potential participant risks, with links to Institutional Review Board (IRB) approvals if applicable. [Not Applicable]
  \item The estimated hourly wage paid to participants and the total amount spent on participant compensation. [Not Applicable]
\end{enumerate}

\end{enumerate}

\onecolumn
\appendix
\renewcommand*\contentsname{Contents of Appendix}
 \addtocontents{toc}{\protect\setcounter{tocdepth}{1}}
\aistatstitle{LITE: Efficiently Estimating Gaussian Probability of Maximality \\
Supplementary Materials}

\tableofcontents

\vspace{60pt}

\section{THOMPSON SAMPLING MONTE CARLO}\label{sec:TSE} %
Thompson sampling (TS)~\parencite{thompson1933likelihood, russo2016information, russo2018tutorial, chapelle2011empirical} is a strategy for Bayesian optimisation that naturally incorporates an \textit{exploration-exploitation trade-off}~\parencite{auer2002using}. 
By directly sampling from the posterior probability of maximality $\mathbb P[x \in X^* | \mathcal D]$, Thompson sampling effectively incorporates the knowledge of $F|\mathcal D$ that is relevant for Bayesian optimization: it efficiently explores when the position of the maximizer $X^*$ is unknown given the data $\mathcal D$, otherwise it exploits.

As a strategy for Bayesian optimization, Thompson sampling uses the fact that sampling from probability of maximality is much easier than computing it. Indeed, since $X^*$ is a function of $F$, to produce samples from $X^*$ it suffices to take the argmax of samples from $F$. Since by assumption the domain \(\mathcal X\) is finite, we may represent $F$ as 
\begin{equation}\label{eq:exhaustive_TS}
     F \overset{d}{=} \mu_{F} + \mathcal L \epsilon \sim \mathcal N(\mu_{F}, \Sigma_{F}),
\end{equation}
where $\mathcal L$ is the Cholesky decomposition of $\Sigma_{F}$ and \(\epsilon \sim \mathcal N(0, I_{|\mathcal X| \times |\mathcal X|})\). Exhaustive Thompson sampling explicitly computes $\mathcal L$ and uses Equation~\eqref{eq:exhaustive_TS} to produce a sample $f \sim p(F)$, which is subsequently processed into a sample from probability of maximality through $x^* = \arg\max\nolimits_{x\in \mathcal X} f(x) \sim \mathbb P[x \in X^*]$. As a result, it costs \(\Theta(|\mathcal X|^3 + n |\mathcal X|^2)\) to draw $n$ independent Thompson samples. In contrast, as we will see shortly, using Thompson sampling to numerically approximate $\mathbb P[x \in X^*]$ costs $\Theta(|\mathcal X|^4)$ due to requiring $n \in \Theta(|\mathcal X|^2)$ independent samples. Note that this supralinear scaling persists, even under more efficient methods of drawing Thompson samples such as parametrising $F$ via Random Fourier Features~\parencite{rahimi2007random}. 

\subsection{From Thompson Sampling to Thompson Sampling Monte Carlo}
\ts{} is the standard method for computing Gaussian probability of maximality~\parencite{hennig2012entropy}, since it is both simple and delivers unbiased and consistent estimates of the \textit{probability of maximality}. As such, it represents the ground-truth against which all other estimators are empirically compared. Given access to $n$ Thompson samples, one uses histogram binning to estimate PoM with each $x \in \mathcal X$ having its separate bin. More precisely, \textit{probability of maximality} is estimated through
\begin{equation*}
    \mathbb P[x \in X^*] = \mathbb E[\mathds{1}_{x \in X^*}] \approx \frac{1}{n} \sum\nolimits_{i=1}^n \mathds 1_{x \in x_i^*} \quad\text{with samples } x_i^* \sim p(x^*).
\end{equation*}
Each Thompson sample of \(X^*\) provides simultaneously a sample of \(\mathds 1_{x \in X^*}\) for all \(x \in \mathcal X\), amortising the cost of computation. As is customary for Monte Carlo based approaches, the accuracy is in $\Theta(n^{-1/2})$\footnote{This is a direct consequence of the central limit theorem and verified empirically in Section~\ref{sec:high_Thompson_sample_requirement_of_PoM}.}. Indeed, given independent samples \(X_i^* \sim p(x^*)\), Hoeffding's inequality guarantees that
\begin{equation*}
    \mathbb P[|\frac{1}{n} \sum\nolimits_{i=1}^n \mathds 1_{x \in X_i^*} - \mathbb P[x \in X^*]| \geq \epsilon] \leq 2 \exp(- 2 n \epsilon^2).
\end{equation*}
Hence, for \(n \geq \ln(2/\delta) / (2\epsilon^2) \) the probability that \ts{} deviates more than \(\epsilon\) from the ground truth at any fixed $x \in \mathcal X$ is at most $\delta$. Figure~\ref{fig:least_number_of_Thompson_samples_required} shows the estimates of \ts{} and indicates the least required samples. The number of samples $n$ must scale in $\Theta(|\mathcal X|^2)$ to reach an acceptable relative accuracy.

\begin{algorithm}
\caption{PoM estimation with \textsc{exhaustive} \ts{}}\label{alg:e-TSE}
\begin{algorithmic}
\Require $\mu_{F}, \Sigma_{F}, \epsilon, \delta$
\State $U,D \gets eig(\Sigma_{F})$\Comment{C: $\Theta(|\mathcal X|^3)$, M: $\Theta( |\mathcal X|^2)$}
\State $\Sigma_{F}^{1/2} \gets U D^{1/2}$
\State $counts \gets (0)_{k=1}^{|\mathcal X|}$
\State $n \gets \lceil \frac{\ln(2/\delta)}{2\epsilon^2} \rceil$
\For{$i=1,\ldots,n$}
    \State $f \gets \mu_{F} + \Sigma_{F}^{1/2} \cdot \varepsilon$ for \(\varepsilon \sim \mathcal N(0,I_{|\mathcal X| \times |\mathcal X|})\) \Comment{C: $\Theta(|\mathcal X|^2)$, M: \(\Theta(|\mathcal X|)\)}
    \State $idx \gets \arg\max_{x \in \mathcal X} f_x$
    \State $counts_{idx} \gets counts_{idx} + 1$
\EndFor
\State $p_x \gets counts/n$\\
\Return $(p_x)_{x \in \mathcal X}$
\end{algorithmic}
\end{algorithm}

\subsection{The High Variance of Thompson Sampling Monte Carlo}\label{sec:high_Thompson_sample_requirement_of_PoM}
Estimating probability of maximality (PoM) using \ts{} requires many samples to bring down the variance. While the central limit theorem already dictates that the error scale in $\Theta(n^{-1/2})$, Figure~\ref{fig:least_number_of_Thompson_samples_required} demonstrates empirically that going above $|\mathcal X|^2$ samples is indeed required for a smooth PoM estimate of high fidelity.%

\begin{figure}[ht]
    \centering
    \begin{subfigure}[b]{0.256\textwidth}
        \centering
        \includegraphics[width=\textwidth]{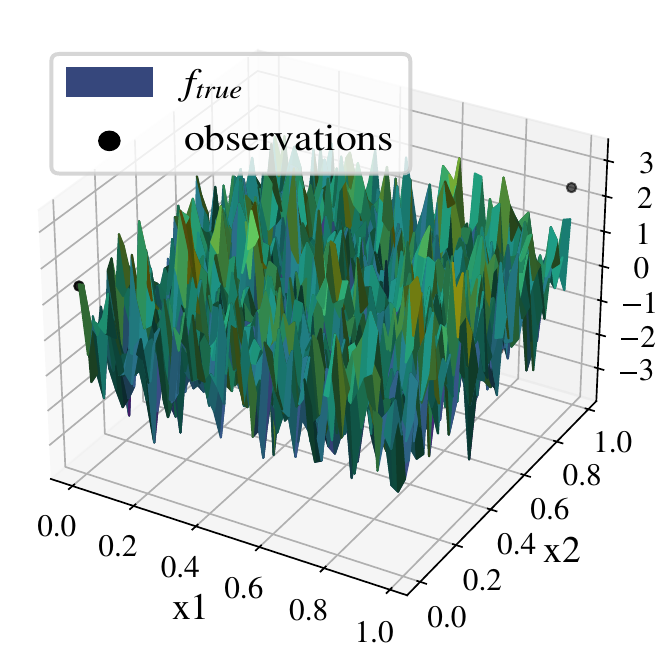}
        \caption{$f_{true}$ \& observations}
    \end{subfigure}
    \begin{subfigure}[b]{0.256\textwidth}
        \centering
        \includegraphics[width=\textwidth]{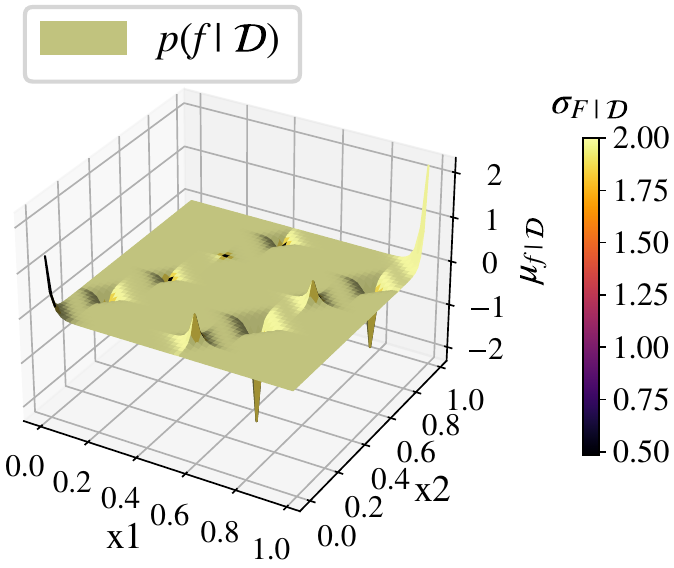}
        \caption{posterior belief}
    \end{subfigure}
    \begin{subfigure}[b]{0.256\textwidth}
        \centering
        \includegraphics[width=\textwidth]{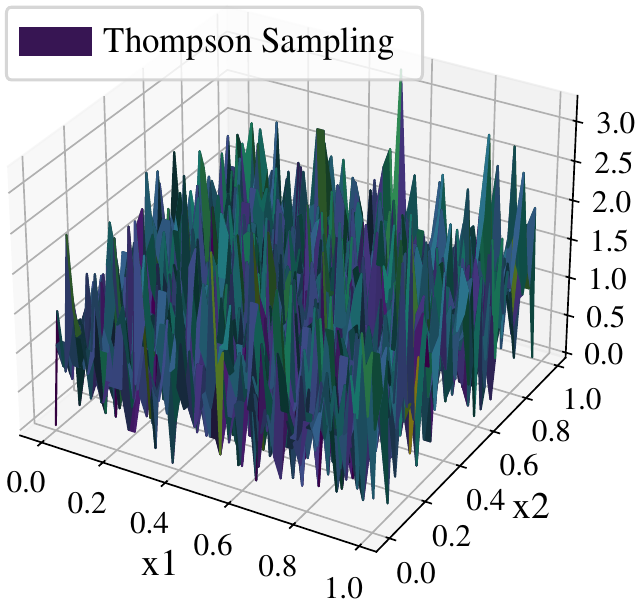}
        \caption{$0.04^2 \cdot |\mathcal X|^2$ samples}
    \end{subfigure}

    \begin{subfigure}[b]{0.256\textwidth}
        \centering
        \includegraphics[width=\textwidth]{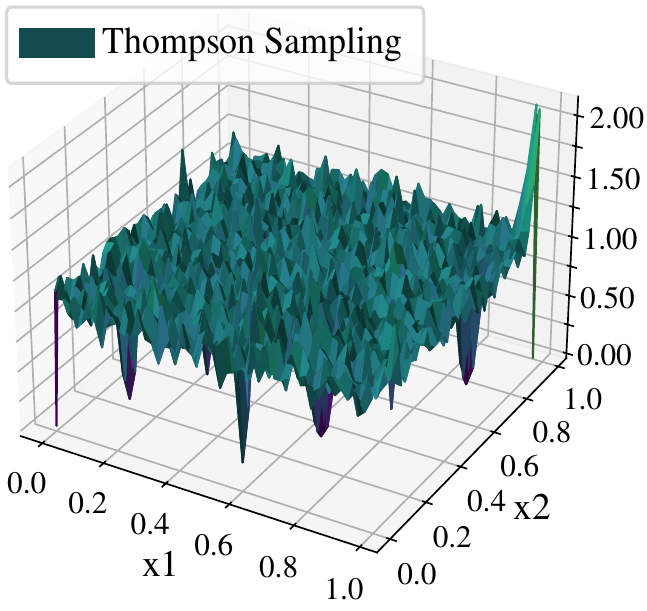}
        \caption{$0.2^2 \cdot |\mathcal X|^2$ samples}
    \end{subfigure}
    \begin{subfigure}[b]{0.264\textwidth}
        \centering
        \includegraphics[width=\textwidth]{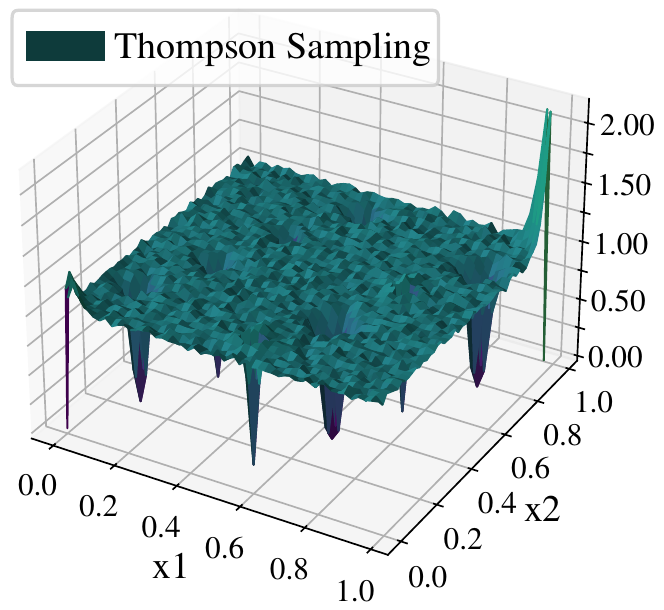}
        \caption{$1.0^2 \cdot |\mathcal X|^2$ samples}
    \end{subfigure}
    \begin{subfigure}[b]{0.256\textwidth}
        \centering
        \includegraphics[width=\textwidth]{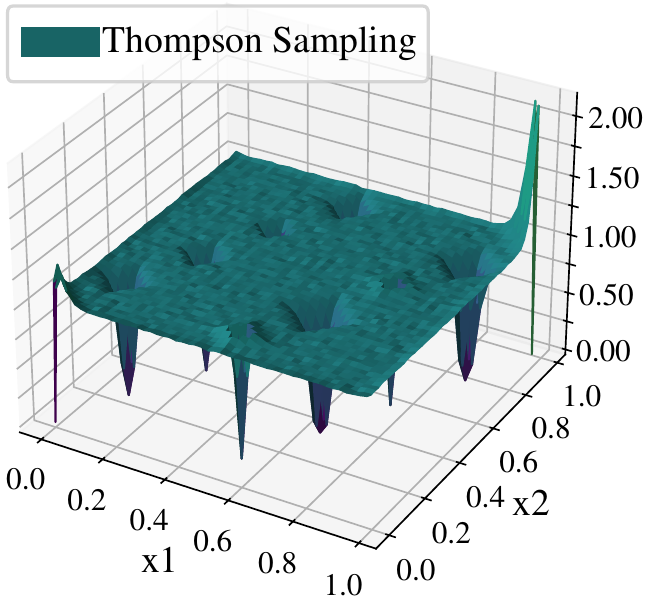}
        \caption{$5.0^2 \cdot |\mathcal X|^2$ samples}
    \end{subfigure}
    \caption{Demonstrating that $n \in \Theta(|\mathcal X|^2)$ samples are required (equivalently that $\epsilon \in \Theta(1/|\mathcal X|)$ is needed).}
    \label{fig:least_number_of_Thompson_samples_required}
\end{figure}

Here, the domain is a $50 \times 50$ grid resulting in $|\mathcal X| = 2'500$. $f_{true}$ is sampled from a centered Gaussian process with exponential kernel (length scale $0.02$, amplitude $1.0$). The prior belief over $f_{true}$ is a centered Gaussian process with exponential kernel (length scale $0.02$, amplitude $2.0$). $f_{true}$ is observed at $10$ regularly selected locations with homoscedastic additive centered Gaussian noise (\(\sigma_{noise} = 0.5\)). We vary \(1/(\epsilon\, \cdot |\mathcal X|)=:\alpha \in \{0.02, 0.4, 1.0, 5.0\}\) to observe the fidelity of \textsc{exhaustive} \ts{}. Finally, to mimic a probability density function, we divide the estimated \textit{probability of maximality} by $1/50^2$.

\newpage
\section{BIAS INTRODUCED BY NEGLECTING DEPENDENCY STRUCTURE}\label{sec:ndependence_assumption}

Let us develop some intuition on the estimation bias introduced by falsely assuming uncorrelated entries in the Gaussian reward vector (Assumption~\ref{ass:independence}). To that end, we consider some examples of discretized Gaussian processes that violate the independence assumption. 
\begin{figure}[ht]
    \centering
    \begin{subfigure}{0.49\linewidth}
        \centering
        \includegraphics[width=\linewidth]{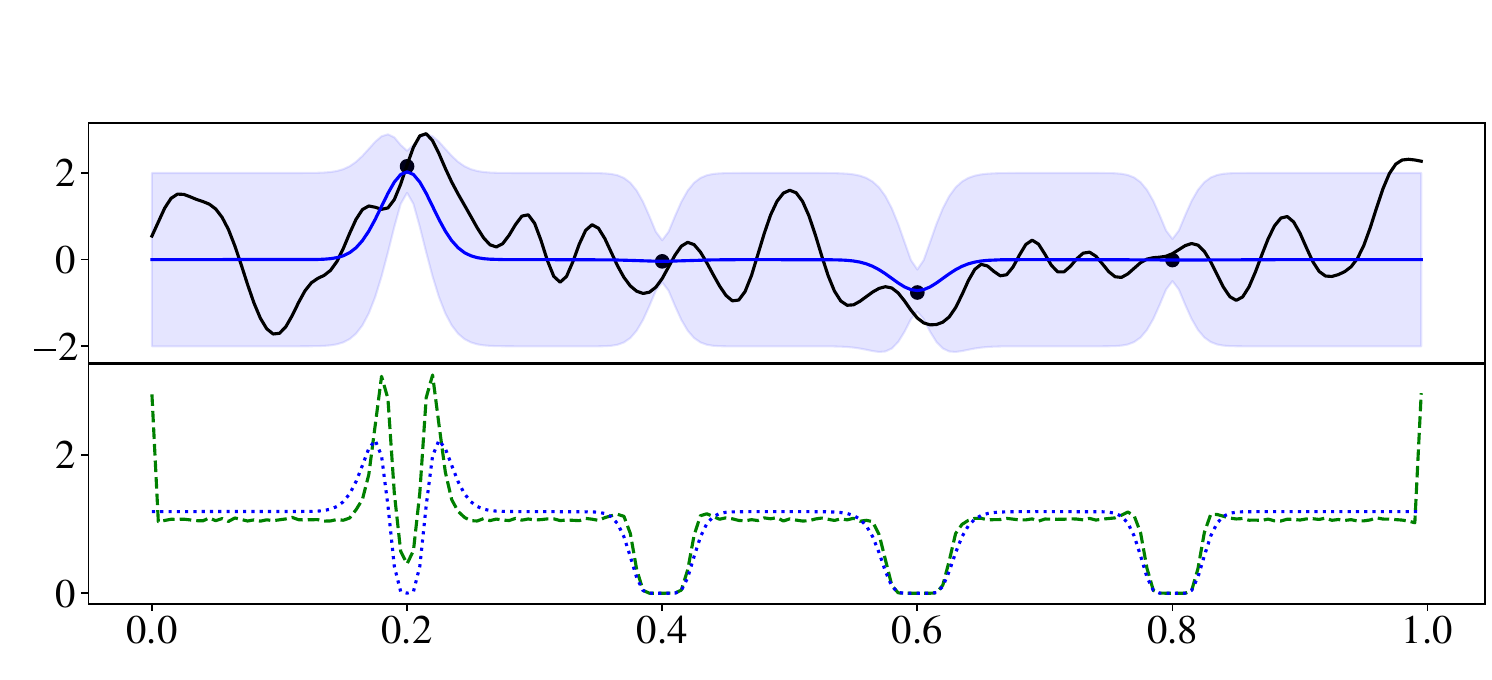}
        \caption{\(|\mathcal D| = 4\)}
    \end{subfigure}
    \begin{subfigure}{.49\linewidth}
        \centering
        \includegraphics[width=\linewidth]{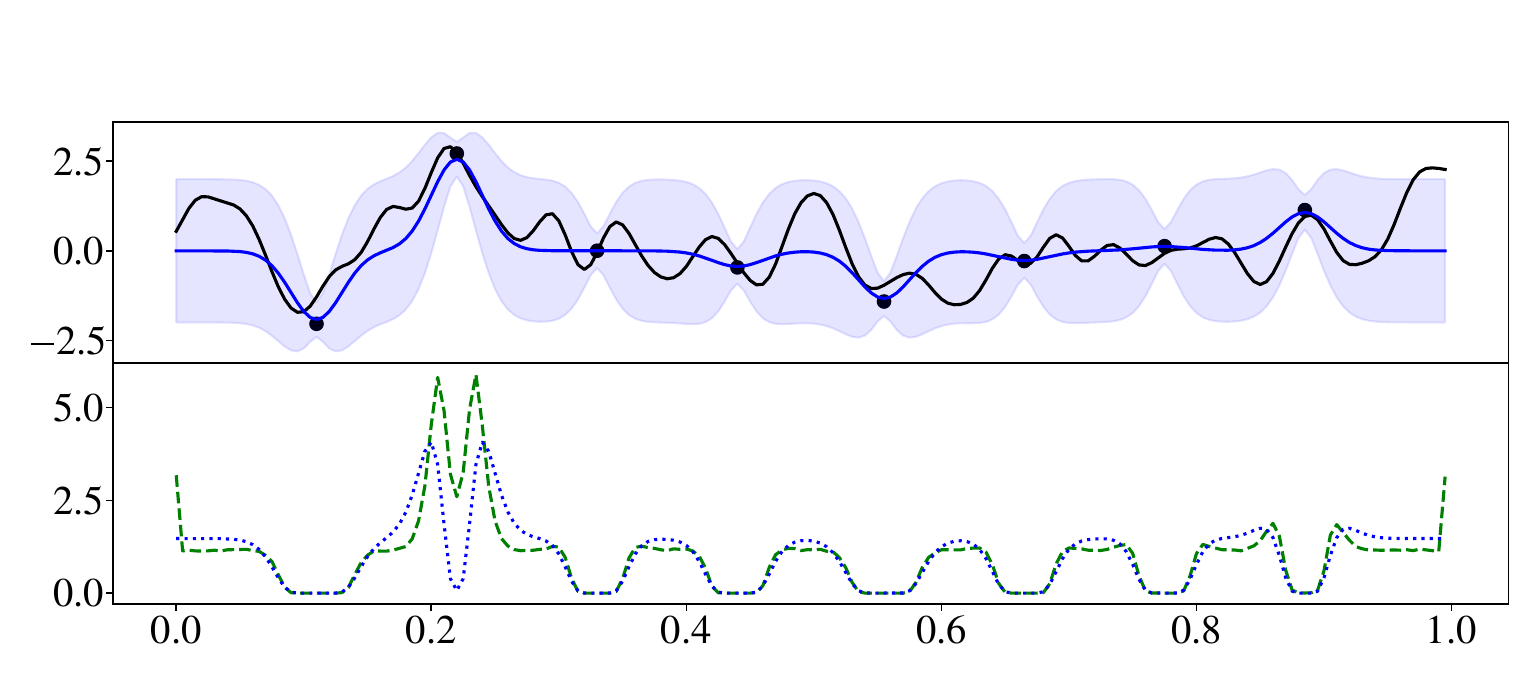}
        \caption{\(|\mathcal D| = 8\)}
        \label{fig:ie_effect_of_uniformity_of_posterior_b}
    \end{subfigure}
    \begin{subfigure}{0.495\linewidth}
        \centering
        \includegraphics[width=\linewidth]{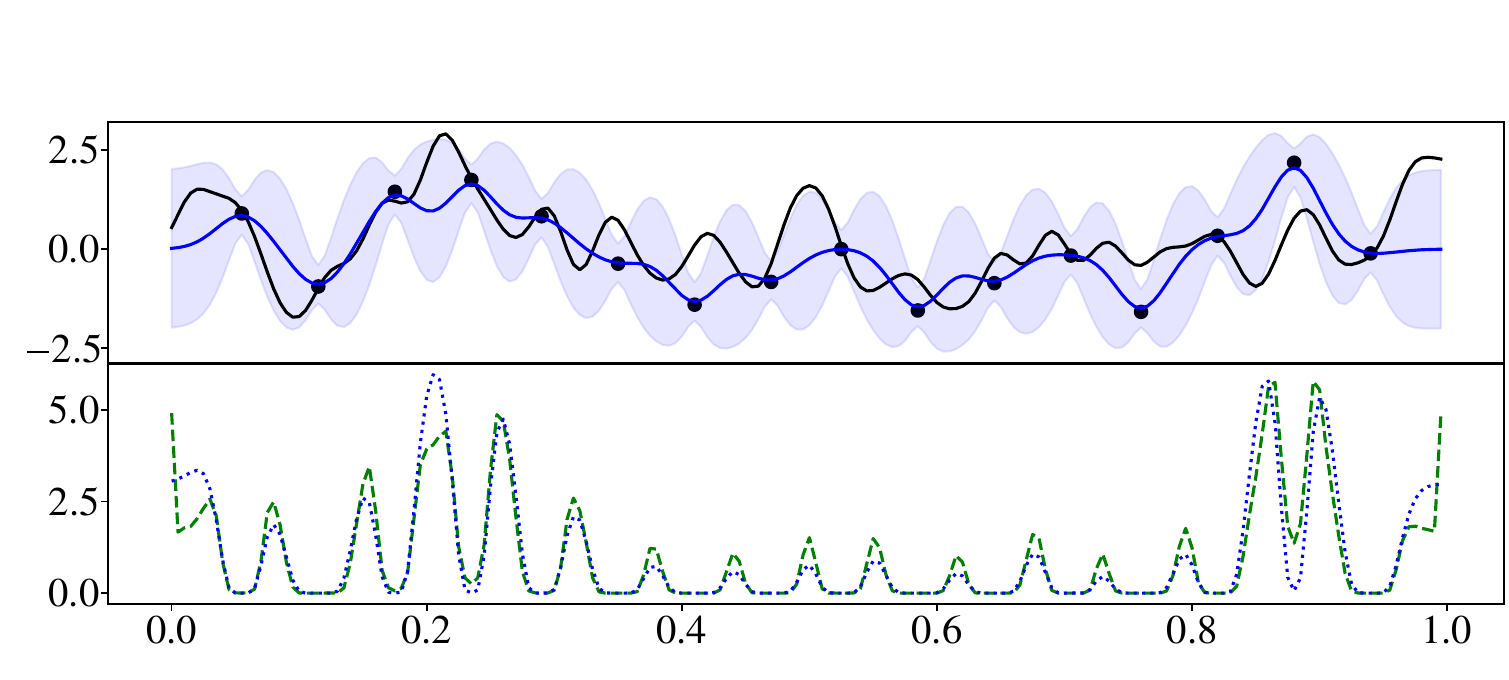}
        \caption{\(|\mathcal D| = 16\)}
    \end{subfigure}
    \begin{subfigure}{.495\linewidth}
        \centering
        \includegraphics[width=\linewidth]{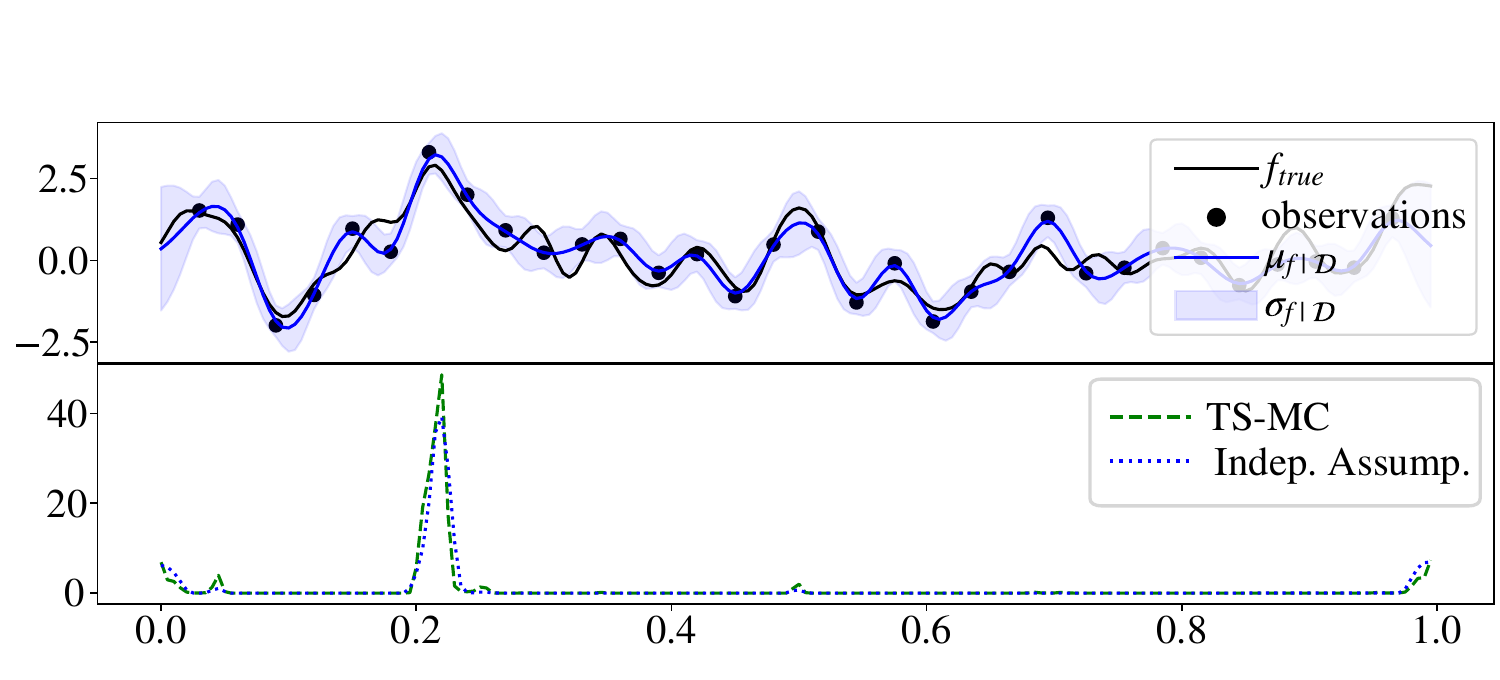}
        \caption{\(|\mathcal D| = 32\)}
    \end{subfigure}
    \caption{Falsely relying on Assumption~\ref{ass:independence} does not qualitatively change the estimation of Gaussian PoM, but rather leads to a more conservative prediction, underestimating the maximality of entries near the best observations. This can be understood as a consequence of Slepian's lemma \parencite{slepian1962one}.}
    \label{fig:ie_effect_of_uniformity_of_posterior}
\end{figure}
Figure~\ref{fig:ie_effect_of_uniformity_of_posterior} considers posteriors with varying degree of concentration of measure, covering different stages of Bayesian optimization. The figure demonstrates that PoM estimation based on Assumption~\ref{ass:independence} qualitatively captures the ground-truth PoM (here estimated using TS-MC). The degeneracies at the border of the ground-truth PoM correspond to dirac-deltas of the probability density function of PoM, but have small effective measure and as such are of little concern.
Theoretically, the dominant effect of falsely assuming independence can be understood by considering Slepian's lemma~\parencite{slepian1962one}, which implies that if $F \sim \mathcal N (\mu, \Sigma)$ and $\tilde F \sim \mathcal N(\mu, \mathrm{diag}(\Sigma))$ with $\Sigma_{i,j} \geq 0\ \forall i,j$, it holds that
\begin{equation*}
    \mathbb P[F^* > t] \leq \mathbb P[\tilde F^* > t] \ \forall t \in \mathbb R \implies \mathbb E[F^*] \leq \mathbb E[\tilde F^*].
\end{equation*}
In light of this, the minor differences that can be observed in Figure~\ref{fig:ie_effect_of_uniformity_of_posterior} between the PoM under Assumption~\ref{ass:independence} and the ground-truth PoM are explained as follows: Assumption~\ref{ass:independence} leads to over-estimating the maximum reward $F^*$ (Slepian's lemma), which results in overly-cauteous estimation of PoM, in particular under-estimating regions associated with promising observations. However, we stress that despite this bias towards uniformity, estimation under the \ia{} still manages to qualitatively capture the ground-truth PoM.

The experimental details of Figure~\ref{fig:ie_effect_of_uniformity_of_posterior} are as follows: the domain consists of $|\mathcal X| = 200$ equidistant points on which \(f_{true}\), a sample from a centered Gaussian process \(\mathcal{GP}\) with squared exponential kernel (length scale $0.02$, amplitude $1.0$), is evaluated. The prior belief over $f_{true}$ coincides with \(\mathcal{GP}\) except for the doubling of the amplitude to \(2.0\). $f_{true}$ is observed at $|\mathcal D|$ regularly selected locations with homoscedastic additive centered Gaussian noise (\(\sigma_{noise} = 0.5\)). We set the accuracy parameter to \(\epsilon = 1/({5 | \mathcal X|})\). The estimated probability mass functions (of PoM) are rescaled by $1/|\mathcal X|$ to simulate a probability density function.

Finally, to further argue for our method of neglecting dependency structure, we next demonstrate that the computational complexity of any unbiased estimator of PoM is lower bounded by the number of entries in $\Sigma_F$, i.e., it lies in $\Omega(|\mathcal X|^2)$. Lemma~\ref{lem:exampleNecessityFullCovarianceMatrixForUnbiasedEstimation}, through construction of a simple synthetic example with closed-form \textit{probability of maximality}, shows that in general knowledge on all entries in $\Sigma_F$ would be required. Note that the Lemma applies to all unbiased estimators, not just \ts{}, which under $\Theta(|\mathcal X|^2)$ samples scales in $\Theta(|\mathcal X|^4)$.
\begin{restatable}[Example to illustrate necessity of knowing the full covariance matrix for unbiased estimation]{lem}{exampleNecessityFullCovarianceMatrixForUnbiasedEstimation}\label{lem:exampleNecessityFullCovarianceMatrixForUnbiasedEstimation}
    Consider $F \sim \mathcal N(0, I + s(e_i e_j^T + e_j e_i^T))$ in $\mathbb R^{|\mathcal X|}$, where $(e_i)_j = \mathds{1}_{i=j}$, $i < j$, and $s \in [0,1)$. Then it holds that
    \begin{equation*}
        \lim_{s \to 1^-} \mathbb P[k \in \arg\max_{h} F_h] = \begin{cases} 
            \frac{1}{|\mathcal X|-1} & k \not= i \land k \not=j\\
            \frac{1/2}{|\mathcal X|-1} & otherwise
        \end{cases}.
    \end{equation*}
\end{restatable}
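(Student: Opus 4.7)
The plan is to decouple the anomalous pair $(F_i,F_j)$ via the latent representation $F_i = Z + \epsilon_i$, $F_j = Z + \epsilon_j$ with $Z \sim \mathcal N(0,s)$ and $\epsilon_i,\epsilon_j \overset{\mathrm{iid}}{\sim} \mathcal N(0,1-s)$, all jointly independent of $(F_l)_{l\notin\{i,j\}} \overset{\mathrm{iid}}{\sim} \mathcal N(0,1)$. One checks directly that this yields the prescribed covariance $I + s(e_i e_j^T + e_j e_i^T)$. This makes the $s\to 1^-$ asymptotics transparent: $\epsilon_i,\epsilon_j \to 0$ in $L^2$, while $Z$ converges in law to an independent standard normal. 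For every $s\in[0,1)$, $\mathrm{Cov}(F)$ is positive definite, so ties have measure zero and $\mathbb{P}[k\in\arg\max_h F_h]=\mathbb{P}[F_k>F_l\ \forall l\neq k]$. I would then handle the two cases of the piecewise formula separately.

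\paragraph{Case $k\notin\{i,j\}$.} Writing out the event,
\begin{equation*}
\mathbb{P}[F_k > F_l\ \forall l\neq k] = \mathbb{P}\bigl[F_k > Z+\epsilon_i,\; F_k > Z+\epsilon_j,\; F_k > F_l\ \forall l\notin\{i,j,k\}\bigr].
\end{equation*}
Since the indicator of this event is bounded and the boundary occurs on a Lebesgue-null set of $(\epsilon_i,\epsilon_j)$-values, dominated convergence lets me push $s\to 1^-$ inside, replacing $\epsilon_i,\epsilon_j$ by $0$. The limiting probability is that $F_k$ strictly exceeds the $|\mathcal X|-1$ iid standard normals $Z$ and $(F_l)_{l\notin\{i,j,k\}}$; by exchangeability this equals $1/(|\mathcal X|-1)$, matching the first branch of the formula.

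\paragraph{Case $k\in\{i,j\}$.} By symmetry treat $k=i$. The event factors as $\{\epsilon_i>\epsilon_j\}\cap\{Z+\epsilon_i>F_l\ \forall l\notin\{i,j\}\}$. Conditioning on $(\epsilon_i,\epsilon_j)$, which is independent of $(Z,(F_l)_{l\notin\{i,j\}})$, gives
\begin{equation*}
\mathbb{P}[F_i>F_l\ \forall l\neq i] = \int \mathds{1}_{e_i>e_j}\,\Psi(e_i)\,p_s(e_i,e_j)\,de_i\,de_j, \qquad \Psi(e):=\mathbb{P}[Z+e>F_l\ \forall l\notin\{i,j\}].
\end{equation*}
Two observations close the argument. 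First, by the symmetry of $\epsilon_i-\epsilon_j$, one has $\int \mathds{1}_{e_i>e_j}\,p_s(e_i,e_j)\,de_i\,de_j = 1/2$ for every $s<1$, independent of the scale $1-s$. Second, $\Psi$ is continuous with $\Psi(0)=\mathbb{P}[Z>F_l\ \forall l\notin\{i,j\}]=1/(|\mathcal X|-1)$. As $s\to 1^-$, the law $p_s$ concentrates at $(0,0)$, so a dominated convergence argument applied to the continuous factor $\Psi$ (not to the indicator) yields the limit $\tfrac12\cdot\tfrac{1}{|\mathcal X|-1}$, which is the second branch.

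\paragraph{Main obstacle.} The delicate step is the second case: a naive substitution $\epsilon_i=\epsilon_j=0$ would render $\{\epsilon_i>\epsilon_j\}$ meaningless, since $(\epsilon_i,\epsilon_j)$ is the very pair whose support collapses. The clean fix is to keep $(\epsilon_i,\epsilon_j)$ under the integral, exploit the scale-invariance $\mathbb{P}[\epsilon_i>\epsilon_j]=\tfrac12$, and take the $s\to 1^-$ limit only through the continuous map $\Psi$. As a final sanity check, $(|\mathcal X|-2)\cdot\tfrac{1}{|\mathcal X|-1}+2\cdot\tfrac{1/2}{|\mathcal X|-1}=1$, confirming that no probability mass is lost in the limit.
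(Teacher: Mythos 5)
Your proof is correct, and it follows the same basic idea as the paper's --- decompose the correlated pair into a shared component plus vanishing idiosyncratic noise and pass to the limit --- but the execution differs in a way worth noting. The paper uses the asymmetric Cholesky factorization $F_j = \varepsilon_j$, $F_i = \sqrt{1-s^2}\,\varepsilon_i + s\,\varepsilon_j$ and then argues informally that the mass ``halves'' between $i$ and $j$ because $F_i$ and $F_j$ differ only by an infinitesimal perturbation. Your symmetric representation $F_i = Z+\epsilon_i$, $F_j = Z+\epsilon_j$ buys two things: the identity $\Pr{F_i > F_j} = \Pr{\epsilon_i > \epsilon_j} = 1/2$ holds \emph{exactly for every} $s<1$ rather than only asymptotically, and the conditioning argument with the continuous factor $\Psi$ makes the exchange of limit and expectation rigorous where the paper's proof is heuristic. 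Two small blemishes, neither fatal: (i) $\Psi$ depends on $s$ through $\mathrm{Var}(Z)=s$, so you should write $\Psi_s$ and note that $\Psi_s(e_s)\to\Psi_1(0)$ whenever $e_s\to 0$ (locally uniform convergence of $\Psi_s$ to $\Psi_1$ handles this); only $\Psi_1(0)$, where $Z$ is standard normal and exchangeable with the other coordinates, equals $1/(|\mathcal X|-1)$. (ii) In the first case the collection $\{Z\}\cup\{F_l: l\notin\{i,j,k\}\}$ has $|\mathcal X|-2$ elements, not $|\mathcal X|-1$; the exchangeable family of size $|\mathcal X|-1$ is obtained only after adjoining $F_k$, which is what your conclusion actually uses.
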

As is apparent, knowledge of the position $(i,j)$ of the non-zero (upper) off-diagonal entry is essential for an unbiased prediction of the probability of maximality as $s \to 1^-$. Without a sparse representation of $\Sigma_F$, obtaining the pair $(i,j)$ would require checking all upper diagonal entries in $\Omega(|\mathcal X|^2)$. In less synthetic examples sparsity may not be present—hence, in general, an unbiased estimator of \textit{probability of maximality} really requires at least $\Omega(|\mathcal X|^2)$ compute. As such, neglecting dependency structure in the covariance matrix is an essential ingredient for obtaining \textit{almost-linear} runtime in the size of the Gaussian reward vector $|\mathcal X|$: unless this bias is adopted the runtime would scale at least quadratically in $|\mathcal X|$.

\section{\alite{}}\label{sec:alite}

\alite{} is our \textit{accurate} instantiation of \lite{}, which relies on nested binary search to match quartiles. So, we want to determine $m_x$ and $s_x$ such that for all $x\in \mathcal X$ it holds that
\begin{equation*}
    \mathbb P[x \in \tilde X^*] = \mathbb E[g^x(F_x)] \approx \mathbb E[\Phi(\frac{F_x- m_x}{s_x})].
\end{equation*}
Since $g^x$, by virtue of being a \textit{cumulative distribution function} of a continuous random variable $\max\nolimits_{z \not=x} \tilde F_z$, is continuous and monotonously increasing, we could efficiently find its first and third quartiles $q_1$ and $q_3$ to any accuracy using binary search. Then, we could select $m_x$ and $s_x$ such that the Gaussian approximation has matching quartiles. However, with evaluations of $g^x$ costing $\Theta(|\mathcal X|)$, repeating the procedure for each $x \in \mathcal X$ would lead to a total cost in $\Omega(|\mathcal X|^2)$, already exceeding the desired budget. 

To get around this conundrum, we approximate in a first step $g(f) := \prod\nolimits_z \mathbb P[F_z \leq f]$ with $\Phi((f-m)/s)$ based on quartile matching, before $\forall x \in \mathcal X$ matching quartiles of a separate normal distribution $\Phi((f-m_x)/s_x)$ to $\Phi((f-m)/s)\ /\ \mathbb P[F_x \leq f] \approx g^x(f) := \prod\nolimits_{z \not= x} \mathbb P[F_z \leq f]$. To ensure stability we operate in log-space.
\begin{itemize}[leftmargin=2cm]
    \item[Step 1:] Quartile matching s.t. $\Phi(\frac{f-m}{s}) \approx g(f) := \prod_z \mathbb P[F_z \leq f]$.
    \item[Step 2:] Quartile matching $\forall x \in \mathcal X$ s.t. $\Phi(\frac{f-m_x}{s_x}) \approx \Phi(\frac{f-m}{s}) / \mathbb P[F_x \leq f] \approx g^x(f)$.
\end{itemize}
At both stages, once the quartiles $q_1$ and $q_3$ are known, the selection of mean $m$ and standard deviation $s$ can be done in closed form, since $\Phi(\frac{q_1 - m}{s}) = 0.25$ and $\Phi(\frac{q_3 - m}{s}) = 0.75$ directly imply the value of $m$ and $s$ through\footnote{Basic algebra and symmetry of $\Phi$ yield $m-q_1 = s \Phi^{-1}(0.75)$ and $q_3 - m = s \Phi^{-1}(0.75)$ from which the result follows swiftly by subtracting and adding the equations.}
\begin{equation*}
    m = \frac{q_3 + q_1}{2} \qquad\text{and}\qquad s = \frac{q_3 - q_1}{2 \Phi^{-1}(0.75)}.
\end{equation*}

However, there is a caveat. $\tilde g^x(f) := \Phi((f-m)/s)\ /\ \Phi((f-\mu_{F_x})/{\sigma_{F_x}})$ is not a \textit{cumulative distribution function}, an unfortunate consequence of approximating $g$ by the normal $\Phi((f-m)/s)$. Although it always holds that $m > \mu_{F_x}\ \forall x \in \mathcal X$\footnote{Given $|\mathcal X| > 1$ and $\sigma_{F_x} > 0\ \forall x \in \mathcal X$, this follows from $\mathbb P[\tilde F^* \leq q] < \mathbb P[F_x \leq q]$ implying $0.75 < \mathbb P[F_x \leq q_3]$ and $0.25 < \mathbb P[F_x \leq q_1]$. As such, $\mu_{F_x} = (q_3^{F_x} + q_1^{F_x})/2 < (q_3 + q_1)/2 = m$.}, $s$ can be both larger and smaller than $\sigma_{F_x}$.
As Figure~\ref{fig:approximation_to_gx_is_not_a_cdf} shows, in the latter case $\tilde g^x$ may not even cross the quartiles $0.25$ and $0.75$. The former case is more benign, admitting a continuous monotonously increasing section with range $(0,1]$, outside of which $\tilde g^x$ always exceeds $1$\footnote{This was verified for a large variety of $m, \mu_{F_x}, s,$ and $\sigma_{F_x}$, but not analytically.}. As such, a binary search procedure can still be used to efficiently find its "quartiles", i.e. $f$ such that $\tilde g^x(f) = 0.25$ or $\tilde g^x(f) = 0.75$.

\begin{figure}[ht]
    \centering
    \includegraphics[width=0.5\linewidth]{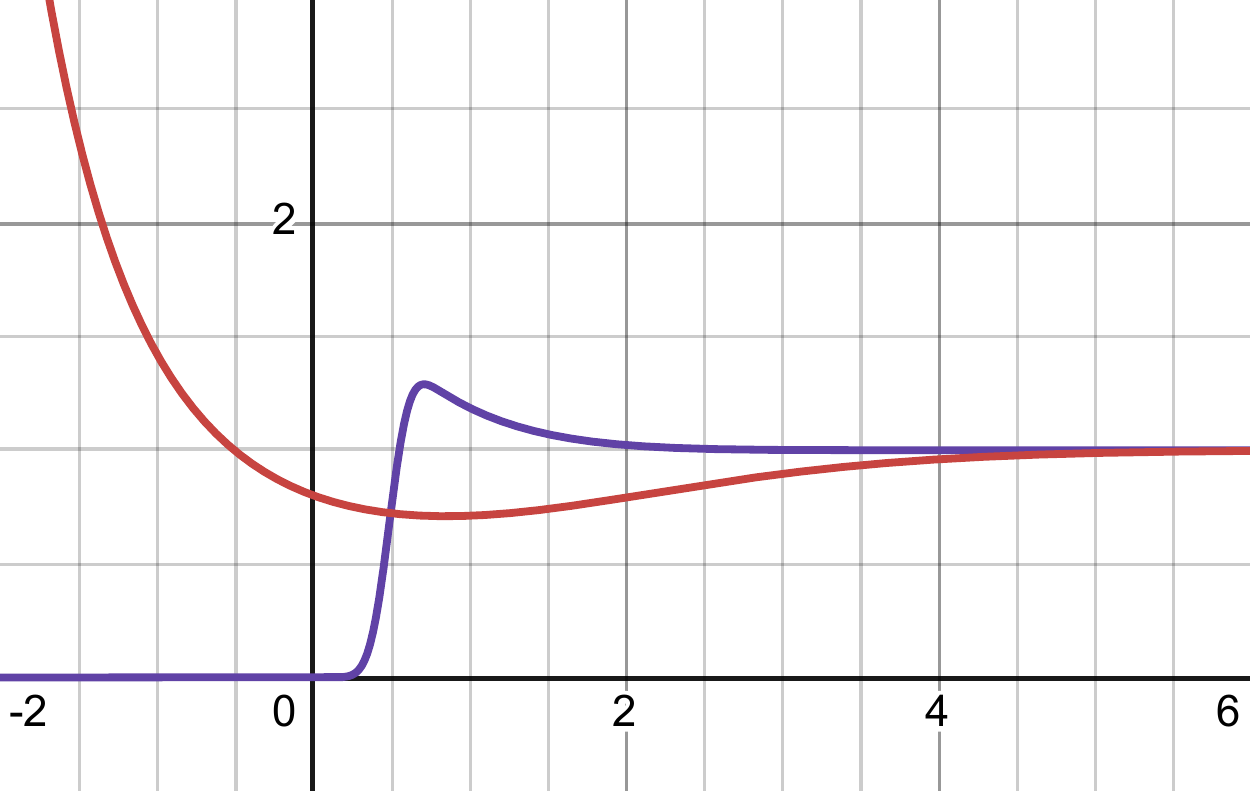}
    \caption{The graph of $\Phi(\tfrac{f-0.5}{2}) / \Phi(\tfrac{f-0}{1})$ (red line) and $\Phi(\tfrac{f-0.5}{0.1}) / \Phi(\tfrac{f-0}{1})$ (purple line). Unless $s \leq \sigma_{F_x}$, $\Phi(\tfrac{f-m}{s}) / \Phi(\tfrac{f-\mu_{F_x}}{\sigma_{F_x}})$ blows up as $f\to -\infty$ and as a consequence may not even cross $0.25$ and $0.75$.}
    \label{fig:approximation_to_gx_is_not_a_cdf}
\end{figure}

So, to ensure termination of quartile matching, we instead match $\Phi(\tfrac{f-m_x}{s_x})$ to $\Phi(\tfrac{f-m}{\min(s, \sigma_{F_x})}) / \Phi(\tfrac{f-\mu_{F_x}}{\sigma_{F_x}})$
before predicting $\mathbb P[x \in X^*] \approx \Phi((\mu_{F_x} - m_x)/\sqrt{\sigma_{F_x}^2 + s_x^2})$, a method we call \alite{}-II due to its reliance on two consecutive steps of quartile matching.

If $\sigma_{F_x} \ll s$, \alite{}-II can lead to a vast underestimation of \textit{probability of maximality}\footnote{For $m > \mu_{F_x}$, decreasing $s$ always decreases $\mathbb E[\Phi((F_x - m)/{s}) / \Phi((F_x - \mu_{F_x})/{\sigma_{F_x}})] \approx \mathbb P[x \in X^*]$ because $\phi((F_x - \mu_{F_x})/{\sigma_{F_x}})/\sigma_{F_x} \cdot 1 / \Phi((F_x - \mu_{F_x})/{\sigma_{F_x}})$ is dominant to the left of $\mu_{F_x} < m$, which is weighted less in the integral as $s \to 0$.}. Fortunately, there is an alternative method of approximation. As explained in Figure~\ref{fig:approximation_errror_of_using_g_instead_of_gx}, using $g$ instead of $g^x$ usually does not introduce significant error except for points $x \in \mathcal X$ so likely maximising that they dominate the shape of $g$.

\begin{figure}[ht]
    \centering
    \includegraphics[width=0.7\linewidth]{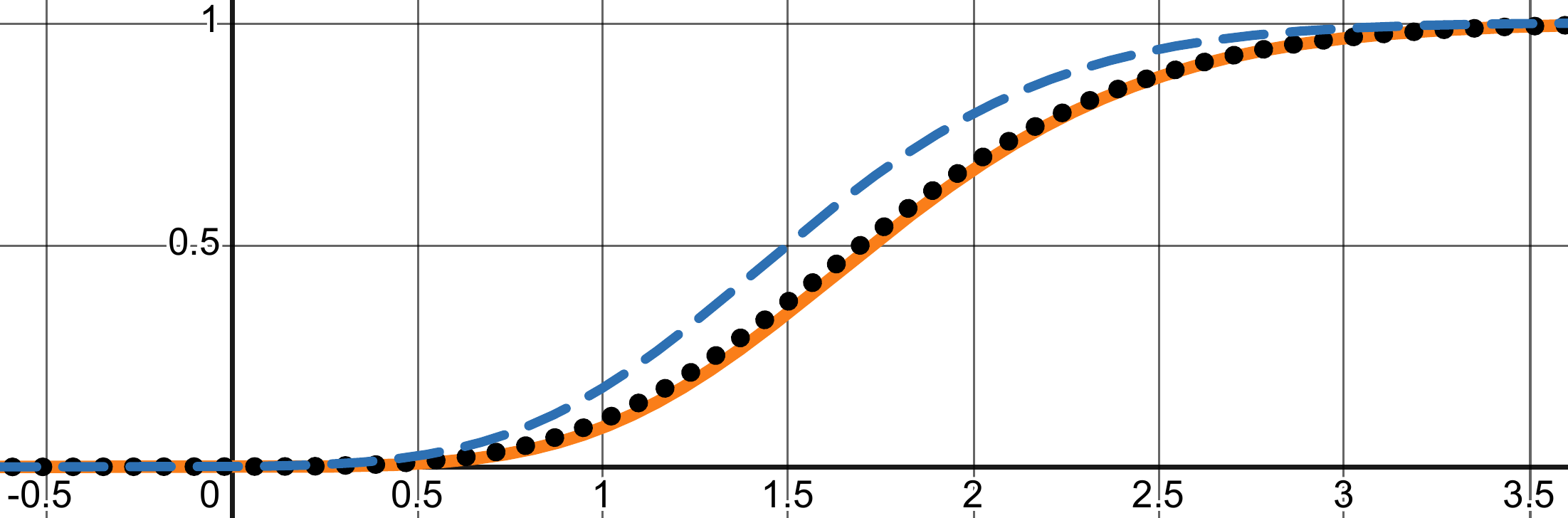}
    \caption{Illustration of the approximation error of using $g(f) = \Phi(f)^{10} \Phi(f-1)$ (orange solid line) instead of $g^{x_1} = \Phi(f)^{10}$ (blue dashed) and $g^{x_2} = \Phi(f)^9 \Phi(f-1)$ (black dotted). We underestimate $g^x$ for the likely maximiser $x_1$, leading to an overly conservative (under)estimation of \textit{probability of maximality}. Contrastingly, $g^{x_2}$ is well approximated by $g$ for the unlikely maximiser $x^2$.}
    \label{fig:approximation_errror_of_using_g_instead_of_gx}
\end{figure}

 \alite{}-I exploits this observation by only relying on the initial quartile matching, where we approximated $g(f) \approx \Phi((f-m)/{s})$. That is, it directly predicts PoM as $\mathbb P[x \in X^*] \approx \Phi((\mu_{F_x} - m)/{\sqrt{\sigma_{F_x}^2 + s^2}})$. As in the case of \alite{}-II, here the approximation of using $g$ instead of $g^x$ biases the \textit{probabilities of maximality} towards $0$.
 
 For the most accurate estimation, we combine \alite{}-I and \alite{}-II by taking the element-wise maximum of their respective predicted PoMs, i.e., $\mathbb P[x \in X^*] \approx \max(\Phi((\mu_{F_x} - m_x)/{\sqrt{\sigma_{F_x}^2 + s_x^2}}), \Phi((\mu_{F_x} - m)/{\sqrt{\sigma_{F_x}^2 + s^2}}))$. Whereas \alite{}-I is targeted at unlikely maximizers with $\mu_{F_x} \ll m$, \alite{}-II is built for the opposite case where $\mu_{F_x} \approx m$. Together, they solve both cases well. Taking the maximum is justified since both \alite{}-I and \alite{}-II involve approximations that lower their predicted \textit{probabilities of maximality}. As a final step, we add a global normalization to $1$, once again relying on Assumption~\ref{ass:unique_maximiser}. We remark that this final step typically does not significantly affect the estimation accuracy since the estimated PoM is already almost normalized.

The complete procedure for estimation with \alite{} is described in Algorithm~\ref{alg:NIE}, along with its sub-procedures in Algorithms~\ref{alg:NIE_I_Search}-\ref{alg:NIE_II_Search_Window}. The logarithmic search windows for the two stages of quartile matching are selected according to the results in Proposition~\ref{thm:logarithmicFStarQuantileSearch} and Proposition~\ref{thm:logarithmicFStarWithoutXQuantileSearch} (plugging in $b \in \{0.25, 0.75\}$), while additionally taking into account that for the second stage we do not have access to the ground-truth quartiles of $g$ and hence its statistics $m$ and $s$ (we only have upper and lower bounds from the first stage). The algorithm runs in $\Theta(\Sigma_{l=1}^{\log_2 k} 2^l\, |\mathcal X|) = \Theta(k |\mathcal X|)$ where $k$ denotes the final depth that is needed for uniform convergence of the lower and upper bounds on $p_x$.

\begin{restatable}[Logarithmic $\tilde F^*$-quantile search]{prop}{logarithmicFStarQuantileSearch}\label{thm:logarithmicFStarQuantileSearch}
    Let $b \in [0.25,1)$ and $\tilde F \sim \mathcal N(\mu_F, \mathrm{diag}(\sigma_{F_1}^2, \ldots, \sigma_{F_{|\mathcal X|}}^2))$ with $|\mathcal X| > 1$. Assume $\exists x : \sigma_{F_x} > 0$. Define $g(f) := \Pi_{z} \mathbb P[\tilde F_z \leq f]$, which is continuous and strictly monotonously increasing. Then $ \exists ! \bar f \in \mathbb R$ s.t. $g(\bar f) = b$. It can be found efficiently using logarithmic search with search window
    \begin{equation*}
        \mu_F^{min} + \sigma_F^{min} \Phi^{-1}(b^{1/|\mathcal X|}) \leq \bar f \leq \mu_F^{max} + \sigma_F^{max} \Phi^{-1}(b^{1/|\mathcal X|}).
    \end{equation*}
    The size of the search window is bounded by $\mu_{F}^{max} - \mu_{F}^{min} + \Phi^{-1}(b^{1/|\mathcal X|}) \sigma_{F}^{max} \in \Theta(\sqrt{\log |\mathcal X|})$. Run $k$ steps of binary search resulting in best approximant $\bar f^k$. Then
    \begin{equation*}
        |\bar f - \bar f^k| \leq \frac{\mu_{F}^{max} - \mu_{F}^{min} + \Phi^{-1}(b^{1/|\mathcal X|}) \sigma_{F}^{max}}{2^{k+1}},
    \end{equation*}
    i.e. we obtain exponential convergence with linear order. So, to ensure $|\bar f \!-\! \bar f^k| \leq \nu$, $k = \log_2((\mu_{F}^{max} - \mu_{F}^{min} + \Phi^{-1}(b^{1/|\mathcal X|}) \sigma_{F}^{max}) / (2 \nu))  \in \Theta(\log (\log (| \mathcal X|) / \nu))$ steps suffice.
\end{restatable}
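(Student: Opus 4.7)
The plan is to decompose the proposition into four sub-claims and address them in sequence: (i) existence and uniqueness of $\bar f$, (ii) correctness of the explicit two-sided search window, (iii) its $\Theta(\sqrt{\log |\mathcal X|})$ width, and (iv) the bisection convergence rate.

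For (i), I would note that $g(f) = \prod_{z} \Phi((f-\mu_{F_z})/\sigma_{F_z})$ is a product of continuous, monotonically non-decreasing factors. The factor indexed by the $x$ with $\sigma_{F_x}>0$ is strictly increasing, so $g$ itself is continuous and strictly increasing. Combined with the limits $g(f) \to 0$ as $f \to -\infty$ and $g(f) \to 1$ as $f \to +\infty$, the intermediate value theorem yields a unique $\bar f \in \mathbb{R}$ with $g(\bar f) = b$ for any $b \in (0,1)$.

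For (ii), the key observation is that under $|\mathcal{X}| \geq 2$ and $b \geq 1/4$ one has $b^{1/|\mathcal{X}|} \geq 1/2$, so $c := \Phi^{-1}(b^{1/|\mathcal{X}|}) \geq 0$. Setting $f_{up} := \mu_F^{max} + \sigma_F^{max} c$, each factor in $g(f_{up})$ satisfies
\begin{equation*}
    \frac{f_{up} - \mu_{F_z}}{\sigma_{F_z}} \;=\; \frac{\mu_F^{max} - \mu_{F_z}}{\sigma_{F_z}} \;+\; \frac{\sigma_F^{max}}{\sigma_{F_z}}\, c \;\geq\; c,
\end{equation*}
since both summands are non-negative. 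Monotonicity of $\Phi$ then gives $g(f_{up}) \geq (b^{1/|\mathcal{X}|})^{|\mathcal{X}|} = b$. The symmetric argument at $f_{low} := \mu_F^{min} + \sigma_F^{min} c$ (where both summands flip sign and become non-positive) yields $g(f_{low}) \leq b$. Monotonicity of $g$ then places $\bar f$ in $[f_{low}, f_{up}]$. Bounding the window width by $W := (\mu_F^{max}-\mu_F^{min}) + \sigma_F^{max} c$ follows from $\sigma_F^{min} \geq 0$.

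For (iii), I would expand $b^{1/|\mathcal{X}|} = 1 - (-\log(b)/|\mathcal{X}| + O(1/|\mathcal{X}|^2))$ and invoke the standard Mills-ratio asymptotic $\Phi^{-1}(1-\delta) \sim \sqrt{2\log(1/\delta)}$ as $\delta \to 0^+$, giving $c \sim \sqrt{2 \log |\mathcal{X}|}$ and hence $W \in \Theta(\sqrt{\log |\mathcal{X}|})$. Finally, (iv) is the textbook bisection estimate: after $k$ iterations the containing interval has width $W/2^k$ and its midpoint $\bar f^k$ satisfies $|\bar f - \bar f^k| \leq W/2^{k+1}$. Solving $W/2^{k+1} \leq \nu$ yields $k = \lceil \log_2(W/(2\nu)) \rceil \in \Theta(\log(\log |\mathcal{X}|/\nu))$.

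The main obstacle is really just (ii): the symmetric upper/lower endpoint argument only works if $c \geq 0$, because only then do the corrections $(\mu_F^{max}-\mu_{F_z})/\sigma_{F_z}$ and $(\sigma_F^{max}/\sigma_{F_z})c$ share a sign (and dually at $f_{low}$). This is precisely what the combined hypotheses $|\mathcal{X}| > 1$ and $b \geq 1/4$ secure; without them the candidate endpoints could even swap order. Everything else reduces to monotonicity, a routine Gaussian-tail asymptotic, and the standard analysis of bisection.
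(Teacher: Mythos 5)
Your proof is correct and follows essentially the same route as the paper's: existence/uniqueness via monotonicity and the intermediate value theorem, the window via the observation that $b\geq 1/4$ and $|\mathcal X|\geq 2$ force $\Phi^{-1}(b^{1/|\mathcal X|})\geq 0$ so that the extremal $(\mu,\sigma)$ standardizations bound every factor, the width via $1-b^{1/|\mathcal X|}\sim \ln(1/b)/|\mathcal X|$ combined with $\Phi^{-1}(1-\delta)\sim\sqrt{2\ln(1/\delta)}$, and the textbook bisection estimate. The only (cosmetic) difference is that you bound each factor of $g$ at the candidate endpoints directly, whereas the paper sandwiches the whole product between $\Phi^{|\mathcal X|}$ of the extremal standardizations and then inverts; both hinge on exactly the same sign condition you identified.
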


\newpage
\begin{restatable}[Logarithmic $\tilde F^{* \setminus x}$-quantile search]{prop}{logarithmicFStarWithoutXQuantileSearch}\label{thm:logarithmicFStarWithoutXQuantileSearch}
    Let $b \in (0,1)$, $m, \mu_{F_x} \in \mathbb R$, and $s, \sigma_{F_x} \in \mathbb R_+$ such that $m > \mu_{F_x}$ and $s \leq \sigma_{F_x}$. Define $\tilde g^x(f) := \Phi((f-m)/{s}) / \Phi((f- \mu_{F_x})/{\sigma_{F_x}})$, which is continuous and strictly monotonously increasing on a section with range $(0,1]$ and exceeds $1$ elsewhere. Then $\exists!\, \bar f_x \in \mathbb R$ s.t. $\tilde g^x(\bar f_x) = b$. It can be found efficiently using logarithmic search with search window
    \begin{equation*}
        \min(\mu_{F_x} - \sqrt{2} \sigma_{F_x}, \max(\frac{m + \mu_{F_x}}{2} - \frac{\sigma_{F_x}^2 \ln(2/b)}{m - \mu_{F_x}}, m - \sqrt{\tfrac{2 \ln (2/b)}{1-s^2/\sigma_{F_x}^2}} s)) \ \leq\ \bar f_x \ \leq\ m + \Phi^{-1}(b) \cdot s
    \end{equation*}
    The size $\Delta$ of the search window is independent of $|\mathcal X|$, i.e. $\Delta \in \Theta(1)$. Run $k$ steps of binary search resulting in best approximant $\bar f^k_x$. Then $|\bar f_x - \bar f^k_x| \leq \tfrac{\Delta}{2^{k+1}}$, i.e., we obtain exponential convergence with linear order. So, to ensure $|\bar f_x - \bar f^k_x| \leq \nu$, $k = \log_2(\Delta/(2\nu)) \in \Theta(\log (1/\nu))$ steps suffice.
\end{restatable}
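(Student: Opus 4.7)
The plan is to establish in order (i) continuity of $\tilde g^x$ with $\tilde g^x(-\infty) = 0$ and $\tilde g^x(+\infty) = 1$, (ii) a strictly increasing section of $\tilde g^x$ whose range covers $(0,1]$ with $\tilde g^x > 1$ outside, (iii) the validity of the claimed search window, and (iv) the halving rate of binary search.

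Continuity is immediate since $\Phi > 0$. The limit at $+\infty$ is trivial. At $-\infty$ I would use the log-asymptotic $\log \Phi(z) \sim -z^2/2$, obtaining $\log \tilde g^x(f) \sim -(m-f)^2/(2s^2) + (\mu_{F_x}-f)^2/(2\sigma_{F_x}^2) \to -\infty$ in both sub-cases: when $s < \sigma_{F_x}$ the $1/s^2$ coefficient wins, and when $s = \sigma_{F_x}$ the factored difference $(\mu_{F_x}-m)(2f-m-\mu_{F_x})$ tends to $-\infty$. For monotonicity I differentiate: $(\log \tilde g^x)'(f) = r((f-m)/s)/s - r((f-\mu_{F_x})/\sigma_{F_x})/\sigma_{F_x}$, where $r(z) := \phi(z)/\Phi(z)$ is the reverse hazard rate, strictly decreasing on $\mathbb R$ by log-concavity of $\Phi$. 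When $s = \sigma_{F_x}$, the inequality $(f-m)/s < (f-\mu_{F_x})/\sigma_{F_x}$ (from $m > \mu_{F_x}$) yields a positive derivative everywhere, so $\tilde g^x$ is globally strictly increasing with range $(0,1)$. When $s < \sigma_{F_x}$, the linear tail $r(z) \sim -z$ at $-\infty$ makes the derivative positive near $-\infty$, while the faster Gaussian decay of $\phi((f-m)/s)$ than of $\phi((f-\mu_{F_x})/\sigma_{F_x})$ for large $f$ makes it negative near $+\infty$; hence $\tilde g^x$ attains a unique maximum exceeding $1$, is strictly increasing on the pre-maximum section whose restriction to $\{\tilde g^x \leq 1\}$ has range $(0,1]$, and satisfies $\tilde g^x > 1$ elsewhere. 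In either sub-case, existence and uniqueness of $\bar f_x$ for $b \in (0,1)$ follow from the IVT applied on this increasing section.

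For the search window, the upper bound $\bar f_x \leq m + \Phi^{-1}(b)\,s$ is immediate from $\Phi((\bar f_x - m)/s) = b\,\Phi((\bar f_x - \mu_{F_x})/\sigma_{F_x}) \leq b$. For the lower bound I plan a two-branch argument using the Chernoff tail $\Phi(-z) \leq e^{-z^2/2}/2$: if $\bar f_x \geq A := \mu_{F_x} - \sqrt{2}\sigma_{F_x}$ we are done on that branch; otherwise $\bar f_x < A$ forces $\bar f_x$ safely below $\mu_{F_x}$ so that the Chernoff bound is active in the denominator, and combining it with the same bound in the numerator (valid by the upper bound since $\bar f_x < m$) yields two quadratic inequalities in $\bar f_x$. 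Balancing them one way cancels the $s$-dependence and produces the midpoint-style bound $B$; balancing them a different way retains the factor $(1 - s^2/\sigma_{F_x}^2)$ and produces $C$, which is sharper when $s \ll \sigma_{F_x}$ but degenerates as $s \to \sigma_{F_x}$. The composition $\min\bigl(A, \max(B,C)\bigr)$ then selects, for each configuration of $(b, s, \sigma_{F_x}, m, \mu_{F_x})$, the informative and tightest bound, with $A$ serving as a fallback whenever the denominator-side Chernoff bound is not active.

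The window size and rate are straightforward: $A, B, C$ and the upper bound depend only on $(b, m, \mu_{F_x}, s, \sigma_{F_x})$ and not on $|\mathcal X|$, so $\Delta \in \Theta(1)$; binary search halves the uncertainty at each step, giving $|\bar f_x - \bar f_x^k| \leq \Delta/2^{k+1}$ and hence $k \in \Theta(\log(1/\nu))$ for accuracy $\nu$. The main obstacle I anticipate is the lower bound: verifying that the $\min/\max$ composition is valid uniformly over all admissible configurations, especially near the degenerate boundary $s \to \sigma_{F_x}$ where $C$ blows up and $A$ or $B$ must compensate, and handling the case $\bar f_x \in [\mu_{F_x} - \sqrt{2}\sigma_{F_x}, m]$ where the denominator Chernoff bound is weak.
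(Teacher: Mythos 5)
Your overall architecture matches the paper's: the shape claim gives existence and uniqueness of $\bar f_x$, the upper bound follows from $\Phi((\bar f_x - m)/s) = b\,\Phi((\bar f_x - \mu_{F_x})/\sigma_{F_x}) \le b$, and the lower bound comes from restricting to $f \le A := \mu_{F_x}-\sqrt{2}\,\sigma_{F_x}$ and then deriving two alternative quadratic sufficient conditions that yield $B$ and $C$. Your treatment of the monotonicity claim via the reverse hazard rate is in fact more substantive than the paper's, which simply points to a figure (and elsewhere concedes the claim was verified numerically, not analytically); note, though, that your ``unique maximum'' assertion in the case $s<\sigma_{F_x}$ still needs an argument that the derivative changes sign only once, since a difference of two decreasing functions need not be monotone.

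The genuine gap is the tail bound you invoke for the lower-bound branch. To show $\tilde g^x(f) < b$ for small $f$ you must upper-bound the ratio $\Phi(u)/\Phi(v)$ with $u=(f-m)/s$ and $v=(f-\mu_{F_x})/\sigma_{F_x}$, which requires an upper bound on $\Phi(u)$ \emph{and a lower bound on} $\Phi(v)$. The Chernoff bound $\Phi(-z)\le e^{-z^2/2}/2$ supplies only upper bounds, and no lower bound of the form $\Phi(-z)\ge c\,e^{-z^2/2}$ holds for large $z$ (the true tail is of order $\phi(z)/z$), so the step as you describe it cannot close. The paper instead uses the two-sided Mills-ratio bounds $\phi(a)\bigl(\tfrac{1}{-a}-\tfrac{1}{-a^3}\bigr)\le\Phi(a)\le\tfrac{\phi(a)}{-a}$ from its Gaussian-asymptotics lemma; the restriction $f\le\mu_{F_x}-\sqrt{2}\,\sigma_{F_x}$ exists precisely to guarantee $1-1/v^2\ge\tfrac12$, so the lower bound on the denominator is within a factor $2$ of the Mills upper bound, and together with $v/u\le1$ (from $m>\mu_{F_x}$ and $s\le\sigma_{F_x}$) this yields $\tilde g^x(f)\le 2\,\phi(u)/\phi(v)$ --- which is where the constant in $\ln(2/b)$ originates, not from the Chernoff prefactor $\tfrac12$. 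Once you substitute these two-sided bounds, the two quadratic conditions you sketch are exactly the paper's, and the $\min(A,\max(B,C))$ composition goes through; the degenerate case $s\to\sigma_{F_x}$ you worry about is harmless, since then $C\to-\infty$ and $\max(B,C)=B$.
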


\begin{algorithm}[!ht]
\caption{\alite{}}\label{alg:NIE}
\begin{algorithmic}
\Require $\mu_{F}, \sigma_{F}, \epsilon$
\State $max\text{-}error \gets \epsilon$
\State $d \gets 1$
\While{$max\text{-}error \geq \epsilon$}
    \State $d \gets d \cdot 2$

    \State $(m^{up}, m^{low}, s^{up}, s^{low}) \gets \text{\alite{}-I-S}(d, \mu_F, \sigma_F)$ \Comment{C: $\Theta(d |\mathcal X|)$, M: $\Theta(1)$}

    \State $(m_x^{up}, m_x^{low}, s_x^{up}, s_x^{low})_{x \in \mathcal X} \gets \text{\alite{}-II-S}(d, m^{up}, m^{low}, s^{up}, s^{low}, \mu_F, \sigma_F)$ \Comment{C: $\Theta(d |\mathcal X|)$, M: $\Theta(|\mathcal X|)$}

    \If {$s^{low} < 0$ or $\min_x s_x^{low} < 0$}
        \State jump to the top of this while-loop
    \EndIf

    \State $(p_x^{I,up})_{x \in \mathcal X} \gets \left(\Phi(\max(\frac{\mu_{F_x} - m^{low}}{\sqrt{\sigma_{F_x}^2 + (s^{low})^2}}, \frac{\mu_{F_x} - m^{low}}{\sqrt{\sigma_{F_x}^2 + (s^{up})^2}}))\right)_{x \in \mathcal X}$
    \State $(p_x^{I,low})_{x \in \mathcal X} \gets \left(\Phi(\min(\frac{\mu_{F_x} - m^{up}}{\sqrt{\sigma_{F_x}^2 + (s^{low})^2}}, \frac{\mu_{F_x} - m^{up}}{\sqrt{\sigma_{F_x}^2 + (s^{up})^2}}))\right)_{x \in \mathcal X}$
    \State $(p_x^{II,up})_{x \in \mathcal X} \gets \left(\Phi(\max(\frac{\mu_{F_x} - m_x^{low}}{\sqrt{\sigma_{F_x}^2 + (s_x^{low})^2}}, \frac{\mu_{F_x} - m_x^{low}}{\sqrt{\sigma_{F_x}^2 + (s_x^{up})^2}}))\right)_{x \in \mathcal X}$
    \State $(p_x^{II,low})_{x \in \mathcal X} \gets \left(\Phi(\min(\frac{\mu_{F_x} - m_x^{up}}{\sqrt{\sigma_{F_x}^2 + (s_x^{low})^2}}, \frac{\mu_{F_x} - m_x^{up}}{\sqrt{\sigma_{F_x}^2 + (s_x^{up})^2}}))\right)_{x \in \mathcal X}$
    \State $(p_x^{up}, p_x^{low})_{x \in \mathcal X} \gets (\max(p_x^{I,up}, p_x^{II,up}), \max(p_x^{I,low}, p_x^{II,low}))_{x \in \mathcal X}$

    \State $max\text{-}error \gets \max_{x \in \mathcal X} p_x^{up} - p_x^{low}$
\EndWhile
\State $(p_x)_{x \in \mathcal X} \gets ((p_x^{up} + p_x^{low})/2)_{x \in \mathcal X}$\\
\Return $(p_x / \sum_{z \in \mathcal X} p_z)_{x \in \mathcal X}$
\end{algorithmic}
\end{algorithm}

The shared final depth $k$ of the nested binary search procedures is actually quite small. Indeed, as explained in Proposition~\ref{thm:logarithmicFStarQuantileSearch}, to ensure the quartiles $q_1$ and $q_3$ of $g$ are determined up to accuracy $\nu$ it suffices to run $k \in \Theta(\log (\log( |\mathcal X|)/\nu))$ steps. This describes the efficiency of \alite{}-I. Similarly, according to Proposition~\ref{thm:logarithmicFStarWithoutXQuantileSearch}, the second stage of binary search produces $\nu$-accurate quartiles in just $k = \log_2(\Delta/(2\nu)) \in \Theta(\log(1/\nu))$ steps. Stacking the two will result in $\nu$-accurate quartiles of $\Phi((f-m)/{s}) / \Phi((f-\mu_{F_x})/{\sigma_{F_x}})$ at a shared depth $k$ scaling in $\Theta(\log(\log(|\mathcal X|)/\nu))$. This describes the efficiency of \alite{}-II.

As a final detail, we do not seek $\nu$-accurate quartiles, but rather $\epsilon$-converged predictions of \textit{probability of maximality}. The error propagation from quartiles to predictions is provided in Lemma~\ref{lem:nieErrorPropagation}. It presents the required $\nu$ such that \alite{}-I is $\epsilon$ accurate to the analytical \alite{}-I, which is based on the actual quartiles ($k \to \infty$). According to the lemma it suffices to take $\nu = \epsilon \cdot \bar{s}^2 / (\max\nolimits_x | \mu_{F_x} - \bar m| + \bar s)$, where $\bar m$ and $\bar s$ describe the mean and standard deviation implied by the true quartiles. By using $m_x, \bar m_x$ and $s_x, \bar s_x$ instead of $m, \bar m$ and $s, \bar s$, Lemma~\ref{lem:nieErrorPropagation} applies directly to \alite{}-II as well. Due to the linear propagation of error from $\epsilon$ to $\nu$ predicted by Lemma~\ref{lem:nieErrorPropagation}, we obtain a total runtime complexity in $\Theta(|\mathcal X| \log(\log (| \mathcal X|)/\epsilon))$ and memory consumption $\Theta(|\mathcal X|)$. In terms of asymptotic efficiency we are on par with \flite{}, being essentially independent of $\nu$ and linear in $|\mathcal X|$. However, in practice the constant factor is quite a bit worse, as can be observed in Figure~\ref{fig:runtime_over_domain_size}.

\begin{restatable}[\alite{} error propagation]{lem}{nieErrorPropagation}\label{lem:nieErrorPropagation}
    Let $\mu_F \in \mathbb R^{|\mathcal X|}$, $\sigma_F \in \mathbb R^{|\mathcal X|}_+$, and $\epsilon > 0$. Let $\bar q_1, \bar q_3 \in \mathbb R$ and $q_1,q_3 \in \mathbb R$ be pairs of quartiles such that $|\bar q_1 - q_1| \leq \nu$ and $|\bar q_3 - q_3| \leq \nu$ for $\nu = \epsilon \cdot \bar s^2 /(\max\nolimits_{x}|\mu_{F_x} - \bar m| + \bar s)$. Then with $m = (q_3 + q_1)/{2}$ and $\bar m = (\bar q_3 + \bar q_1)/{2}$ the means and $s = (q_3 - q_1)/{(2 \Phi^{-1}(0.75))}$ and $\bar s = (\bar q_3 - \bar q_1)/(2 \Phi^{-1}(0.75))$ the standard deviations of quartile-matched Gaussians, it holds that for all $x \in \mathcal X$
    \begin{equation}\label{eq:nie_absolute_error}
        \left |\Phi(\frac{\mu_{F_x} - m}{\sqrt{\sigma_{F_x}^2 + s^2}}) - \Phi(\frac{\mu_{F_x} - \bar m}{\sqrt{\sigma_{F_x}^2 + \bar s^2}}) \right | \leq \epsilon + \mathcal O(\epsilon^2).
    \end{equation}
\end{restatable}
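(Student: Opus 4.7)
}
The plan is to view the predicted PoM as a smooth function of the implied mean and standard deviation, Taylor expand it around the exact quantities $(\bar m, \bar s)$, and bound the first-order error using the Lipschitz constant of $\Phi$ together with the explicit bounds on $|m-\bar m|$ and $|s-\bar s|$ implied by the quartile bounds. The remainder falls into the advertised $\mathcal O(\epsilon^2)$ term.

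\emph{Step 1 (from quartile error to moment error).} Using the closed forms $m = (q_3+q_1)/2$ and $s = (q_3-q_1)/(2\Phi^{-1}(0.75))$ (and analogously for $\bar m, \bar s$), the triangle inequality gives $|m - \bar m| \leq \nu$ and $|s - \bar s| \leq \nu/\Phi^{-1}(0.75) \eqdef c\nu$ with $c = 1/\Phi^{-1}(0.75)$.

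\emph{Step 2 (first-order expansion).} Define
\[
 f(m, s) \;:=\; \Phi\!\left(\frac{\mu_{F_x} - m}{\sqrt{\sigma_{F_x}^2 + s^2}}\right).
\]
Differentiating and using $\phi \leq 1/\sqrt{2\pi}$ together with $\sqrt{\sigma_{F_x}^2 + \bar s^2} \geq \bar s$ and $(\sigma_{F_x}^2 + \bar s^2)^{3/2} \geq \bar s^3$, I can bound the partials at $(\bar m, \bar s)$ by
\[
 |\partial_m f(\bar m, \bar s)| \leq \tfrac{1}{\sqrt{2\pi}\,\bar s}, \qquad |\partial_s f(\bar m, \bar s)| \leq \tfrac{|\mu_{F_x} - \bar m|}{\sqrt{2\pi}\,\bar s^2}.
\]
A Taylor expansion of $f$ around $(\bar m, \bar s)$ therefore yields
\[
 |f(m, s) - f(\bar m, \bar s)| \leq \tfrac{1}{\sqrt{2\pi}}\!\left(\tfrac{|m-\bar m|}{\bar s} + \tfrac{|\mu_{F_x} - \bar m|\,|s-\bar s|}{\bar s^2}\right) + R,
\]
where $R$ collects second-order terms.

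\emph{Step 3 (substitute $\nu$).} Inserting $|m-\bar m|\leq \nu$, $|s-\bar s|\leq c\nu$, maximising $|\mu_{F_x}-\bar m|$ over $x \in \mathcal X$, and using $\max(1, c)/\sqrt{2\pi} < 1$, the leading bound becomes
\[
 \tfrac{1}{\sqrt{2\pi}}\,\tfrac{\nu}{\bar s^2}\bigl(\bar s + c\,\max_x|\mu_{F_x} - \bar m|\bigr) \;\leq\; \tfrac{\nu}{\bar s^2}\bigl(\bar s + \max_x|\mu_{F_x} - \bar m|\bigr) \;=\; \epsilon,
\]
by the choice of $\nu$. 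This gives the $\epsilon$ contribution in Equation~\eqref{eq:nie_absolute_error}.

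\emph{Step 4 (the remainder is $\mathcal O(\epsilon^2)$).} The second derivatives of $f$ are bounded by rational expressions in $\bar s$, $\sigma_{F_x}$, $|\mu_{F_x} - \bar m|$ that, under the same lower bound $\sqrt{\sigma_{F_x}^2 + \bar s^2} \geq \bar s$, are uniformly controlled by a constant times $(|\mu_{F_x} - \bar m| + \bar s)^2/\bar s^4$. Multiplied by $\nu^2$ and using $\nu = \epsilon\,\bar s^2/(\max_x|\mu_{F_x} - \bar m| + \bar s)$, the remainder becomes $\mathcal O(\epsilon^2)$, completing the proof.

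The main obstacle I anticipate is the bookkeeping in Step~4: one has to verify that the second-order derivatives of $(m, s) \mapsto \Phi((\mu_{F_x}-m)/\sqrt{\sigma_{F_x}^2 + s^2})$ can be controlled uniformly over a neighbourhood of $(\bar m, \bar s)$ of radius $\Theta(\nu)$, and that the constants produced in that control, when multiplied by $\nu^2$ and the chosen $\nu$, collapse cleanly into an $\mathcal O(\epsilon^2)$ term independent of $x$. Everything else is bounded, essentially-linear error propagation.
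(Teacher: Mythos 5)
Your proposal is correct and follows essentially the same route as the paper's proof: first-order error propagation from the quartiles to $(m,s)$ and then into $\Phi$, with the constant $1/\Phi^{-1}(0.75)\approx 1.48$ absorbed by the Lipschitz constant $1/\sqrt{2\pi}$ of $\Phi$, and all higher-order terms swept into $\mathcal O(\epsilon^2)$. The only cosmetic difference is that you do a single two-variable Taylor expansion of $(m,s)\mapsto\Phi((\mu_{F_x}-m)/\sqrt{\sigma_{F_x}^2+s^2})$, whereas the paper chains a Taylor bound on $1/\sqrt{\sigma_{F_x}^2+s^2}$ with a triangle inequality on the argument and a final mean-value step for $\Phi$.
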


\begin{algorithm}
    \caption{\alite{}-I-S(earch)}\label{alg:NIE_I_Search}
    \begin{algorithmic}
        \Require $d, \mu_F, \sigma_F$
    \State $(q_{1}^{low}, q_{1}^{up}, q_{3}^{low}, q_{3}^{up}) \gets \text{\alite{}-I-SW}(\mu_F, \sigma_F)$
        \Comment{C: $\Theta(|\mathcal X|)$, M: $\Theta(1)$}
    \For{$1,\ldots,d$}
        \State $q_1 \gets (q_1^{up} + q_1^{low})/2$
        \State $q_3 \gets (q_3^{up} + q_3^{low})/2$
        \State $g_1 = \prod_z \Phi(\frac{q_1 - \mu_{F_z}}{\sigma_{F_z}})$ \Comment{C: $\Theta(|\mathcal X|)$, M: $\Theta(1)$}
        \State $g_3 = \prod_z \Phi(\frac{q_3 - \mu_{F_z}}{\sigma_{F_z}})$ \Comment{C: $\Theta(|\mathcal X|)$, M: $\Theta(1)$}
        \State $(q_1^{up},\ q_1^{low}) \gets \begin{cases} (q_1, q_1^{low}) & g_1 > 0.25\\
        (q_1^{up}, q_1) & \text{otherwise}\end{cases}$
        \State $(q_3^{up},\ q_3^{low}) \gets \begin{cases} (q_3, q_3^{low}) & g_3 > 0.75\\
        (q_3^{up}, q_3) & \text{otherwise}\end{cases}$
    \EndFor
    \State $(m^{up},\  m^{low}) \gets \left(\frac{q_3^{up} + q_1^{up}}{2},\ \frac{q_3^{low} + q_1^{low}}{2} \right)$
    \State $(s^{up},\ s^{low}) \gets \left(\frac{q_3^{up} - q_1^{low}}{2\Phi^{-1}(0.75)},\ \frac{q_3^{low} - q_1^{up}}{2\Phi^{-1}(0.75)}\right)$\\
    \Return $(m^{up}, m^{low}, s^{up}, s^{low})$
    \end{algorithmic}
\end{algorithm}

\begin{algorithm}
    \caption{\alite{}-I-S(earch)W(indow)}\label{alg:NIE_I_Search_Window}
    \begin{algorithmic}
        \Require $\mu_F, \sigma_F$
        \State $q_1^{low} \gets \mu_F^{min} + \sigma_F^{min} \Phi^{-1}(0.25^{1/|\mathcal X|})$
        \State $q_1^{up} \gets \mu_F^{max} + \sigma_F^{max} \Phi^{-1}(0.25^{1/|\mathcal X|})$
        \State $q_3^{low} \gets \mu_F^{min} + \sigma_F^{min} \Phi^{-1}(0.75^{1/|\mathcal X|})$
        \State $q_3^{up} \gets \mu_F^{max} + \sigma_F^{max} \Phi^{-1}(0.75^{1/|\mathcal X|})$\\
        \Return $(q_1^{low}, q_1^{up}, q_3^{low}, q_3^{up})$\\
    \end{algorithmic}
\end{algorithm}

\begin{algorithm}
    \caption{\alite{}-II-S(earch)}\label{alg:NIE_II_Search}
    \begin{algorithmic}
        \Require $d, m^{up}, m^{low}, s^{up}, s^{low}, \mu_F, \sigma_F$
        \State $(\tilde m^{up}, \tilde m^{low}) \gets (\max(m^{up}, \mu_{F}^{max}), \max(m^{low}, \mu_{F}^{max}))$
        \State $(\tilde s^{up}_x, \tilde s^{low}_x)_{x \in \mathcal X} \gets (\min(s^{up}, \sigma_{F_x}), \min(s^{low}, \sigma_{F_x}))_{x \in \mathcal X}$
        \State $(q_{x,1}^{low}, q_{x,1}^{up}, q_{x,3}^{low}, q_{x,3}^{up})_{x \in \mathcal X} \gets \text{\alite{}-II-SW}(\tilde m^{up}, \tilde m^{low}, \tilde s^{up}, \tilde s^{low}, \mu_F, \sigma_F)$\\
        \Comment{C: $\Theta(|\mathcal X|)$, M: $\Theta(|\mathcal X|)$}
    \For{$1, \ldots, d$}
        \State $(q_{x,1})_{x \in \mathcal X} \gets ((q_{x,1}^{up} + q_{x,1}^{low})/2)_{x \in \mathcal X}$ \Comment{C: $\Theta(|\mathcal X|)$, M: $\Theta(|\mathcal X|)$}
        \State $(q_{x,3})_{x \in \mathcal X} \gets ((q_{x,3}^{up} + q_{x,3}^{low})/2)_{x \in \mathcal X}$ \Comment{C: $\Theta(|\mathcal X|)$, M: $\Theta(|\mathcal X|)$}
        \State $(g_{x,1}^{up})_{x \in \mathcal X} \gets (\max(\Phi(\frac{q_{x,1} - \tilde m^{low}}{\tilde s^{up}_x}) / \Phi(\frac{q_{x,1} - \mu_{F_x}}{\sigma_{F_x}}), \Phi(\frac{q_{x,1} - \tilde m^{low}}{\tilde s^{low}_x}) / \Phi(\frac{q_{x,1} - \mu_{F_x}}{\sigma_{F_x}}))_{x \in \mathcal X}$\\
        \Comment{C: $\Theta(|\mathcal X|)$, M: $\Theta(|\mathcal X|)$}
        \State $(g_{x,1}^{low})_{x \in \mathcal X} \gets (\min(\Phi(\frac{q_{x,1} - \tilde m^{up}}{\tilde s^{up}_x}) / \Phi(\frac{q_{x,1} - \mu_{F_x}}{\sigma_{F_x}}), \Phi(\frac{q_{x,1} - \tilde m^{up}}{\tilde s^{low}_x}) / \Phi(\frac{q_{x,1} - \mu_{F_x}}{\sigma_{F_x}}))_{x \in \mathcal X}$\\
        \Comment{C: $\Theta(|\mathcal X|)$, M: $\Theta(|\mathcal X|)$}
        \State $(g_{x,3}^{up})_{x \in \mathcal X} \gets (\max(\Phi(\frac{q_{x,3} - \tilde m^{low}}{\tilde s^{up}_x}) / \Phi(\frac{q_{x,3} - \mu_{F_x}}{\sigma_{F_x}}), \Phi(\frac{q_{x,3} - \tilde m^{low}}{\tilde s^{low}_x}) / \Phi(\frac{q_{x,3} - \mu_{F_x}}{\sigma_{F_x}}))_{x \in \mathcal X}$\\
        \Comment{C: $\Theta(|\mathcal X|)$, M: $\Theta(|\mathcal X|)$}
        \State $(g_{x,3}^{low})_{x \in \mathcal X} \gets (\min(\Phi(\frac{q_{x,3} - \tilde m^{up}}{\tilde s^{up}_x}) / \Phi(\frac{q_{x,3} - \mu_{F_x}}{\sigma_{F_x}}), \Phi(\frac{q_{x,3} - \tilde m^{up}}{\tilde s^{low}_x}) / \Phi(\frac{q_{x,3} - \mu_{F_x}}{\sigma_{F_x}}))_{x \in \mathcal X}$\\
        \Comment{C: $\Theta(|\mathcal X|)$, M: $\Theta(|\mathcal X|)$}
            \State $(q_{x,1}^{up},\ q_{x,1}^{low})_{x \in \mathcal X} \gets \left(\begin{cases} (q_{x,1}, q_{x,1}^{low}) & g_{x,1}^{low} \geq 0.25\\
            (q_{x,1}^{up}, q_{x_1}) &  g_{x,1}^{up} \leq 0.25\\
            (q_{x,1}^{up}, q_{x_1}^{low}) & \text{otherwise}
            \end{cases}\right)_{x \in \mathcal X}$\\
            \Comment{C: $\Theta(|\mathcal X|)$, M: $\Theta(|\mathcal X|)$}
            \State $(q_{x,3}^{up}, q_{x,3}^{low})_{x \in \mathcal X} \gets \left(\begin{cases} (q_{x,3}, q_{x,3}^{low}) & g_{x,3}^{low} \geq 0.75\\
            (q_{x,3}^{up}, q_{x,3}) & g_{x,3}^{up} \leq 0.75\\
            (q_{x,3}^{up}, q_{x,3}^{low}) & \text{otherwise}
            \end{cases}\right)_{x \in \mathcal X}$\\
            \Comment{C: $\Theta(|\mathcal X|)$, M: $\Theta(|\mathcal X|)$}
    \EndFor
    \State $(m_x^{up},\ m_x^{low})_{x \in \mathcal X} \gets \left(\frac{q_{x,3}^{up} + q_{x,1}^{up}}{2},\ \frac{q_{x,3}^{low} + q_{x,1}^{low}}{2}\right)_{x \in \mathcal X}$
    \State $(s_x^{up},\ s_x^{low})_{x \in \mathcal X} \gets \left(\frac{q_{x,3}^{up} - q_{x,1}^{low}}{2\Phi^{-1}(0.75)},\ \frac{q_{x,3}^{low} - q_{x,1}^{up}}{2\Phi^{-1}(0.75)}\right)_{x \in \mathcal X}$\\
    \Return $(m_x^{up}, m_x^{low}, s_x^{up}, s_x^{low})_{x \in \mathcal X}$
    \end{algorithmic}
\end{algorithm}

\begin{algorithm}
    \caption{\alite{}-II-S(earch)W(indow)}\label{alg:NIE_II_Search_Window}
    \begin{algorithmic}
        \Require $\tilde m^{up}, \tilde m^{low}, \tilde s^{up}, \tilde s^{low}, \mu_F, \sigma_F$
        \State $(q_{x,1}^{low})_{x \in \mathcal X} \gets (\min(\mu_{F_x} - \sqrt{2} \sigma_{F_x}, \max(\frac{\tilde m^{low} + \mu_{F_x}}{2} - \frac{\sigma_{F_x}^2 \ln(2/0.25)}{\tilde m^{low} - \mu_{F_x}}, \tilde m^{low} - \sqrt{\frac{2 \ln(2/0.25)}{1-(\tilde s^{up}_x / \sigma_{F_x})^2}} \tilde s^{up}_x)))_{x \in \mathcal X}$
        \State $(q_{x,1}^{up})_{x \in \mathcal X} \gets \tilde m^{up} + \Phi^{-1}(0.25) \tilde s_x^{low}$
        \State $(q_{x,3}^{low})_{x \in \mathcal X} \gets (\min(\mu_{F_x} - \sqrt{2} \sigma_{F_x}, \max(\frac{\tilde m^{low} + \mu_{F_x}}{2} - \frac{\sigma_{F_x}^2 \ln(2/0.75)}{\tilde m^{low} - \mu_{F_x}}, \tilde m^{low} - \sqrt{\frac{2 \ln(2/0.75)}{1-(\tilde s^{up}_x / \sigma_{F_x})^2}} \tilde s^{up}_x)))_{x \in \mathcal X}$
        \State $(q_{x,3}^{up})_{x \in \mathcal X} \gets \tilde m^{up} + \Phi^{-1}(0.75) \tilde s_x^{up}$\\
        \Return $(q_1^{low}, q_1^{up}, q_3^{low}, q_3^{up})$
    \end{algorithmic}
\end{algorithm}

\clearpage
\section{EXPERIMENTAL DETAILS}

\subsection{\cref{{fig:quadcopter_recall}}}\label{app:fig1_detail}

As a realistic posterior distribution over a large set of candidates, we use the posterior distribution of the final iteration of our quadcopter experiment~(see Appendix~\ref{sec:quadcopter_details} for details).
This posterior distribution over a $4$-dimensional space of feedback control parameters captures our current estimate of the quadcopters' performance under any of those feedback parameters.
Our goal is to select a small set of feedback parameters for final testing that contain the best-performing feedback parameters with high probability.
\Cref{fig:quadcopter_recall} shows that our PoM estimator outperforms previous methods for recall-optimal candidate selection and Figure~\ref{fig:quadcopter_recall_detailed} quantifies the impact of using a faithful PoM estimator over a less-faithful one for this task. We report on expected recall and its standard error.

\begin{figure}[h]
    \centering
         \begin{tabular}{c|c}
              & \textbf{Area Under Curve}\\
             \hline
             \ts{} & $83.7 \pm 3.4$ \% \\
             \ias{} & $83.2 \pm 3.5$ \%\\
             \alite{} & $83.3 \pm 3.5$ \%\\
             \flite{} & $82.9 \pm 3.4$ \%\\
             \est{} & $82.3 \pm 3.4$ \%\\
             \vapor{} & $81.4 \pm 3.3$ \%\\
             \tss{} & $75.0 \pm 2.8$ \%\\
             \ms{} & $67.5 \pm 4.1$ \%\\
         \end{tabular}
         \caption{Faithful estimators to PoM perform marginally better than less faithful ones. As such, the computational efficiency improvements achieved in this work dominate the drop in sample efficiency compared to \ts{}. In contrast, the performance drop to heuristics such as \tss{} and \ms{} is much more pronounced.}

         \label{fig:quadcopter_recall_detailed}
\end{figure}

\subsection{Figure~\ref{fig:entropy_search}}\label{sec:describing_fig_entropy_search}

We sample the objective function $f_{true}$ from a centered Gaussian process on the line segment $[0,1]$ with a squared exponential kernel (length scale $0.02$, amplitude $1.0$). We assume an observation model with independent homoscedastic centered additive Gaussian noise where $\sigma_{noise} = 0.2$. We run calibrated Entropy Search for $50$ steps after evenly discretizing the domain to $|\mathcal X|=250$ points. The experiment is repeated $10$ times and we report on the mean and standard error of the entropy of PoM, which is the objective that Entropy Search seeks to minimize. We use five samples to condition on hypothetical observations. For each conditioning, the PoM entropy reduction is estimated either with \flite{} for convergence parameter $\epsilon = 1/(10 | \mathcal X |)$, or using \ts{}. Running \ts{} to convergence would lead to an exploding runtime, so we always use the fixed budget of $4$ samples. Note that we cannot decrease the cost much further, since for a single Monte Carlo sample the entropy would always degenerate to 0. Even so, on an NVIDIA TITAN RTX GPU a full run of Entropy Search using \lite{} takes just $15.4$ seconds, whereas using the \ts{} backend, it takes $8.2$ minutes. This difference becomes much more pronounced as the size of the Gaussian reward vector is increased.

\subsection{Estimating the State of Large-Scale Bayesian Optimization}\label{sec:describing_fig_large_scale_exp}

\begin{figure}
    \centering
    \incplt[0.4\linewidth]{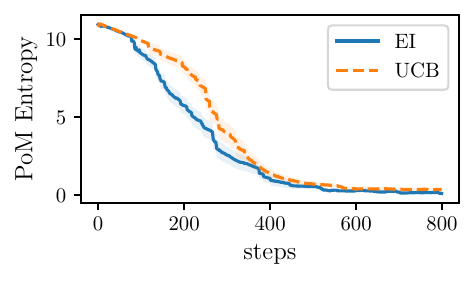}
    \vspace{-10pt}
    \caption{PoM entropy estimation with \lite{} allows tracking the state of large-scale Bayesian optimization. On an NVIDIA A100 GPU, \lite{} reduces time of computation from more than $21$ days to $30$ seconds.}
    \label{fig:large_scale_exp}
\end{figure}

To demonstrate that \lite{} can truly be scaled to large-scale industrial settings, we run uncalibrated Bayesian optimization with a linear kernel for $800$ steps using both the GP-upper confidence bound~\parencite{srinivas2009gaussian} (UCB) and the expected improvement (EI) acquisition function. Here, the ground-truth objective function is described by a (random) hyperplane in $1'000$ dimensions, sampled at $10'000$ points on the unit-sphere. A comparison between \lite{} and a ground-truth surrogate for PoM such as \ts{} is not possible here: even estimation under the \ia{} would require $500$ hours ($21$ days) on an NVIDIA A100 GPU to compute PoMs across the BO-path for a single seed. In contrast, \lite{} only takes a few seconds (about $30$ seconds), i.e., it is about $60'000$ times faster. Our results confirm that \lite{} can be used to interpret the state of convergence of Bayesian optimization and to compare competing optimization schemes in terms of their information-theoretic performance, particularly in large-scale settings where standard approaches fail.

\subsection{Figure~\ref{fig:runtime_over_domain_size}}\label{sec:describing_fig_runtime_over_domain_size}
We densely discretise the drop-wave function $f_{true}(x_1, x_2) := (1 + \cos(12 \sqrt{x_1^2 + x_2^2}))/((x_1^2 + x_2^2) / 2 + 2)$ on the rectangle $[-5, 4]^2$ using a grid with $300^2 = 90'000$ nodes. To obtain different domain sizes, we subsample the grid uniformly at random (without repetition). Next, we run Bayesian optimisation using the \textit{expected improvement} (over best observation) acquisition function. The posterior is derived based on a Gaussian process prior fitted at each step with marginal likelihood maximisation (we fit the length scale and amplitude of a Matern $5/2$ kernel, the constant mean function, and $\sigma_{noise}$). To jump start the kernel selection, we make $50$ random observations prior to starting Bayesian optimisation. We assume additive centred Gaussian noise with $\sigma_{noise} = 0.1$. We report on the mean and standard deviation of the runtime averaged across $100$ steps of Bayesian optimisation for $5$ seeds. All estimators use $\alpha = 1$. We cancel runs exceeding a computational budget of $6$ hours ($216$ seconds per step), which is why \ts{} and \est{} do not have values at all time steps.

\subsection{Figure~\ref{fig:tv_distance_for_BO_of_GP_sample}}\label{sec:describing_fig_tv_distance_for_BO_of_GP_sample}
\(f_{true}\) is sampled from a centred Gaussian process \(\mathcal{GP}\) with squared-exponential kernel (length scale $0.005$, amplitude $1.0$) on the interval $[0,1]$ discretised with $|\mathcal X| = 300$ points. The prior belief over $f_{true}$ coincides with \(\mathcal{GP}\). A Bayesian optimisation scheme according to Thompson sampling is run for $200$ steps with observations $Y_x = f_{true}(x) + \varepsilon$ for i.i.d. $\varepsilon \sim \mathcal N(0, 0.1^2)$. All estimators are ensured to converge to within $\epsilon = 1/(10 \cdot | \mathcal X|)$ of their analytical expressions. We report on the mean and standard error of TV-distance to the ground-truth PoM (estimated using \ts{}) based on $50$ different seeds of optimisation. Figure~\ref{fig:poo_distance_squared_exponential_setup} illustrates the setup along with a possible set of estimated PoMs.

\begin{figure}[ht]
    \centering
    \begin{subfigure}{0.48\linewidth}
        \centering
        \includegraphics[width=\linewidth]{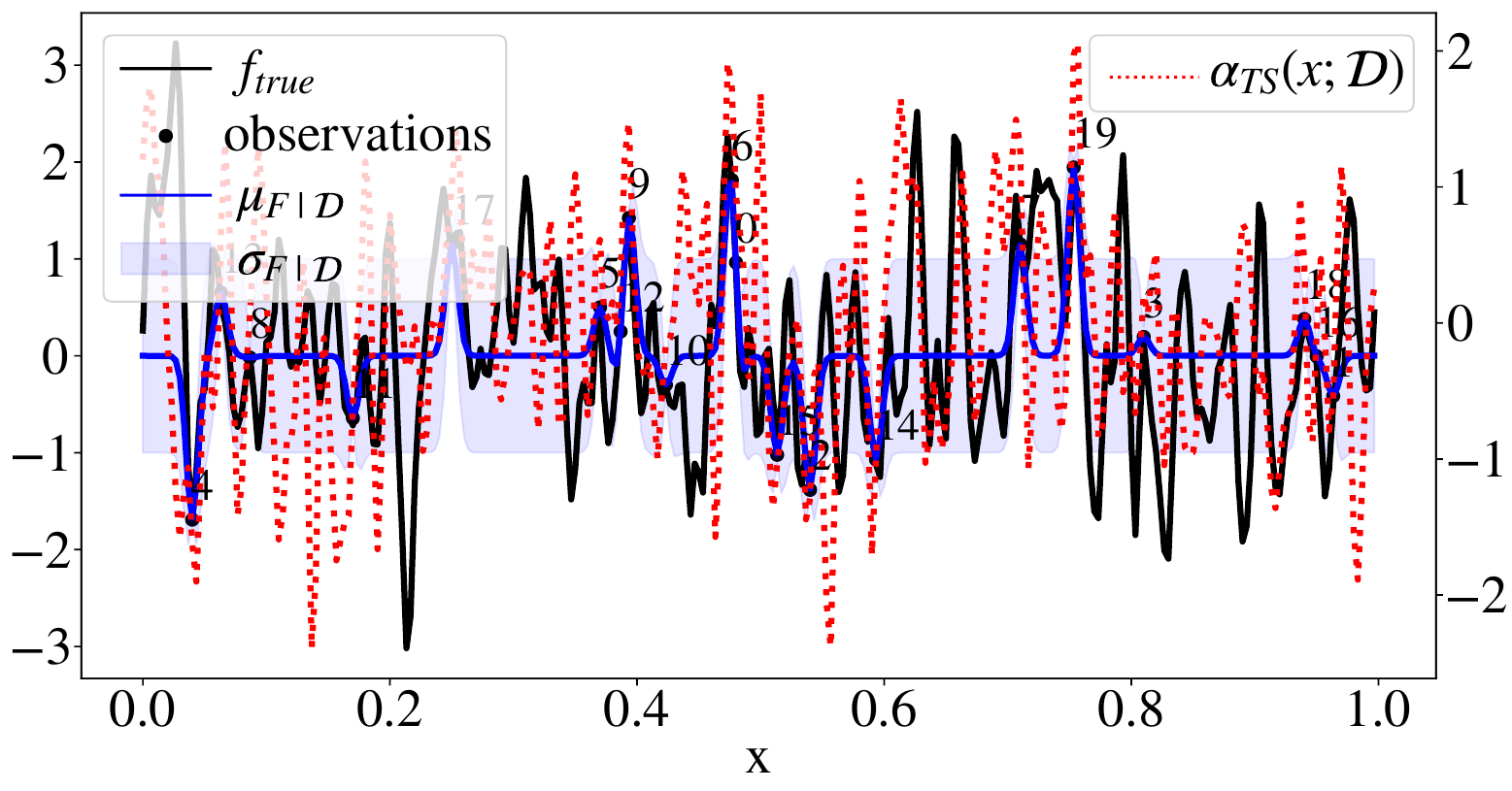}
        \caption{Example $f_{true}$ with associated $p(f|\mathcal D)$ and $\alpha_{TS}(x;\mathcal D)$ after $20$ queries to $f_{true}$.}
    \end{subfigure}\hfill
    \begin{subfigure}{0.48\linewidth}
        \centering
        \includegraphics[width=\linewidth]{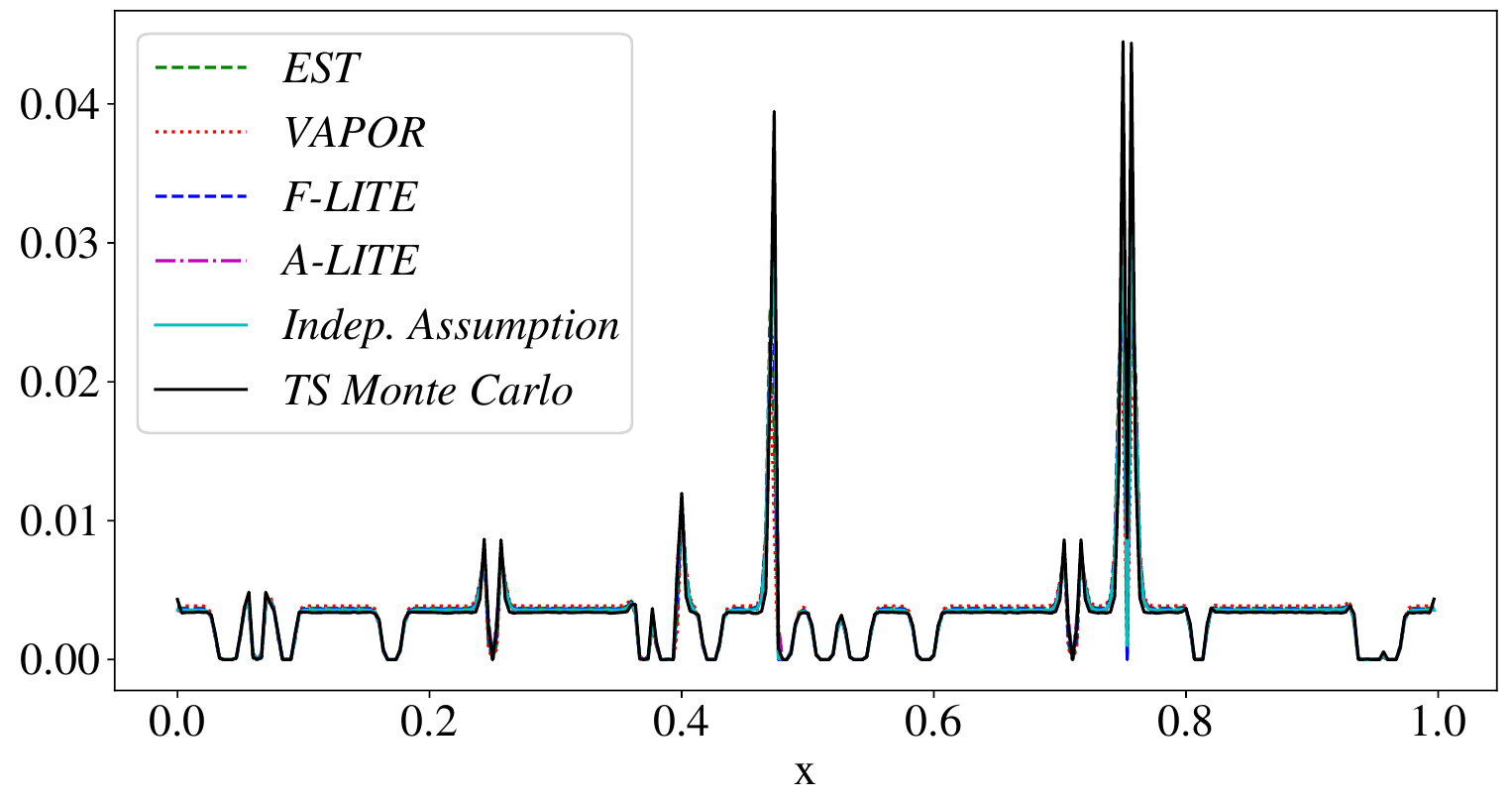}
        \caption{$\mathbb P[x \in X^* | \mathcal D]$ according to different PoM estimators after $20$ queries to example $f_{true}$}
    \end{subfigure}
    \caption{Illustration of the setup for Figure~\ref{fig:tv_distance_for_BO_of_GP_sample}.}
    \label{fig:poo_distance_squared_exponential_setup}
\end{figure}

\subsection{Figure~\ref{fig:tv_distance_quadcopter}}\label{sec:quadcopter_details}

Based on a simulator of the dynamics of a quadcopter, we are able to measure how close the quadcopter got to stabilisation at a target position when starting at a separate fixed location.
We use the same experimental setup as \parencite{hübotter2024transductiveactivelearningtheory}, with the quadcopter simulation of \parencite{quadcopter_sim2023github}.
The quadcopter is steered through a controller with $8$ degrees of freedom, which describe the unknown perturbation to the system. The task is to use Bayesian optimisation to identify the disturbance parameters through feedback from the simulator (with additive centred Gaussian noise at a standard deviation of $\sigma_{noise} = 0.1$). The unknown perturbation is sampled element-wise according to a $\chi^2$-distribution, resulting in a distribution over $f_{true}$. Due to $4$ degrees of freedom removed using a heuristic, Bayesian optimisation must be performed in $4$-dimensional space. To obtain a tractably finite domain, we sample $400$ discrete points uniformly at random in the hypercube $[0,20]^4$. We run Bayesian optimisation for $175$ steps using the \textit{expected improvement} (over best observation) acquisition function. The posterior is derived based on a Gaussian process prior fitted at each step with marginal likelihood maximisation (we fit the length scale and amplitude of a Matern $5/2$ kernel, the constant mean function, and $\sigma_{noise}$). To jump start the kernel selection, we make $25$ random observations prior to starting Bayesian optimisation. We also leave out the first $5$ steps of Bayesian optimisation (warmup steps), during which the estimation of the parameters of the Gaussian process prior are highly volatile. All reported PoM estimators are run to $\epsilon$-convergence for $\epsilon = 1/{|\mathcal X|}$. The ground-truth is estimated using \ts{} with $\epsilon = 1/(10 \cdot | \mathcal X|)$.

\subsection{Figure~\ref{fig:accuracy_over_runtime}}\label{sec:drop-wave_during_accuracy_runtime_details}
We coarsely discretise the drop-wave function $f_{true}(x_1, x_2) := (1 + \cos(12 \sqrt{x_1^2 + x_2^2}))/((x_1^2 + x_2^2) / 2 + 2)$ on the rectangle $[-2.5, 2]^2$ using a grid with $25^2 = 625$ nodes. We run Bayesian optimisation for $30$ steps using the \textit{expected improvement} (over best observation) acquisition function. The posterior is derived based on a Gaussian process prior fitted at each step with marginal likelihood maximisation (we fit the length scale and amplitude of a Matern $5/2$ kernel, the constant mean function, and $\sigma_{noise}$). To jump start the kernel selection, we make $10$ random observations prior to starting Bayesian optimisation. We assume additive centred Gaussian noise with $\sigma_{noise} = 0.1$.
Figure~\ref{fig:accuracy_runtime_operint_points_problem_setting} shows the posteriors and probabilities of maximality belonging to step $10$ of Bayesian optimisation at seed $0$.

\begin{figure}[ht]
    \centering
    \includegraphics[width=\linewidth]{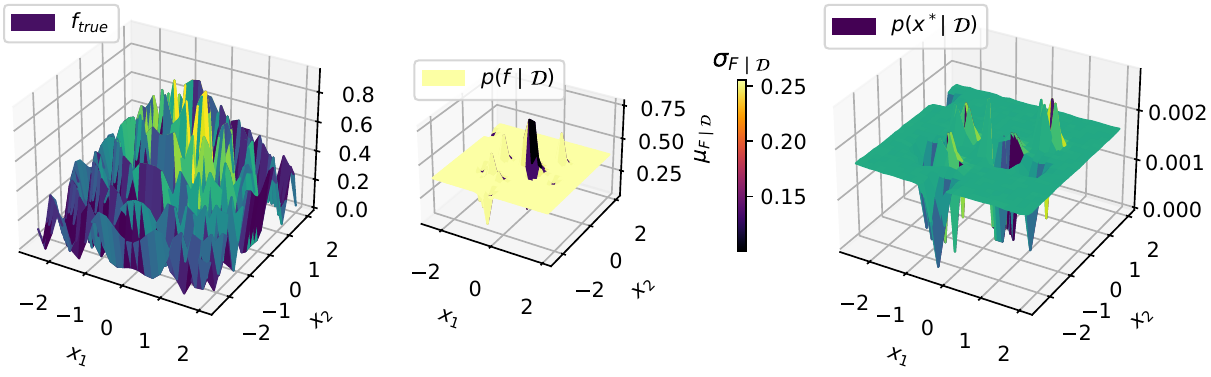}
    \caption{Problem setting for the accuracy/runtime operating points plot in Figure~\ref{fig:accuracy_over_runtime}}
    \label{fig:accuracy_runtime_operint_points_problem_setting}
\end{figure}

We report on the mean and standard error of the runtime and TV-distance averaged across steps $11-30$ of Bayesian optimisation for $5$ different seeds (the first $10$ warm-up steps are removed to obtain a more decisive picture). To evaluate the estimators under different convergence requirements, $\alpha = 1/(\epsilon \cdot | \mathcal X|)$ is swept through $\{0.01, 0.03, 0.1, 0.3, 1.0, 3.0, 10.0\}$. The ground-truth is estimated using \ts{} with $\alpha = 10.0$, which runs in $915$ seconds (per optimisation step) on an NVIDIA TITAN RTX GPU.%

\subsection{Total Variation Distance}\label{sec:tv_distance}
The total variation distance $d_{TV}$ is the central fidelity metric in our experiments. Formally, it is defined as
\begin{defn}[Total variation distance]
    Let $P,Q$ be probability distributions over a measurable space \((\Omega, \mathcal E)\). Then the total variation distance between \(P\) and \(Q\) is defined as
    \begin{equation*}
        d_{TV}(P, Q) := \sup_{\mathcal A \in \mathcal E} | P(A) - Q(A) |.
    \end{equation*}
\end{defn}
Alternatively, it corresponds to the metric derived from the $L^1$ norm over the space of probability mass functions:
 \begin{restatable}[Total variation distance as $L^1$-norm induced metric]{prop}{tvDistanceVsLOneNorm}\label{prop:total_variation_distance_vs_l1_norm}
     Let $P,Q$ be probability measures over a measurable space \((\Omega, \mathcal E)\) and \(\mu\) a \(\sigma\)-finite measure over \((\Omega, \mathcal E)\) s.t. \(P,Q \ll \mu\). Then \(d_{TV}\) can be characterised by
     \begin{equation*}
         d_{TV}(P,Q) = \frac{1}{2}\left\| \frac{dP}{d\mu} - \frac{dQ}{d\mu} \right\|_{L^1(\Omega, \mathcal E, \mu)},
     \end{equation*}
     where \(\tfrac{dP}{d\mu}\) and \(\tfrac{dQ}{d\mu}\) denote Radon-Nykodym derivatives of \(P\) and \(Q\) with respect to the base measure \(\mu\). Important cases are when \(\mu\) is the Lebesgue measure or when it is the counting measure leading to a formulation for probability density functions and probability mass functions, respectively.
 \end{restatable}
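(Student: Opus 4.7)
\medskip

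The plan is to exploit the Hahn-type decomposition of the signed measure $P-Q$ induced by the density difference, use that $P$ and $Q$ both integrate to $1$ on $\Omega$ to equate the "positive" and "negative" masses, and then recognize that the supremum in the definition of $d_{TV}$ is attained on this decomposition.

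First, denote $p := dP/d\mu$ and $q := dQ/d\mu$, which exist by the Radon--Nikodym theorem since $P, Q \ll \mu$ and $\mu$ is $\sigma$-finite. Define the measurable set $A := \{\omega \in \Omega : p(\omega) \geq q(\omega)\} \in \mathcal{E}$, so that $p - q$ is nonnegative on $A$ and nonpositive on $A^c$.

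Next, for an arbitrary $B \in \mathcal{E}$, I would split the integral $P(B) - Q(B) = \int_B (p - q)\, d\mu$ into contributions from $B \cap A$ and $B \cap A^c$. Discarding the nonpositive $A^c$-contribution and enlarging $B \cap A$ to $A$ yields the upper bound $P(B) - Q(B) \leq \int_A (p - q)\, d\mu$; by symmetry, $Q(B) - P(B) \leq \int_{A^c} (q - p)\, d\mu$. A key observation is that since $\int_\Omega (p-q)\, d\mu = P(\Omega) - Q(\Omega) = 0$, the two quantities $\int_A (p-q)\, d\mu$ and $\int_{A^c} (q-p)\, d\mu$ are equal; call their common value $\Delta$. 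Therefore $|P(B) - Q(B)| \leq \Delta$ for every $B$, and choosing $B = A$ shows this bound is attained, so $d_{TV}(P, Q) = \Delta$.

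Finally, I would assemble the $L^1$-norm by splitting on $A$ versus $A^c$:
\begin{equation*}
\left\| p - q \right\|_{L^1(\mu)} = \int_A (p - q)\, d\mu + \int_{A^c} (q - p)\, d\mu = \Delta + \Delta = 2\Delta,
\end{equation*}
which rearranges to $d_{TV}(P, Q) = \frac{1}{2}\|p - q\|_{L^1(\mu)}$ as claimed. The only subtle step is justifying that both summands in the $L^1$ integral equal $\Delta$, which rests on the total-mass identity $P(\Omega) = Q(\Omega) = 1$; this is the key place where the assumption that $P$ and $Q$ are \emph{probability} measures (not merely finite measures with the same total mass) is silently used, and is the main conceptual obstacle to highlight. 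The remainder is routine measure-theoretic bookkeeping, and the special cases of Lebesgue and counting base measures follow immediately by specializing $\mu$.
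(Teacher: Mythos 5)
Your argument is correct and complete: it is the standard Hahn-decomposition argument on the set $A = \{p \geq q\}$, using $P(\Omega) = Q(\Omega)$ to equate the positive and negative parts of $\int (p-q)\,d\mu$ and observing that the supremum defining $d_{TV}$ is attained at $B = A$. The paper itself states this proposition without supplying a proof (it does not appear in the proofs appendix), so there is nothing to compare against; your write-up fills that gap correctly. One small correction to your closing commentary: the step $\int_A (p-q)\,d\mu = \int_{A^c}(q-p)\,d\mu$ requires only that $P$ and $Q$ have \emph{equal total mass}, not that they are probability measures per se, so your parenthetical suggesting that something beyond equal total mass is "silently used" overstates what the argument needs. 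This does not affect the validity of the proof.
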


\section{ADDITIONAL EXPERIMENTS}\label{sec:additional_experiments}

\subsection{Alternative Synthetic Experiments}\label{sec:fidelity_under_synthetic_posteriors_alternative_experiments}
To add to the results presented in Figure~\ref{fig:estimators_tv_distance_to_independent_ground_truth}, we sample $\mu_F$ and $\sigma_F$ according to other distributions. Figure~\ref{fig:comparison_of_fidelity_of_estimators} reports the TV-distance between the estimated PoM and a ground-truth according to the \ia{}. As in Figure~\ref{fig:estimators_tv_distance_to_independent_ground_truth}, $\mu_{F_x}$ and $\sigma_{F_x}$ are sampled i.i.d. across $x \in \mathcal X$. All estimators are ensured to converge to within $\epsilon = 1/(200 \cdot | \mathcal X|)$. The experiments are repeated across $20$ seeds to report the mean and standard error. Notice how \alite{} and \flite{} consistently outperform \est{} and \vapor{} across a variety of $\mu_F$ and $\sigma_F$.

\begin{figure}[ht]
    \centering
    \begin{subfigure}{.285\linewidth}
        \centering
        \incplt[\linewidth]{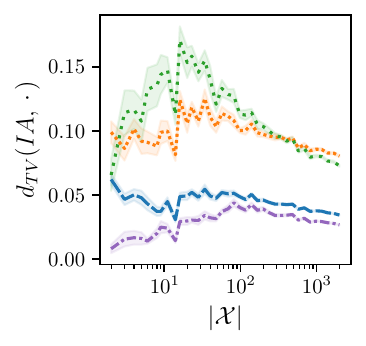}
        \caption{$\mu_{F_x} \sim \mathcal U(0, 5),\ \sigma_{F_x} \sim \mathcal U(\tfrac{1}{2}, 2)$}
    \end{subfigure}
    \begin{subfigure}{.27\linewidth}
        \centering
        \incplt[\linewidth]{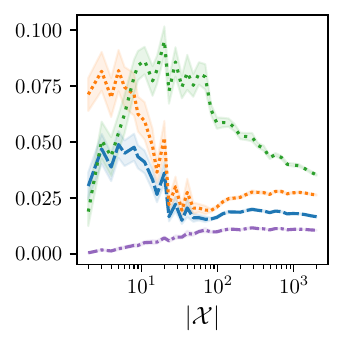}
        \caption{$\mu_{F_x} \sim \mathcal U(0, 5),\ \sigma_{F_x} = \tfrac{1}{2}$}
    \end{subfigure}
    \begin{subfigure}{.27\linewidth}
        \incplt[\linewidth]{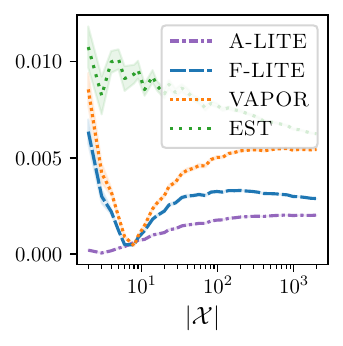}%
        \caption{$\mu_{F_x} \sim \mathcal U(0, \tfrac{1}{10}),\ \sigma_{F_x} = \tfrac{1}{2}$}
    \end{subfigure}
    \caption{TV-distance under alternative synthetic posteriors. As in the main text, \lite{} significantly outperforms competing methods from the literature.}
    \label{fig:comparison_of_fidelity_of_estimators}
\end{figure}

\subsection{$f_{true}$ Sampled from Alternative Gaussian Process}\label{sec:sampling_from_2d_laplacian_GP}
Instead of the one-dimensional Gaussian process with squared exponential kernel that was prominently featured in Figures~\ref{fig:tv_distance_for_BO_of_GP_sample} with a detailed description in Section~\ref{sec:describing_fig_tv_distance_for_BO_of_GP_sample}, we may instead use a two-dimensional Gaussian process with exponential kernel. Accordingly, we sample the test function \(f_{true}\) from a centred Gaussian process \(\mathcal{GP}\) with exponential kernel (length scale $0.1$, amplitude $1.0$) on $[0,1]^2$ discretised to $|\mathcal X| = 400$ points. To ensure calibrated Bayesian optimisation, the prior belief over $f_{true}$ coincides with \(\mathcal{GP}\). We run Bayesian optimisation based on Thompson sampling, where the observations are generated as $Y_x = f_{true}(x) + \varepsilon$ for i.i.d. $\varepsilon \sim \mathcal N(0, 0.1^2)$. Figure~\ref{fig:poo_distances_laplacian_setting} illustrates the setup.

\begin{figure}[ht]
    \centering
    \begin{subfigure}{0.24\linewidth}
        \centering
        \includegraphics[width=\linewidth]{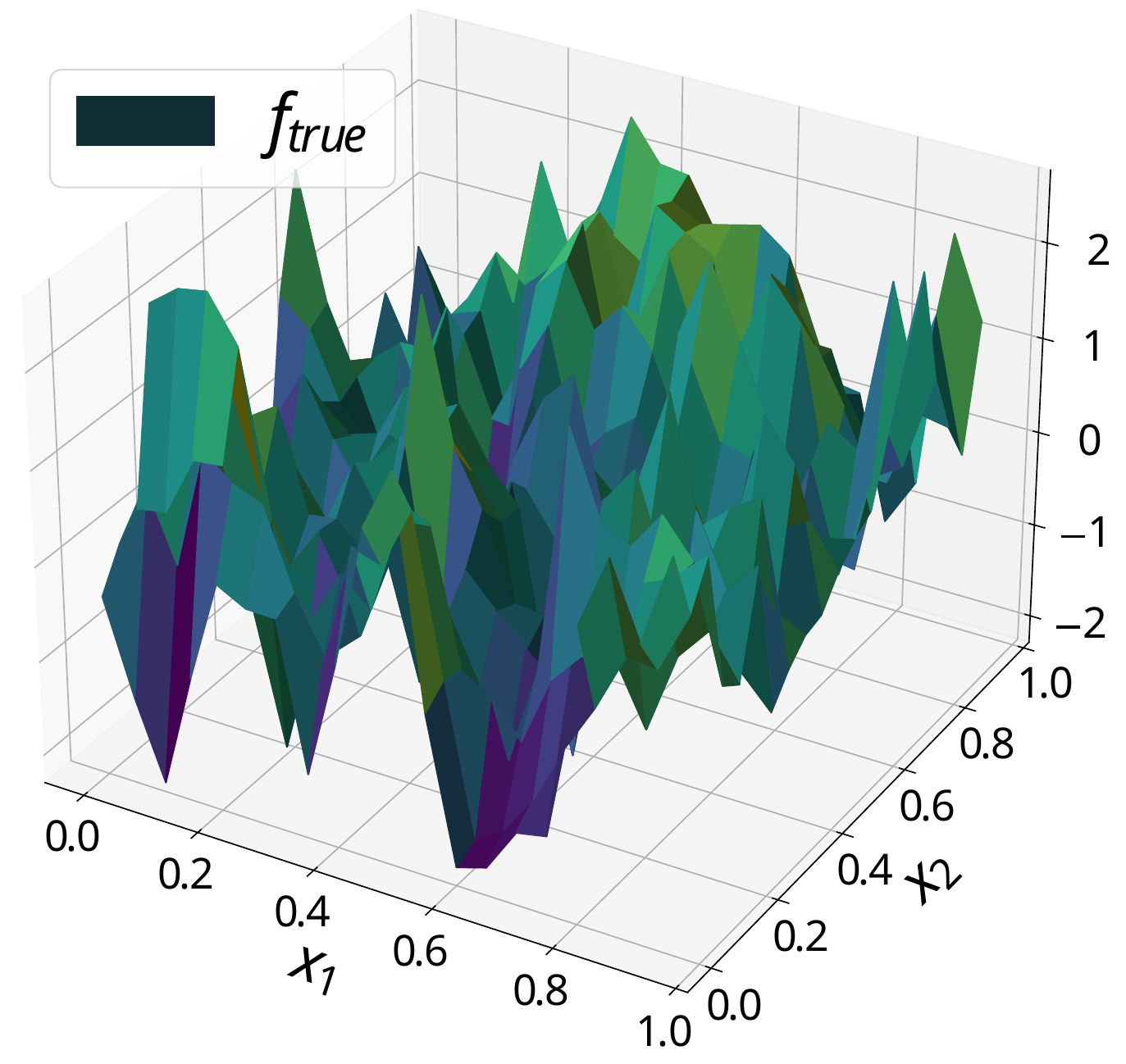}
        \caption{Example $f_{true}$, a sample from $\mathcal{GP}$.}
    \end{subfigure}
    \begin{subfigure}{0.24\linewidth}
        \centering
        \includegraphics[width=\linewidth]{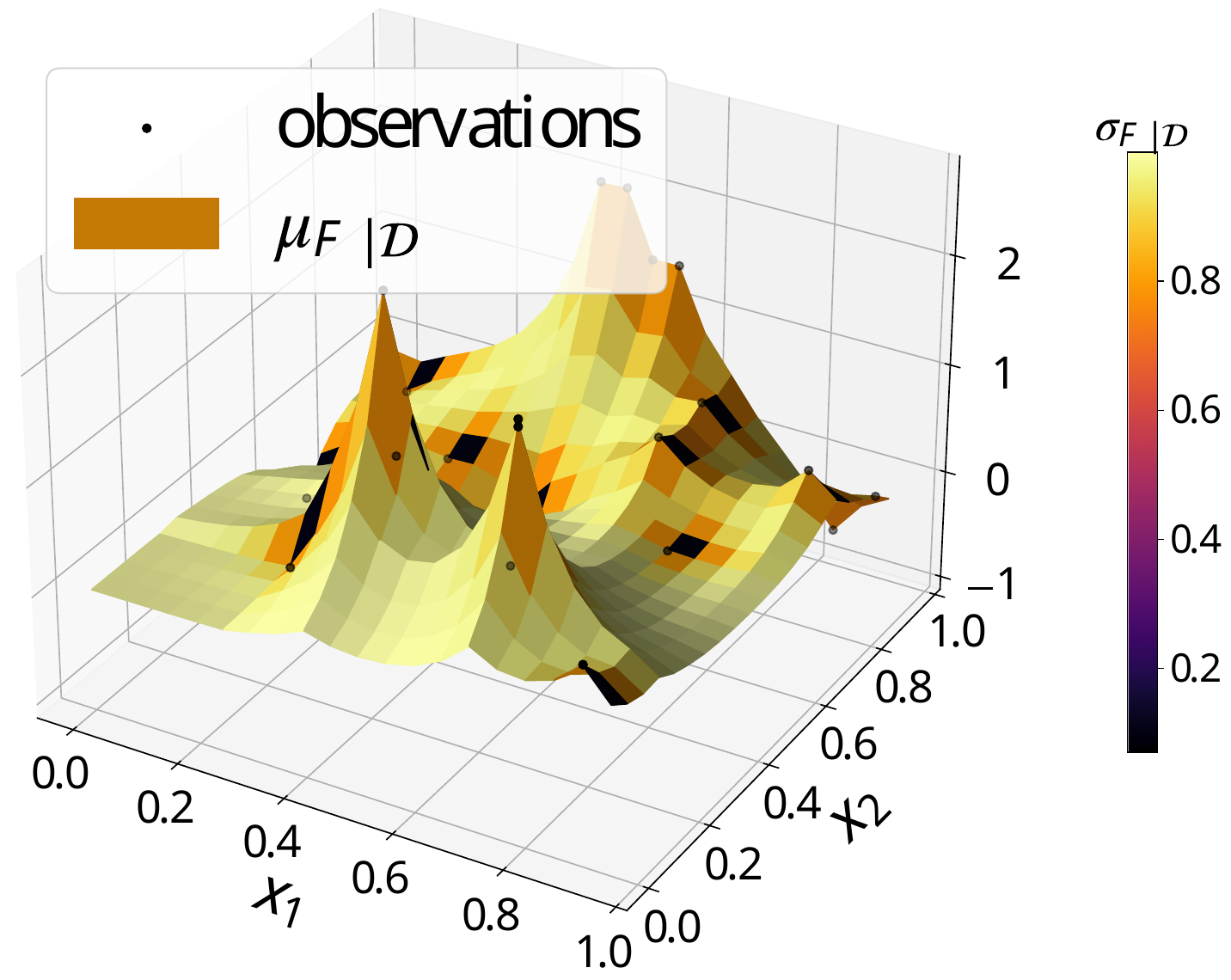}
        \caption{$p(f|\mathcal D)$ after\\ $20$ queries to $f_{true}$.}
    \end{subfigure}
    \begin{subfigure}{0.24\linewidth}
        \centering
        \includegraphics[width=\linewidth]{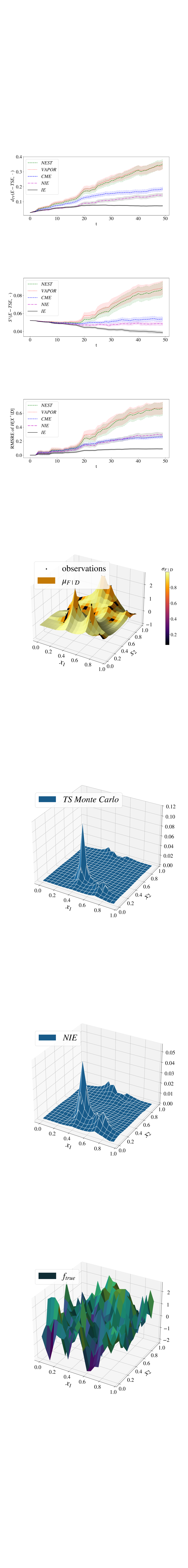}
        \caption{\ts{} after\\ $20$ queries to $f_{true}$.}
    \end{subfigure}\hfill
    \begin{subfigure}{0.24\linewidth}
        \centering
        \includegraphics[width=\linewidth]{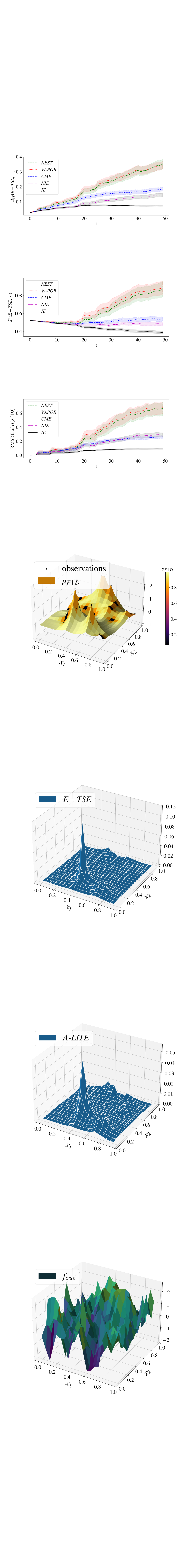}
        \caption{\alite{} after\\ $20$ queries to $f_{true}$}
    \end{subfigure}

    \caption{Illustration of the setup for Bayesian optimisation with $f_{true}$ sampled from $2$-dimensional Gaussian process with exponential kernel.}
    \label{fig:poo_distances_laplacian_setting}
\end{figure}

Figure~\ref{fig:poo_distances_laplacian} reports on the accuracy of the PoM estimators during Bayesian optimisation. We ensure convergence of all estimators to within $\epsilon = 1/({10 \cdot | \mathcal X|})$ of their analytical expressions, including \ts{}, which is used as a ground-truth. To derive the mean and standard error at each step we use $50$ different seeds of Bayesian optimisation.

\begin{figure}[ht]
\begin{subfigure}{.48\linewidth}
        \centering
        \incplt[0.82\linewidth]{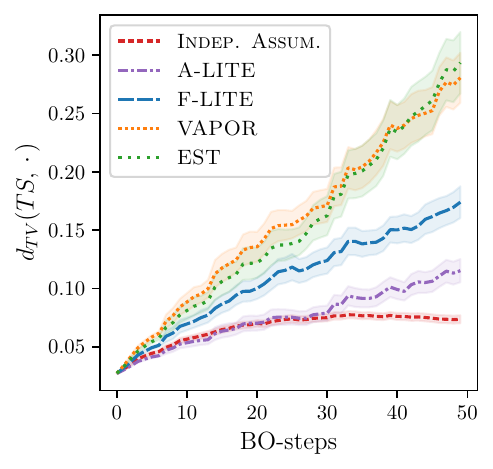}
        \caption{TV-distance.}
    \end{subfigure}
    \begin{subfigure}{.48\linewidth}
\incplt[0.82\linewidth]{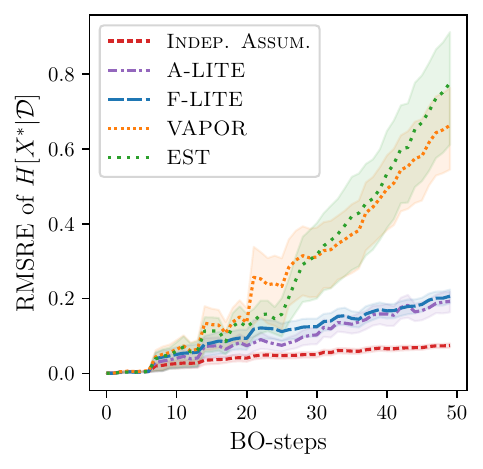}
    \caption{Root mean squared relative error of entropy estimation.}
    \end{subfigure}
    \caption{Fidelity of PoM estimates during Bayesian optimisation with $f_{true}$ sampled from $2$-dimensional Gaussian process with exponential kernel.}
    \label{fig:poo_distances_laplacian}
\end{figure}

\subsection{Drop-Wave}\label{sec:drop-wave_during_sampling}
While the drop-wave function $f_{true}(x_1, x_2) := (1 + \cos(12 \sqrt{x_1^2 + x_2^2}))/((x_1^2 + x_2^2) / 2 + 2)$ is featured in the main text, there we do not report on the evolution during Bayesian optimisation of the PoM fidelity and relative error of entropy estimation. Recall the setting in Section~\ref{sec:drop-wave_during_accuracy_runtime_details}, but now running Bayesian optimisation for $100$ steps instead of $30$. Then Figure~\ref{fig:drop-wave-tv-distance} reports the mean and standard error of the TV-distance to ground-truth PoM during $50$ seeds of Bayesian optimisation. Here, we exclude the first $10$ steps of Bayesian optimisation (warmup steps) and all estimators, including \ts{} for the ground-truth, are ensured to converge to within $\epsilon = 1 / | \mathcal X|$ of their analytical expression

\begin{figure}[ht]
    \centering
    \begin{subfigure}{0.38\linewidth}
        \incplt[\linewidth]{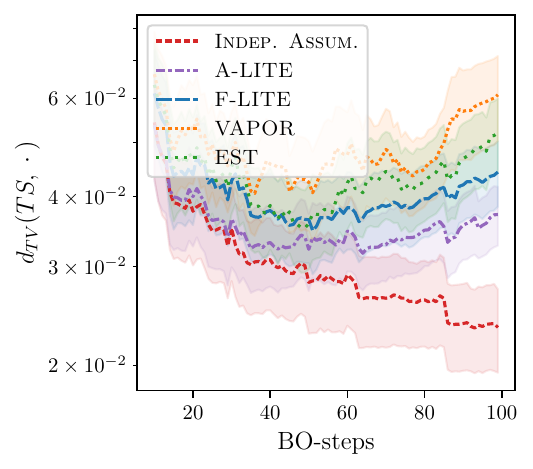}
        \subcaption[]{TV-distance}
        \label{fig:drop-wave-tv-distance}
    \end{subfigure}\hspace{20pt}
    \begin{subfigure}{0.36\linewidth}
        \incplt[\linewidth]{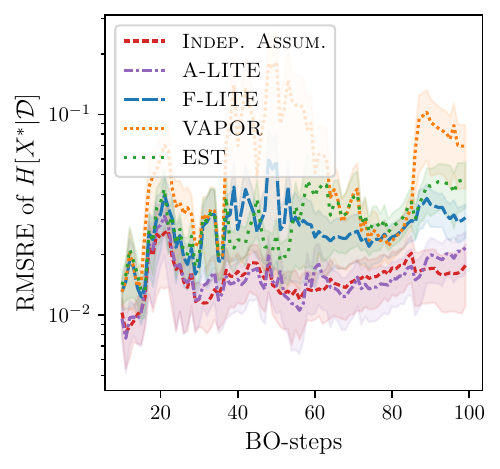}
        \caption{RMSRE of entropy estimation}
    \label{fig:drop-wave-entropy}
    \end{subfigure}
    \caption{Fidelity of PoM estimates during Bayesian optimisation with $f_{true}$ set to drop-wave.}
\end{figure}

Likewise, Figure~\ref{fig:drop-wave-entropy} reports on the mean and standard error of the root mean squared relative error of entropy estimation based on $50$ repetitions of Bayesian optimisation. Still, convergence of all estimators to within $\epsilon = 1/{|\mathcal X|}$ of their analytical expression is ensured, including the estimator for ground-truth (based on \ts{}).

\section{PROOFS}\label{sec:proofs}

\subsection{Assumptions}

\uniqueMaximiser*
\begin{proof}
    Being an assumption, we only have to show the equivalence between almost sure uniqueness and integration to $1$. Denote by \(\mathcal E = \{|\arg\max_{x \in \mathcal X} F_x| = 1\}\). Then
    \begin{align*}
        \sum_{x \in \mathcal X} \mathbb P[x \in X^*] & = \sum_{x \in \mathcal X} \mathbb P[\{x\}=X^*, \mathcal E] + \sum_{x \in \mathcal X} \mathbb P[x \in X^*, \mathcal E^c] \mathbb P[\mathcal E^c ]\nonumber\\
        & = \mathbb P[\mathcal E ] + (1-\mathbb P[\mathcal E ])\sum_{x \in \mathcal X} \mathbb P[x \in X^*, \mathcal E^c]\nonumber\\
        & \geq \mathbb P[\mathcal E ] + 2(1-\mathbb P[\mathcal E ]) = 2 - \mathbb P[\mathcal E ]
    \end{align*}
    with equality if \(\mathbb P[\mathcal E ] = 1\).
\end{proof}

\subsection{Propositions}
\setcounter{prop}{0}
\begin{restatable}{prop}{computingTheIndependenceEstimatorForGaussianProcessesSimplified}\label{cor:computing_the_independence_estimator_for_Gaussian_processes_simplified}
    Let $\tilde F \sim \mathcal N(\mu_F, \mathrm{diag}(\sigma_1^2, \ldots, \sigma_{|\mathcal X|}^2))$, let $\epsilon \in (0, 1/4]$, and define $\tilde \epsilon:= -\Phi^{-1}(2\epsilon)$. Then for
    $$n = \left\lceil \frac{\mu_{F}^{\max} - \mu_{F}^{\min} + 2\;\! \tilde \epsilon\;\! \sigma_{F}^{\max}}{\epsilon \cdot 2 \sqrt{2 \pi} \sigma_{F}^{\min}}\right \rceil + 2 \in \Theta(\sqrt{\log (1/\epsilon)} / \epsilon)$$
    integration points at positions $f_0 = -\infty$, $f_n = \infty$, and
    $f_i = \mu_{F}^{\min} - \tilde \epsilon\;\! \sigma_{F}^{\max}
        + \frac{i-1}{n-2} \left( \mu_{F}^{\max} - \mu_{F}^{\min} + 2\;\! \tilde \epsilon\;\! \sigma_{F}^{\max}\right)$
    for $0 < i < n$, it holds for all $x \in \mathcal X$ that
    $$\Big| \tilde p_x - \sum_{i=0}^{n-1} \frac{g^x(f_{i+1}) + g^x(f_i)}{2} \mathbb P[\tilde F_x \!\in\! (f_i, f_{i+1}]] \Big| \! \leq\! \epsilon.$$
\end{restatable}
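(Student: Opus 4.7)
The plan is to rewrite $\tilde q_x$ as an exact integral of a piecewise-constant surrogate for $g^x$ against the density $\phi_x$ of $\tilde F_x$, and then bound the resulting error contributions separately on the two tails and on the equispaced middle intervals. Adopting the conventions $g^x(-\infty):=0$ and $g^x(\infty):=1$, each term $\frac{g^x(f_i)+g^x(f_{i+1})}{2}\,\mathbb P[\tilde F_x\in(f_i,f_{i+1}]]$ can be written as $\int_{f_i}^{f_{i+1}}\hat L(f)\,\phi_x(f)\,df$, where $\hat L$ is the step function equal to that midpoint on $(f_i,f_{i+1}]$. Hence
\[
\tilde p_x - \tilde q_x \;=\; \int_{-\infty}^{\infty}\!\bigl(g^x(f)-\hat L(f)\bigr)\phi_x(f)\,df,
\]
which I split into the left tail $(-\infty,f_1]$, the middle $(f_1,f_{n-1}]$, and the right tail $(f_{n-1},\infty)$.

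For the tails, the identity $\tilde\epsilon=-\Phi^{-1}(2\epsilon)$ and the definitions $f_1=\mu_F^{\min}-\tilde\epsilon\,\sigma_F^{\max}$, $f_{n-1}=\mu_F^{\max}+\tilde\epsilon\,\sigma_F^{\max}$ give, uniformly in $x$, $\mathbb P[\tilde F_x\leq f_1]\leq\Phi(-\tilde\epsilon)=2\epsilon$ and $\mathbb P[\tilde F_x>f_{n-1}]\leq 2\epsilon$ (using $\sigma_F^{\max}/\sigma_{F_x}\geq 1$ to bound the standardised argument). The same reasoning, applied to any single factor with index $z\neq x$ in the product defining $g^x(f_1)$, yields $g^x(f_1)\leq 2\epsilon$. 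Since $g^x$ is monotone, $|g^x(f)-\hat L(f)|\leq g^x(f_1)/2$ on the left tail and $|g^x(f)-\hat L(f)|\leq (1-g^x(f_{n-1}))/2$ on the right tail. Multiplying by the respective tail probabilities controls the left-tail contribution by $2\epsilon^2$ and the right-tail contribution by at most $\epsilon\cdot(1-g^x(f_{n-1}))$; the latter is in turn kept $\leq\epsilon$ by the trivial bound $1-g^x(f_{n-1})\leq 1$, and can be refined to $O(\epsilon^2)$ when needed via the union bound $1-g^x(f_{n-1})\leq\sum_{z\neq x}(1-\Phi((f_{n-1}-\mu_{F_z})/\sigma_{F_z}))$ combined with the fact that $\mathbb P[\tilde F_x>f_{n-1}]$ is much smaller than $2\epsilon$ unless $\mu_{F_x}$ is close to $\mu_F^{\max}$.

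For the middle, monotonicity of $g^x$ gives $|g^x(f)-\hat L(f)|\leq (g^x(f_{i+1})-g^x(f_i))/2$ on $(f_i,f_{i+1}]$, so bounding $\phi_x\leq 1/(\sqrt{2\pi}\,\sigma_F^{\min})$ and using equal spacing $\Delta=f_{i+1}-f_i$ yields
\[
\Bigl|\int_{f_1}^{f_{n-1}}(g^x-\hat L)\phi_x\,df\Bigr|
\;\leq\;\frac{\Delta}{2\sqrt{2\pi}\,\sigma_F^{\min}}\sum_{i=1}^{n-2}\bigl(g^x(f_{i+1})-g^x(f_i)\bigr)
\;\leq\;\frac{\Delta}{2\sqrt{2\pi}\,\sigma_F^{\min}}
\]
by telescoping; the choice of $n$ ensures $\Delta\leq 2\sqrt{2\pi}\,\sigma_F^{\min}\,\epsilon$, so this is at most $\epsilon$. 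The asymptotic claim $n\in\Theta(\sqrt{\log(1/\epsilon)}/\epsilon)$ then follows from the Mills-ratio estimate $\tilde\epsilon=-\Phi^{-1}(2\epsilon)\in\Theta(\sqrt{\log(1/\epsilon)})$ together with the explicit formula for $n$.

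The main obstacle is bookkeeping of the constants so that the three contributions sum cleanly to the stated bound $\epsilon$ rather than to a small multiple of it: the naive estimates above yield middle $\leq\epsilon$ and right tail $\leq\epsilon$, which is off by a factor of two. The fix is to split the error budget as $\epsilon/2$ on each side, which only rescales the denominator of $n$ by a factor of two and is absorbed into the $\Theta$, and to exploit the union-bound refinement of $1-g^x(f_{n-1})$ above so that the right-tail term degrades to $O(\epsilon^2)$ wherever $\mathbb P[\tilde F_x>f_{n-1}]$ is comparable to $2\epsilon$.
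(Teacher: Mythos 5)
Your decomposition and the core estimate are the same as the paper's (trapezoidal rule, monotonicity of $g^x$, a telescoping sum of its increments, and a uniform $2\epsilon$ bound on each interval probability; the paper routes this through its general Proposition~8). However, there is a genuine flaw in your bookkeeping at the end, and your proposed repair does not prove the stated proposition. You telescope the increments of $g^x$ only over the middle intervals and bound that contribution by the full budget $\epsilon$ (implicitly using $\sum_{i=1}^{n-2}(g^x(f_{i+1})-g^x(f_i))\le 1$), and then separately bound the right tail by another $\epsilon$ via $1-g^x(f_{n-1})\le 1$. That double-counts the range of $g^x$. Your fix — halving the error budget per piece, which rescales the denominator of $n$ — changes the explicit value of $n$ asserted in the proposition, so you would be proving a different statement; and the alternative "union-bound refinement" of the right tail is not a rigorous argument as written ("unless $\mu_{F_x}$ is close to $\mu_F^{\max}$" leaves the hard case unhandled).

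The factor of two is not actually there: the telescoping should be carried out over \emph{all} $n$ intervals, tails included. With $g^x(f_0)=0$ and $g^x(f_n)=1$ one has
\begin{equation*}
\sum_{i=0}^{n-1}\Big|\int_{(f_i,f_{i+1}]}(g^x-\hat L)\,d\mathbb P[\tilde F_x\in\cdot\,]\Big|
\;\le\;\sum_{i=0}^{n-1}\frac{g^x(f_{i+1})-g^x(f_i)}{2}\,\mathbb P[\tilde F_x\in(f_i,f_{i+1}]]
\;\le\;\frac{1}{2}\max_{0\le i\le n-1}\mathbb P[\tilde F_x\in(f_i,f_{i+1}]]\;\le\;\epsilon,
\end{equation*}
since the increments sum to exactly $g^x(\infty)-g^x(-\infty)=1$ and each interval probability (tail or middle) is at most $2\epsilon$ by exactly the bounds you already established. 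No budget splitting, no refinement of $1-g^x(f_{n-1})$, and no change to $n$ is needed; your bound $g^x(f_1)\le 2\epsilon$ is likewise superfluous. With this correction your argument coincides with the paper's proof.
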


\begin{proof}
    The proposition follows from Proposition~\ref{thm:computing_the_independence_estimator} by refining the conditions
    \begin{equation*}
        \max_{x \in \mathcal X}\mathbb P[\tilde F_x \leq f_1] \leq 2 \epsilon, \qquad
        \max_{x \in \mathcal X}\mathbb P[\tilde F_x > f_{l-1}] \leq 2 \epsilon, \qquad \text{and} \qquad
        \max_{x \in \mathcal X} \mathbb P[\tilde F_x \in (f_i, f_{i+1}]] \leq 2 \epsilon \quad \forall i=1, \ldots, l-2
    \end{equation*}
    for the Gaussian case (with \(\epsilon \leq 1/4\)) to the stronger assumptions
    \begin{equation*}
        f_1 \leq \mu_{F}^{min} + \Phi^{-1}(2 \epsilon) \sigma_{F}^{max}, \qquad
        f_{l-1} \geq \mu_{F}^{max} - \Phi^{-1}(2 \epsilon) \sigma_{F}^{max}, \qquad \text{and} \qquad
        f_{i+1} - f_i \leq 2 \sqrt{2 \pi} \sigma_{F}^{min} \epsilon \quad \forall i=1, \ldots, l-2.
    \end{equation*}
    To satisfy these assumptions, we select equidistantly placed \(f_1, \ldots, f_{l-1}\):
    \begin{equation*}
        f_i = \mu_{F}^{min} + \Phi^{-1}(2 \epsilon) \sigma_{F}^{max} + \frac{i-1}{l-2} \left(\mu_{F}^{max} -\mu_{F}^{min} - 2 \Phi^{-1}(2 \epsilon) \sigma_{F}^{max}\right)\ \forall i=1, \ldots, l-1,
    \end{equation*}
    where we ensure sufficiently small steps \(f_{i+1} - f_i\) by taking
    \begin{equation*}
        l = \left\lceil \frac{\mu_{F}^{max} -\mu_{F}^{min} - 2 \Phi^{-1}(2 \epsilon) \sigma_{F}^{max}}{2 \sqrt{2 \pi} \sigma_{F}^{min} \epsilon}\right \rceil + 2.
    \end{equation*}
    Finally, the asymptotic scaling of $n$ follows from Lemma~\ref{lem:asymptotics_of_gaussian_cdf_and_its_inverse}.
\end{proof}

\begin{restatable}[%
]{prop}{logarithmicSearchToObtainGaussianProbabilityOfOptimality}\label{cor:logarithmic_search_to_obtain_gaussian_probability_of_optimality}
    Let $F \sim \mathcal N(\mu_F, \Sigma_F)$, $\sigma_{F_i}^2 = \Sigma_{ii} \ \forall i=1,\ldots, |\mathcal X| > 1$, and \(\kappa^* \in \mathbb R\) s.t. $
        s(\kappa^*) := \sum\nolimits_{x \in \mathcal X} \mathbb P[F_x \geq \kappa^*] = 1$. Then $s(\cdot)$ is cont. monot. decreasing and $\mu_{F}^{min} + \sigma_{F}^{min} \cdot \text{-} \Phi^{-1}(\!\tfrac{1}{|\mathcal X|}\!) \leq \kappa^* \leq \mu_{F}^{max} + \sigma_{F}^{max} \cdot \text{-} \Phi^{-1}(\!\tfrac{1}{|\mathcal X|}\!)$. The search window scales in $\Theta(\sqrt{\log |\mathcal X|})$ and $\, \forall x \in \mathcal X$
    \begin{equation*}
        |\mathbb P[F_x \geq \kappa^*] - \mathbb P[F_x \geq \kappa^k]| \leq \frac{ \mu_{F}^{max} - \mu_{F}^{min} - \Phi^{-1}(|\mathcal X|^{-1}) \sigma_{F}^{max} }{2^{k+1} \sqrt{2\pi} \sigma_{F_x}}, %
    \end{equation*}
    where $\kappa^{k}$ is the estimate at step $k$ according to Algorithm~\ref{alg:CME}. Hence, $k \!=\! \log_2( (\mu_{F}^{max} \!-\! \mu_{F}^{min} \!-\! \Phi^{-1}(|\mathcal X|^{-1}) \sigma_{F}^{max}) / (2 \epsilon))$ $ \in \Theta(\log(\log(|\mathcal X|) / \epsilon))$ steps suffice to ensure that for all $x \in \mathcal X$ it holds that $|\mathbb P[F_x \geq \kappa^*] - \mathbb P[F_x \geq \kappa^k]| \leq \epsilon$.
\end{restatable}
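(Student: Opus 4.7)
The plan is to verify the four claims of the proposition in sequence, none of which individually is hard. First, I would observe that $s(\kappa) = \sum_{x \in \mathcal X} \Phi((\mu_{F_x} - \kappa)/\sigma_{F_x})$ is a finite sum of continuous, strictly decreasing functions of $\kappa$, and hence inherits both properties. Combined with Assumption~\ref{ass:unique_maximiser}, which enforces $s(\kappa^*) = 1$, this already gives existence and uniqueness of $\kappa^*$.

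Next, to sandwich $\kappa^*$ within the stated window, it suffices to verify $s(\kappa_{low}) \geq 1 \geq s(\kappa_{up})$ and invoke the intermediate value theorem together with strict monotonicity. For the lower endpoint I would carry out a termwise bound: writing $\kappa_{low} = \mu_F^{min} - \sigma_F^{min}\Phi^{-1}(1/|\mathcal X|)$, the quantity $(\mu_{F_x} - \kappa_{low})/\sigma_{F_x}$ expands as $(\mu_{F_x} - \mu_F^{min})/\sigma_{F_x} + (\sigma_F^{min}/\sigma_{F_x})\Phi^{-1}(1/|\mathcal X|)$. The first summand is nonnegative, and the second is at least $\Phi^{-1}(1/|\mathcal X|)$ since $\sigma_F^{min}/\sigma_{F_x} \in (0,1]$ and $\Phi^{-1}(1/|\mathcal X|) \leq 0$ for $|\mathcal X| \geq 2$. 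Monotonicity of $\Phi$ then gives each summand in $s(\kappa_{low})$ a value of at least $1/|\mathcal X|$, so the full sum is at least $1$. The upper endpoint is dispatched symmetrically using $\sigma_F^{max}/\sigma_{F_x} \geq 1$. Expanding $\kappa_{up} - \kappa_{low}$ gives $(\mu_F^{max} - \mu_F^{min}) - (\sigma_F^{max} - \sigma_F^{min})\Phi^{-1}(1/|\mathcal X|)$, and invoking the standard asymptotic $-\Phi^{-1}(1/|\mathcal X|) \sim \sqrt{2 \log |\mathcal X|}$ (Lemma~\ref{lem:asymptotics_of_gaussian_cdf_and_its_inverse}) yields the advertised $\Theta(\sqrt{\log |\mathcal X|})$ scaling.

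For the final error bound, after $k$ halving steps of binary search the bracketing interval has length at most $(\kappa_{up} - \kappa_{low})/2^k$, so $|\kappa^k - \kappa^*| \leq (\kappa_{up} - \kappa_{low})/2^{k+1}$. Since $\phi$ attains maximum $1/\sqrt{2\pi}$, the map $\kappa \mapsto \mathbb P[F_x \geq \kappa] = \Phi((\mu_{F_x} - \kappa)/\sigma_{F_x})$ is Lipschitz with constant $1/(\sqrt{2\pi}\sigma_{F_x})$, delivering the pointwise error bound immediately. Requiring this bound to be at most $\epsilon$ and solving for $k$, then absorbing the $\Theta(\sqrt{\log |\mathcal X|})$ window size into the logarithm, gives $k \in \Theta(\log(\log(|\mathcal X|)/\epsilon))$.

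There is no genuinely hard step; the one place that needs care is the termwise sandwich in the second paragraph, where one has to track the sign of $\Phi^{-1}(1/|\mathcal X|)$ and the direction of monotonicity in $\sigma_F^{min}/\sigma_{F_x}$ (resp. $\sigma_F^{max}/\sigma_{F_x}$) in order not to accidentally reverse an inequality. Once this bookkeeping is arranged, the rest of the proof reduces to a Lipschitz estimate and the cited quantile asymptotic.
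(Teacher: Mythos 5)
Your proposal is correct and follows essentially the same route as the paper: continuity/monotonicity of $s$, a bound on the search window, the Lipschitz constant $1/(\sqrt{2\pi}\sigma_{F_x})$ of $\kappa \mapsto \Phi((\mu_{F_x}-\kappa)/\sigma_{F_x})$, standard halving of the bracketing interval, and the quantile asymptotic $-\Phi^{-1}(1/|\mathcal X|) \sim \sqrt{2\log|\mathcal X|}$. The only (cosmetic) difference is that you derive the window by directly checking $s(\kappa_{low}) \geq 1 \geq s(\kappa_{up})$ termwise, whereas the paper argues by contraposition from the sandwich $|\mathcal X|\min_z \mathbb P[F_z \geq \kappa] \leq s(\kappa) \leq |\mathcal X|\max_z \mathbb P[F_z \geq \kappa]$ (its Lemma on the logarithmic search window); the two are equivalent.
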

\begin{proof}
    This proposition follows swiftly from Lemma~\ref{lem:logarithmic_search_to_obtain_probability_of_optimality} and Lemma~\ref{lem:rate_of_convergence_of_binary_search_to_obtain_probability_of_optimality} by restricting our attention from general stochastic processes to Gaussian processes, i.e. by using \(\mathbb P[F_x \geq \kappa] = \Phi(\tfrac{\mu_{F_x} - \kappa}{\sigma_{F_x}})\) with Lipschitz constant $1/({\sqrt{2\pi} \sigma_{F_x}})$. A direct consequence of Equation~\eqref{eq:logarithmic_search_to_obtain_probability_of_optimality_suggested_conditions_constraining_relevant_region} in Lemma~\ref{lem:logarithmic_search_to_obtain_probability_of_optimality} is that \(\kappa^* \leq \mu_{F}^{max} - \Phi^{-1}(1/|\mathcal X|) \sigma_{F}^{max}\), since otherwise \(\mathbb P[F_z \geq \kappa^*] < \tfrac{1}{|\mathcal X|}\) for all \(z \in \mathcal X\). Similarly, \(\kappa^* \geq \mu_{F}^{min} - \Phi^{-1}(1/|\mathcal X|) \sigma_{F}^{min}\) since otherwise \(\mathbb P[F_z \geq \kappa^*] > \tfrac{1}{|\mathcal X|}\) for all \(z \in \mathcal X\). So, we have proven the validity of the initialisation of the logarithmic search window. The asymptotic behaviour of the search window follows immediately from Lemma~\ref{lem:asymptotics_of_gaussian_cdf_and_its_inverse}. The error bounds after running $k$ steps of binary search follow from Lemma~\ref{lem:rate_of_convergence_of_binary_search_to_obtain_probability_of_optimality} when taking into account the Lipschitz constant of the Gaussian cdf.
\end{proof}

\gradientsOfCME*
\begin{proof}
    According to the chain rule of differentiation we have
    \begin{equation*}
        \frac{d p_\theta(x)}{d\theta_i} = \frac{d \Phi(\frac{\mu_{F_x} - \kappa^*}{\sigma_{F_x}})}{d\theta_i} = \phi(\frac{\mu_{F_x} - \kappa^*}{\sigma_{F_x}})\frac{d}{d\theta_i} \frac{\mu_{F_x} - \kappa^*}{\sigma_{F_x}}.
    \end{equation*}
    Specialising $\theta_i$ to either $\mu_{F_z}$ or $\sigma_{F_z}$, we get
    \begin{align}\label{eq:cme_derivatives_first_expression}
        \frac{d p_\theta(x)}{d\mu_{F_z}} & = \phi(\frac{\mu_{F_x} - \kappa^*}{\sigma_{F_x}}) \frac{\mathds{1}_{x=z} - \frac{d\kappa^*}{d\mu_{F_z}}}{\sigma_{F_x}}\nonumber\\
        \frac{d p_\theta(x)}{d\sigma_{F_z}} & = \phi(\frac{\mu_{F_x} - \kappa^*}{\sigma_{F_x}}) \frac{-\frac{d\kappa^*}{d\sigma_{F_z}} \sigma_{F_x} + (\kappa^* - \mu_{F_x}) \mathds{1}_{x=z}}{\sigma_{F_x}^2}.
    \end{align}
    So, we are only left to find an expression for $\tfrac{d\kappa^*}{d\mu_{F_z}}$ and $\tfrac{d\kappa^*}{d\sigma_{F_z}}$. To that end, notice how $\kappa^*$ is an implicit function of $\theta = (\mu_{F}, \sigma_{F}) \in \mathbb R^{2|\mathcal X|}$. Indeed, $\kappa^*$ was defined as the unique real number (dependent on $\theta$) such that
    \begin{equation*}
        g(\theta,\kappa^*) := \sum_{x \in \mathcal X} \Phi(\frac{\mu_{F_x} - \kappa^*}{\sigma_{F_x}}) - 1 \overset{!}{=} 0,
    \end{equation*}
    where $g$ is a continuously differentiable function. We may then use the multi-variate chain rule to derive an explicit formula for $\tfrac{d \kappa^*(\theta)}{d\theta_i}$:
    \begin{gather}
        \theta \mapsto \frac{d}{d \theta_i} \overbrace{g(\theta, \kappa^*(\theta))}^{\overset{!}{=} 0} = \frac{d g(\theta,b)}{d \theta} \vert_{b = \kappa^*(\theta)} \frac{d\theta}{d\theta_i} + \frac{d g(\theta,b)}{d b}\vert_{b = \kappa^*(\theta)} \frac{d \kappa^*(\theta)}{d \theta_i} \overset{!}{\equiv} 0\nonumber\\
        \iff\nonumber\\
         \frac{d \kappa^*(\theta)}{d \theta_i} = - \frac{d g(\theta_1, \ldots, \theta_{2 |\mathcal X|},b)}{d \theta_i} \vert_{b = \kappa^*(\theta)} \big/ \frac{d g(\theta,b)}{d b}\vert_{b = \kappa^*(a)}.\label{eq:derivative_of_CME}
    \end{gather}
    Next, we evaluate Equation~\eqref{eq:derivative_of_CME}  for  $\tfrac{d\kappa^*}{d\mu_{F_z}}$ and $\tfrac{d\kappa^*}{d\sigma_{F_z}}$, which result in
    \begin{align}
        \frac{d\kappa^*}{d\mu_{F_z}} & = \phi(\frac{\mu_{F_z} - \kappa^*}{\sigma_{F_z}})\frac{1}{\sigma_{F_z}} \big / \sum_{w \in \mathcal X} \phi(\frac{\mu_{F_w} - \kappa^*}{\sigma_{F_w}})\frac{1}{\sigma_{F_w}} = h_z / \sum_{w \in \mathcal X} h_w,\nonumber\\
        \frac{d\kappa^*}{d\sigma_{F_z}} & = \phi(\frac{\mu_{F_z} - \kappa^*}{\sigma_{F_z}}) (-\frac{\mu_{F_z} - \kappa^*}{\sigma_{F_z}^2}) \big / \sum_{w \in \mathcal X} \phi(\frac{\mu_{F_w} - \kappa^*}{\sigma_{F_w}})\frac{1}{\sigma_{F_w}}\nonumber\\
        & = - \frac{\mu_{F_z} - \kappa^*}{\sigma_{F_z}} \frac{d\kappa^*}{d\mu_{F_z}}.\label{eq:dkappastardmuanddsigma}
    \end{align}
    where $h_z := \phi(\tfrac{\mu_{F_z} - \kappa^*}{\sigma_{F_z}}) \tfrac{1}{\sigma_{F_z}}$. Combining Equation~\eqref{eq:cme_derivatives_first_expression} with Equation~\eqref{eq:dkappastardmuanddsigma}, we get the statement in the theorem.
\end{proof}

\varPrincipleForCme*
\begin{proof}
    First notice that by the definition of $\tilde I$, we obtain the easier objective to work with:
    \begin{equation*}
        \mathcal W(r) = \sum_{x \in \mathcal X} r_x \mu_{F_x} + \phi(\Phi^{-1}(r_x)) \sigma_{F_x}.
    \end{equation*}
    Next, we show that \(\mathcal W(r)\) is concave by computing the Hessian:
    \begin{align*}
        \frac{\partial}{\partial r_x} \mathcal W(r) & = \mu_{F_x} - \sigma_{F_x} \Phi^{-1}(r_x) \phi(\Phi^{-1}(r_x)) \frac{d}{d r_x} \Phi^{-1}(r_x)\\
        & = \mu_{F_x} - \sigma_{F_x} \Phi^{-1}(r_x)\nonumber\\
        \frac{\partial^2}{\partial r_x \partial r_z} \mathcal W(r) & = - \sigma_{F_x} \mathds 1_{x=z} \frac{1}{\phi(\Phi^{-1}(r_x))} \begin{cases}
            < 0 & x = z\\
            = 0 & x \not = z
        \end{cases},
    \end{align*}
    where the inverse function rule was employed twice. From negative definiteness strict concavity follows immediately. We show next that \(r^* \in \text{relint}(\Delta(\mathcal X))\), the relative interior of the probability simplex. Indeed, at the border of the probability simplex the partial derivatives explode:
    \begin{equation*}
        \frac{\partial}{\partial r_x} \mathcal W(r) = \mu_{F_x} - \sigma_{F_x} \Phi^{-1}(r_x) =
            \begin{cases}
            \infty & r_x \to 0^+\\
            \text{finite} & r_x \in (0,1)\\
            -\infty & r_x \to 1^-
        \end{cases}.
    \end{equation*}
    Together with the concavity of \(\mathcal W(\cdot)\) this ensures that \(r^* \in \text{relint}(\Delta(\mathcal X))\). Hence, \(r^*\) is a local optimiser of \(\mathcal W(r)\) on the plane defined by \(\sum_{x \in \mathcal X} r_x = 1\). Consequently, we obtain the Lagrangian function
    \begin{equation*}
        \mathcal L(r, \kappa) : (0,1)^{|\mathcal X|} \times \mathbb R \to \mathbb R\quad r \mapsto \mathcal W(r) + \kappa (1-\sum_{x\in\mathcal X} r_x ).
    \end{equation*}
    Setting its partial derivatives equal to zero, we derive the closed-form solution:
    \begin{align*}
        0 = \mu_{F_x} - \sigma_{F_x} \Phi^{-1}(r_x^*) - \kappa^* \iff r_x^*  = \Phi(\frac{\mu_{F_x} - \kappa^*}{\sigma_{F_x}}),
    \end{align*}
    where \(\kappa^*\) ensures a normalised distribution, i.e. \(\sum_{x\in \mathcal X} r_x^* = 1\).
\end{proof}

\setcounter{prop}{4}

\begin{prop}%
    The maximizer to Equation~\eqref{eq:vapor_variational_objective} on the probability simplex admits the closed-form expression
    \begin{equation*}%
        v_x := v\!\left(\frac{\mu_{F_x} - \nu^*}{\sigma_{F_x}}\right) \text{ with $\nu^*$ such that } \sum_x v_x = 1,
    \end{equation*}
    where $v(c) := \exp(-(\sqrt{c^2 + 4} - c)^2 / 8)$. Moreover, to find $\nu^*$ we can use binary search with $k \in \Theta(\log(\sqrt{\log |\mathcal X|}/\epsilon))$ iterations, ensuring that the $k$-th iterate $v^k$ satisfies $\|v^* - v^k\|_\infty < \epsilon$.
\end{prop}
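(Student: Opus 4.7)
The plan is to mirror the approach used for \flite{} in Proposition~\ref{thm:variational_principle_encoding_concentrated_maximum_estimator}: derive the closed-form maximizer via a Lagrangian, establish existence and uniqueness of the normalizing constant $\nu^*$, and then wrap a standard binary-search argument around monotonicity plus a Lipschitz estimate. First I would show that $\mathcal{V}(p)$ is strictly concave on the interior of the simplex by writing $\mathcal{V}(p) = \sum_x p_x \mu_{F_x} + \sum_x \sigma_{F_x} f(p_x)$ with $f(p) = p\sqrt{-2\ln p}$, and computing $f''(p) = -p^{-1}\bigl((-2\ln p)^{-1/2} + (-2\ln p)^{-3/2}\bigr) < 0$. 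A boundary point $p_x \to 0^+$ would send $\partial \mathcal V/\partial p_x \to +\infty$, so the maximizer lies in the relative interior, and KKT reduces to the stationarity condition $\mu_{F_x} + \sigma_{F_x}(t_x - 1/t_x) = \nu$ with $t_x := \sqrt{-2\ln p_x} > 0$.

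Step two is the algebra: the condition becomes a quadratic $\sigma_{F_x} t_x^2 + (\mu_{F_x} - \nu)t_x - \sigma_{F_x} = 0$. Choosing the unique positive root yields $t_x = (\sqrt{c_x^2+4} - c_x)/2$ with $c_x := (\mu_{F_x} - \nu)/\sigma_{F_x}$, and plugging back into $p_x = e^{-t_x^2/2}$ gives exactly $v(c_x)$, as claimed. Strict concavity plus interiority ensures this stationary point is the unique global maximum provided $\nu^*$ can be chosen to normalize.

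For existence and uniqueness of $\nu^*$, I would compute $v'(c) = v(c)\,w^2/(4\sqrt{c^2+4}) > 0$ where $w = \sqrt{c^2+4}-c$, so $v$ is strictly increasing with $v(-\infty)=0$ and $v(+\infty)=1$. Hence $s(\nu) := \sum_x v((\mu_{F_x}-\nu)/\sigma_{F_x})$ is continuous, strictly decreasing, with limits $|\mathcal X|>1$ and $0$, so a unique $\nu^*$ with $s(\nu^*) = 1$ exists. For the search window: if $\nu \le \mu_F^{\min}$ then every $c_x \ge 0$, so $v(c_x) \ge v(0) = e^{-1/2}$ and $s(\nu) \ge |\mathcal X| e^{-1/2} > 1$; while if $\nu \ge \mu_F^{\max} + \sigma_F^{\max}\cdot(-v^{-1}(1/|\mathcal X|))$, the same argument used for \flite{} shows $s(\nu)<1$. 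Using the tail asymptotic $v(c) \sim e^{-c^2/2}$ as $c \to -\infty$ (from $w \approx -2c$), one finds $-v^{-1}(1/|\mathcal X|) \in \Theta(\sqrt{\log|\mathcal X|})$, so the window width $\Delta$ lies in $\Theta(\sqrt{\log|\mathcal X|})$.

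Finally, binary search on this window shrinks the bracket for $\nu^*$ by a factor of two per step, so after $k$ iterations $|\nu^*-\nu^k| \le \Delta/2^{k+1}$. To propagate this to $\|v^*-v^k\|_\infty$ I would bound the Lipschitz constant of $\nu \mapsto v((\mu_{F_x}-\nu)/\sigma_{F_x})$ by $L/\sigma_F^{\min}$ where $L := \sup_c |v'(c)|$. A direct check of the two tail regimes (for $c \to -\infty$, $|v'(c)| \lesssim |c|e^{-c^2/2} \to 0$; for $c \to +\infty$, $|v'(c)| = O(1/c^3)\to 0$) together with continuity yields a finite absolute constant $L$, and choosing $k \in \Theta(\log(\sqrt{\log|\mathcal X|}/\epsilon))$ secures $\|v^*-v^k\|_\infty < \epsilon$. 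I expect the main obstacle to be the Lipschitz estimate: the closed form of $v'$ is unwieldy, and a clean constant $L$ may require splitting into $c\le 0$ and $c\ge 0$ rather than quoting a single inequality. Everything else parallels the proof for \flite{} almost verbatim.
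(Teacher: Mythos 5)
Your proposal is correct and follows essentially the same route as the paper's proof: interiority of the maximizer via exploding boundary derivatives, concavity, a Lagrangian stationarity condition reducing to a quadratic in $\sqrt{-2\ln p_x}$ whose positive root yields $v(c_x)$, and then binary search justified by monotonicity of $\nu \mapsto \sum_x v(c_x)$, a $\Theta(\sqrt{\log|\mathcal X|})$ window from the tail behavior of $v^{-1}(1/|\mathcal X|)$, and a uniform Lipschitz bound on $v$. The only (immaterial) differences are that you verify concavity by a direct second-derivative computation rather than citing the auxiliary VAPOR proposition, your window's lower endpoint $\mu_F^{\min}$ is slightly looser than the paper's $\mu_F^{\min} - v^{-1}(1/|\mathcal X|)\sigma_F^{\min}$ without affecting the asymptotics, and your simplification $v'(c) = v(c)\,(\sqrt{c^2+4}-c)^2/(4\sqrt{c^2+4})$ makes the boundedness of the Lipschitz constant more transparent than the paper's stated bound of $0.4$.
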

\begin{proof}
    We show first that \(r^* \in \text{relint}(\Delta(\mathcal X))\). Indeed, at the border of the probability simplex the partial derivatives explode:
    \begin{align*}\label{eq:theorem_max_min_optimisation_definition_and_properties_proof_partial_derivatives}
        \frac{\partial}{\partial r_x} \mathcal V(r) & = \mu_{F_x} + \sigma_{F_x} \left(\sqrt{-2\ln r_x} - \frac{1}{\sqrt{-2\ln  r_x}}\right) = \begin{cases}
            \infty & r_x \to 0^+\\
            \text{finite} & r_x \in (0,1)\\
            - \infty & r_x \to 1^-,
        \end{cases}
    \end{align*}
    which together with the concavity of \(\mathcal V(\cdot)\), shown in Proposition~\ref{thm:vapor_saddle_point_optimisation_problem}, ensures that \(r^* \in \text{relint}(\Delta(\mathcal X))\). Hence, \(r^*\) is a local optimiser of \(\mathcal V(r)\) on the plane defined by \(\sum_{x \in \mathcal X} r_x =1\). Consequently, we obtain the Lagrangian function
    \begin{equation*}
        \mathcal L(r, \nu): (0,1)^{|\mathcal X|} \times \mathbb R \to \mathbb R \quad r \mapsto \mathcal V(r) + \nu \left(1-\sum_{x \in \mathcal X} r_x\right).
    \end{equation*}
    Setting its partial derivatives equal to zero we derive the closed-form solution:
    \begin{gather}
        0 = \mu_{F_x} + \sigma_{F_x} \left(\sqrt{-2\ln  r_x} - \frac{1}{\sqrt{-2\ln  r_x}}\right) - \nu
        \iff
        0 = {\sqrt{-2\ln  r_x}}^2 + {\sqrt{-2\ln  r_x}} \overbrace{\frac{\mu_{F_x}-\nu}{\sigma_{F_x}}}^{c_x} - 1\nonumber\\
        \iff
        \sqrt{-2\ln  r_x} = \frac{- c_x + \sqrt{c_x^2+4}}{2} \iff
        r_x^* = \exp(-[\sqrt{c_x^2 + 4} - c_x]^2 / 8) \quad \text{where } c_x = \frac{\mu_{F_x} - \nu}{\sigma_{F_x}}.\label{eq:theorem_max_min_optimisation_strategy_for_solving_deriving_closed_form_by_first_order_conditions}
    \end{gather}
    Being a Lagrange multiplier, \(\nu\) automatically ensures a normalised probability distribution, i.e. \(\sum\nolimits_{x\in \mathcal X} r_x^* = 1\). 

    To show that $\nu^*$ can be found with binary search using $k \in \Theta(\log(\sqrt{\log |\mathcal X|}/\epsilon))$ steps while ensuring $\|r^* - r^k\|_\infty < \epsilon$, it suffices to demonstrate that $v^* \mapsto \sum\nolimits_{x \in \mathcal X} v(\frac{\mu_{F_x} - \nu^*}{\sigma_{F_x}})$ is continuous and monotonously decreasing, $v^{-1}(r_x) = 1/\sqrt{-2 \ln r_x} - \sqrt{-2 \ln r_x}$, $\mu_{F}^{min} - v^{-1}(\tfrac{1}{|\mathcal X|}) \sigma_{F}^{min} \leq \nu^* \leq \mu_{F}^{max} - v^{-1}(\tfrac{1}{|\mathcal X|}) \sigma_{F}^{max}$, and that $v(\cdot)$ is Lipschitz continuous.

    Lipschitz continuity follows immediately from a bounded derivative $$\frac{d}{dc}v(c) = \exp(-[\sqrt{c^2 + 4} - c]^2 / 8) \frac{\sqrt{c^2 + 4} - c}{4} (\frac{2c}{2 \sqrt{c^2 + 4}}-1) \in [0, 0.4).$$
    Since \(\nu \mapsto c_x\), \(c_x \mapsto (\sqrt{c_x^2 + 4} - c_x)^2\), and \(z \mapsto \exp(-z / 8)\) are each monotonously decreasing, their composition \(\nu \mapsto r_x^\nu\) is also monotonously decreasing. As the sum of decreasing functions \(
        \nu \mapsto \sum\nolimits_{x \in \mathcal X} p_x^\nu\)
    is monotonously decreasing. The binary search window is initialised based on the insight that
    \begin{align*}
        1 \overset{!}{=} \sum_{x \in \mathcal X} r_x^* & \leq |\mathcal X|  v(c_u) \implies c_u \geq v^{-1}(1/|\mathcal X|)\nonumber\\
        1 \overset{!}{=} \sum_{x \in \mathcal X} r_x^* & \geq |\mathcal X|  v(c_l) \implies c_l \leq v^{-1}(1/|\mathcal X|)
    \end{align*}
    Finally, from the equivalences in Equation~\eqref{eq:theorem_max_min_optimisation_strategy_for_solving_deriving_closed_form_by_first_order_conditions} we obtain an inverse to $v(c)$, i.e.
    \begin{equation*}
        v^{-1}(r_x) = \frac{1}{\sqrt{-2 \ln r_x}} - \sqrt{-2 \ln r_x},
    \end{equation*}
    which we remark fulfills $v^{-1}(1/k) \leq 0\ \forall k \geq 2$.
    As a direct consequence we obtain \(\nu \leq \mu_{F}^{max} - v^{-1}(1/|\mathcal X|) \sigma_{F}^{max}\), since otherwise \(c_z < v^{-1}(1/|\mathcal X|)\) for all \(z \in \mathcal X\). Similarly, it holds that \(\nu \geq \mu_{F}^{min} - v^{-1}(1/|\mathcal X|) \sigma_{F}^{min}\), since otherwise \(c_z > v^{-1}(1/|\mathcal X|)\) for all \(z \in \mathcal X\). Hence,
    \begin{equation*}
        \nu \in [\mu_{F}^{min} - v^{-1}(1/|\mathcal X|) \sigma_{F}^{min}, \mu_{F}^{max} - v^{-1}(1/|\mathcal X|) \sigma_{F}^{max}].
    \end{equation*}
\end{proof}

\logarithmicFStarQuantileSearch*
\begin{proof}
    Continuity and monotonicity of $g(f)$ follows from continuity and monotonicity of $\mathbb P[\tilde F_z \leq f]$ for all $z \in \mathcal X$. The existence and uniqueness of $\bar f$ follows swiftly, since $g(f) = \mathbb P[\tilde F^* \leq f]$\footnote{Recall, that here we are in the independent Gaussian process setting.}, as a cumulative distribution function, has range $(0,1)$. Let us derive the search window. It holds that
    \begin{equation*}
        \Phi^{|\mathcal X|}(\frac{f - \mu_{F}^{max}}{\sigma_2}) \leq \prod_{x \in \mathcal X} \Phi(\frac{f - \mu_{F}^{max}}{\sigma_{F_x}}) \leq \underbrace{\prod_{x \in \mathcal X} \Phi(\frac{f - \mu_{F_x}}{\sigma_{F_x}})}_b \leq \prod_{x \in \mathcal X} \Phi(\frac{f - \mu_{F}^{min}}{\sigma_{F_x}}) \leq \Phi^{|\mathcal X|}(\frac{f - \mu_{F}^{min}}{\sigma_1})
    \end{equation*}
    where $\sigma_1 = \sigma_{F}^{min}$ if $f \geq \mu_{F}^{min}$ and $\sigma_1 = \sigma_{F}^{max}$ otherwise, and $\sigma_2 = \sigma_{F}^{max}$ if $f \geq \mu_{F}^{max}$ and $\sigma_2 = \sigma_{F}^{min}$ otherwise. Equivalently, it then holds that
    \begin{gather*}
        \frac{f - \mu_{F}^{max}}{\sigma_2} \leq \Phi^{-1}(b^{1/|\mathcal X|}) \leq \frac{f - \mu_{F}^{min}}{\sigma_1} \iff
        \mu_F^{min} + \sigma_1 \Phi^{-1}(b^{1/|\mathcal X|}) \leq f \leq \mu_F^{max} + \sigma_2 \Phi^{-1}(b^{1/|\mathcal X|}).
    \end{gather*}
    Now, since by assumption $b \geq \frac{1}{4}$ and $|\mathcal X| \geq 2$, it holds that $b^{1/|\mathcal X|} \geq \frac{1}{2}$ and hence $\Phi^{-1}(b^{1/|\mathcal X|}) \geq 0$. Consequently, we obtain the desired search window
    \begin{gather*}
        \mu_F^{min} + \sigma_F^{min} \Phi^{-1}(b^{1/|\mathcal X|}) \leq f \leq \mu_F^{max} + \sigma_F^{max} \Phi^{-1}(b^{1/|\mathcal X|}).
    \end{gather*}
    Regarding the scaling of the search window, notice that the window size is given by $\mu_F^{max} - \mu_{F}^{min} + (\sigma_F^{max} - \sigma_F^{min}) \Phi^{-1}(b^{1/|\mathcal X|})$. Now, we may apply Lemma~\ref{lem:asymptotics_of_gaussian_cdf_and_its_inverse}, which states that
    \begin{equation*}
        \Phi^{-1}(y) \sim \sqrt{-2 \ln (1-y)} \text{ as } y \to 1^-.
    \end{equation*}
    Plugging in $b^{1/|\mathcal X|}$ for $y$ then gives us
    \begin{equation}\label{eq:proof_g_logarithmic_search_intermediate_scaling_equality}
        \Phi^{-1}(b^{1/|\mathcal X|}) \sim \sqrt{-2 \ln(1-b^{1/|\mathcal X|})} \text{ as } |\mathcal X| \to \infty.
    \end{equation}
    According to the L'Hôpital-Bernoulli rule, it holds that $\lim_{a \to 1} \frac{1-a}{- \ln(a)} = \lim_{a \to 1} a = 1$.
    Since $b^{1/|\mathcal X|} \to 1^-$ as $|\mathcal X| \to \infty$, we equivalently get
    \begin{align*}
        {1-b^{1/|\mathcal X|}} \sim {- \ln(b^{1/|\mathcal X|}))} =  {\ln(1/b)} / |\mathcal X| \text{ as } |\mathcal X| \to \infty.
    \end{align*}
    Combining this with Equation~\eqref{eq:proof_g_logarithmic_search_intermediate_scaling_equality}, we obtain
    \begin{equation*}
        \Phi^{-1}(b^{1/|\mathcal X|}) \sim \sqrt{2 \ln(|\mathcal X|) - 2 \ln(\ln(1/b))}.
    \end{equation*}
    Hence, the search window scales in
    \begin{align*}
        \Theta(\mu_F^{max} - \mu_{F}^{min} + (\sigma_F^{max} - \sigma_F^{min}) \Phi^{-1}(b^{1/|\mathcal X|})) %
        & = \Theta(\sqrt{\ln |\mathcal X| }).
    \end{align*}
    Finally, $k$ steps of binary search divide the search window by $2^k$ resulting in an accuracy of
    \begin{equation*}
        |\bar f - \bar f^k| \leq \frac{\mu_F^{max} \!-\! \mu_{F}^{min} + (\sigma_F^{max} \!-\! \sigma_F^{min}) \Phi^{-1}(b^{1/|\mathcal X|})}{2^{k+1}} \!\iff\! k \leq \log_2 \! \left(\frac{\mu_F^{max} \!-\! \mu_{F}^{min} + (\sigma_F^{max} \!-\! \sigma_F^{min}) \Phi^{-1}(b^{1/|\mathcal X|})}{2 |\bar f - \bar f^k|}\right)\!.
    \end{equation*}
    Therefore, for $k = \log_2((\mu_F^{max} \!-\! \mu_{F}^{min} + (\sigma_F^{max} \!-\! \sigma_F^{min}) \Phi^{-1}(b^{1/|\mathcal X|})) / (2 \nu))$ it must hold that $|\bar f - \bar f^k| \leq \nu$. Inserting the asymptotic scaling of the search window finishes the proof.
\end{proof}

\logarithmicFStarWithoutXQuantileSearch*
\begin{proof}
    Since $\tilde g^x$ is continuous and strictly monotonously increasing on a section with range $(0,1]$ and larger than $1$ elsewhere, see the illustration in Figure~\ref{fig:approximation_to_gx_is_not_a_cdf}, it follows immediately that for $b \in (0,1)$ $\exists! \bar f_x \in \mathbb R$ s.t. $\tilde g^x(\bar f_x) = b$. Let us next establish an upper bound on $\bar f_x$. It holds that
    \begin{align*}
        \tilde g^x(\bar f_x) & =  \Phi(\frac{\bar f_x-m}{s}) / \Phi(\frac{\bar f_x-\mu_{F_x}}{\sigma_{F_x}}) > \Phi(\frac{\bar f_x-m}{s}) \geq b
    \end{align*}
    for $\bar f_x \geq m + \Phi^{-1}(b) \cdot s$,
    directly implying the upper bound on the search window in this theorem. For the lower bound we make use of Lemma~\ref{lem:asymptotics_of_gaussian_cdf_and_its_inverse}, which states that $\forall a < 0$ one has
    \begin{equation}\label{eq:proof_SW_II_asymptotics_of_gaussian_restated}
        \phi(a) \left(\frac{1}{-a} - \frac{1}{-a^3} \right) \leq \Phi(a) \leq \frac{\phi(x)}{-a}.
    \end{equation}
    Assuming $f \leq \mu_{F_x} - \sqrt{2} \sigma_{F_x}$, which is automatically less than $m$, one has $1 - 1/(\tfrac{f-\mu_{F_x}}{\sigma_{F_x}})^2 \geq \tfrac{1}{2}$. Together with Equation~\eqref{eq:proof_SW_II_asymptotics_of_gaussian_restated}, we then get
    \begin{equation}\label{eq:establish_lower_bound_on_quartiles_of_tilde_g_x}
        \tilde g^x(f) = \frac{\Phi(\frac{f - m}{s})}{\Phi(\frac{f - \mu_{F_x}}{\sigma_{F_x}})} \leq \frac{\phi(\frac{f - m}{s})}{\phi(\frac{f - \mu_{F_x}}{\sigma_{F_x}})} \frac{2\frac{f-\mu_{F_x}}{\sigma_{F_x}}}{\frac{f-m}{s}} \leq 2 \frac{\phi(\frac{f - m}{s})}{\phi(\frac{f - \mu_{F_x}}{\sigma_{F_x}})},
    \end{equation}
    where in the last inequality we used that for $f \leq \mu_{F_x} < m$ it holds that $\tfrac{f-\mu_{F_x}}{f-m} = \tfrac{\mu_{F_x} - f}{m - f} < 1$ and that for $s \leq \sigma_{F_x}$ it holds that $\tfrac{s}{\sigma_{F_x}} \leq 1$. We want to figure out for what $f$ the right hand side of Equation~\eqref{eq:establish_lower_bound_on_quartiles_of_tilde_g_x} cannot reach $b$, i.e.
    \begin{align*}
        b & > 2 \exp((f-\mu_{F_x})^2 \big/ 2 \sigma_{F_x}^2 - (f-m)^2 \big/ 2 s^2),
    \end{align*}
    which is implied by either of the conditions below:
    \begin{align*}
        \ln(b/2) & > (f-\mu_{F_x})^2 \big/ 2 \sigma_{F_x}^2 - (f-m)^2 \big/ 2 \sigma_{F_x}^2 = \frac{(f-\mu_{F_x})^2 - (f-m)^2}{2 \sigma_{F_x}^2}\\
        \ln(b/2) & > (f-m)^2 \big/ 2 \sigma_{F_x}^2 - (f-m)^2 \big/ 2 s^2 = (f-m)^2 \cdot (\frac{1}{2 \sigma_{F_x}^2} - \frac{1}{2 s^2}).
    \end{align*}
    These conditions, in turn, are satisfied for
    \begin{equation*}
        f < \frac{\sigma_{F_x}^2 \ln(b/2)}{m - \mu_{F_x}} + \frac{m + \mu_{F_x}}{2} \quad\text{ and }\quad f < m - \sqrt{\ln(b/2) / (\frac{1}{2 \sigma_{F_x}^2} - \frac{1}{2 s^2})},
    \end{equation*}
    leading to the stated lower bound on the search window in this theorem. Clearly, the size of the search window only depends on $b, m, s, \mu_F$, and $\sigma_F$, i.e., it is independent of $|\mathcal X|$. The rest of the theorem follows immediately.
\end{proof}

\setcounter{prop}{7}

\begin{prop}\label{thm:computing_the_independence_estimator}
    Suppose an independent stochastic process\footnote{That is, for any \(x_1, \ldots, x_n \subseteq \mathcal X\) it holds that \(F_{x_1}, \ldots, F_{x_n}\) are mutually independent.} \(\{F_x: \Omega \to \mathbb R \ \vert\ x \in \mathcal X\}\) on a finite domain \(\mathcal X\). Let \(\epsilon > 0\) and assume
    \(f_0, \ldots, f_l \in \overline{\mathbb R}\) with \(f_i \leq f_{i+1}\) such that \(f_0 = - \infty\), \(f_l = \infty\), $\max\nolimits_{x \in \mathcal X}\mathbb P[F_x \leq f_1] \leq 2 \epsilon$, $\max\nolimits_{x \in \mathcal X}\mathbb P[F_x > f_{l-1}] \leq 2 \epsilon$, and $\max\nolimits_{x \in \mathcal X} \mathbb P[F_x \in (f_i, f_{i+1}]] \leq 2 \epsilon \ \forall i=1, \ldots, l-2$. Then it holds for \(X^* = \arg\max_{z \in \mathcal X} F_z\) that
    \begin{equation*}
        \bigg| \mathbb P[x \in X^*] - \sum_{i=0}^{l-1} \frac{g_x(f_{i+1}) + g_x(f_i)}{2} \mathbb P[F_x \in (f_i, f_{i+1}]]\bigg| \leq \epsilon,
    \end{equation*}
    where \(g_x(f) := \prod_{z \in \mathcal X \setminus \{x\}} \mathbb P[F_z \leq f]\).
\end{prop}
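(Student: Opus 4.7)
The plan is to start from the independence-based identity
\begin{equation*}
\mathbb P[x \in X^*] \;=\; \mathbb E[g_x(F_x)] \;=\; \int g_x(f)\, d\mathbb P_{F_x}(f),
\end{equation*}
which holds because, conditional on $F_x$, the event $\{F_z \leq F_x \ \forall z \neq x\}$ factorises into $\prod_{z \neq x} \mathbb P[F_z \leq F_x \mid F_x]$ by the mutual independence of $\{F_z\}_{z \in \mathcal X}$ (this is the general version of Equation~\eqref{eq:independence_assumption_simplification}). Partitioning $\mathbb R$ along the given grid yields $\mathbb P[x \in X^*] = \sum_{i=0}^{l-1} \int_{(f_i,f_{i+1}]} g_x(f)\, d\mathbb P_{F_x}(f)$, so the statement reduces to bounding, uniformly in $i$, the deviation between each sub-integral and its trapezoidal surrogate $\tfrac{1}{2}(g_x(f_i)+g_x(f_{i+1}))\,\mathbb P[F_x \in (f_i,f_{i+1}]]$.

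Next I would exploit that $g_x$, being a product of CDFs, is nondecreasing on $\mathbb R$ with limits $g_x(-\infty)=0$ and $g_x(\infty)=1$. On $(f_i,f_{i+1}]$ both the integrand $g_x(f)$ and the midpoint constant $\tfrac{1}{2}(g_x(f_i)+g_x(f_{i+1}))$ therefore lie in the interval $[g_x(f_i),g_x(f_{i+1})]$, so their pointwise difference has absolute value at most $\tfrac{1}{2}(g_x(f_{i+1})-g_x(f_i))$. Integrating against $d\mathbb P_{F_x}$ gives the per-interval error bound
\begin{equation*}
\left| \int_{(f_i,f_{i+1}]} g_x(f)\, d\mathbb P_{F_x}(f) - \tfrac{1}{2}\bigl(g_x(f_i)+g_x(f_{i+1})\bigr)\,\mathbb P[F_x \in (f_i,f_{i+1}]] \right| \;\leq\; \tfrac{1}{2}\bigl(g_x(f_{i+1})-g_x(f_i)\bigr)\,\mathbb P[F_x \in (f_i,f_{i+1}]].
\end{equation*}

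Finally, the hypotheses $\mathbb P[F_x \leq f_1] \leq 2\epsilon$, $\mathbb P[F_x > f_{l-1}] \leq 2\epsilon$, and $\mathbb P[F_x \in (f_i,f_{i+1}]] \leq 2\epsilon$ for $1 \leq i \leq l-2$ jointly say that every grid interval has $\mathbb P_{F_x}$-mass at most $2\epsilon$. Pulling this uniform factor out of the sum of per-interval bounds and telescoping the remaining differences yields
\begin{equation*}
\sum_{i=0}^{l-1} \tfrac{1}{2}\bigl(g_x(f_{i+1})-g_x(f_i)\bigr)\cdot 2\epsilon \;=\; \epsilon\,\bigl(g_x(\infty)-g_x(-\infty)\bigr) \;=\; \epsilon,
\end{equation*}
which is the claimed bound. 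The only minor bookkeeping concern is the interpretation of $g_x$ at the infinite endpoints $f_0=-\infty$ and $f_l=+\infty$, which is resolved by taking $g_x(\pm\infty)$ as limits of the CDF products and invoking monotone convergence on the two extremal sub-integrals. I do not anticipate any serious obstacle: the monotonicity of $g_x$ does all the work, and the rest is telescoping.
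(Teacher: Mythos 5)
Your proposal is correct and follows essentially the same route as the paper's proof: the conditional-independence factorisation to get $\mathbb P[x \in X^*] = \mathbb E[g_x(F_x)]$, the per-interval trapezoidal error bound $\tfrac{1}{2}(g_x(f_{i+1})-g_x(f_i))\,\mathbb P[F_x \in (f_i,f_{i+1}]]$ via monotonicity of $g_x$, and the telescoping of $\sum_i (g_x(f_{i+1})-g_x(f_i)) = 1$ against the uniform mass bound $2\epsilon$. The only cosmetic difference is that the paper factors out $\max_i \mathbb P[F_x \in (f_i,f_{i+1}]]$ before invoking the hypotheses, whereas you substitute $2\epsilon$ directly; the arguments are otherwise identical.
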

\begin{proof}
    First, recall that mutually independent random variables \(Z_1, \ldots Z_n\) are characterized by \(\mathbb P[Z_1 \in \mathcal A_1, \ldots, Z_n \in \mathcal A_n] = \prod\nolimits_{i=1}^n \mathbb P[Z_i \in \mathcal A_i]\) for any Borel sets \(A_1, \ldots, A_n\). Hence, conditionals \(Z_2, \ldots, Z_n | Z_1\) are also mutually independent:
    \begin{align*}
        \mathbb P[Z_2 \in \mathcal A_2, \ldots, Z_n \in \mathcal A_n | Z_1 \in \mathcal A_1] = & \mathbb P[Z_1 \in \mathcal A_2, \ldots, Z_n \in \mathcal A_n] / \mathbb P[Z_1 \in \mathcal A_1]\nonumber\\
        = & \prod_{i=1}^n \mathbb P[Z_i \in \mathcal A_i] / P[Z_1 \in \mathcal A_i] = \prod_{i=2}^n \mathbb P[Z_i \in \mathcal A_i].
    \end{align*}
    Conditional independence then allows us to derive a tractable integral for \(\mathbb P[x \in X^*]\), which we write as a sum of integrals over an \(l\)-piece partition of \(\mathbb R\):
    \begin{align*}
        \mathbb P[x \in X^*] & = \mathbb P[F_z \leq F_x\ \forall z \in \mathcal X \setminus \{x \}] = \mathbb E [\mathbb P[F_z \leq F_x\ \forall z \in \mathcal X \setminus \{x \} | F_x]] = \mathbb E [\prod\nolimits_{z \in \mathcal X \setminus \{x \}} \mathbb P[F_z \leq F_x  | F_x]]\nonumber\\
    & = \mathbb E [g_x(F_x)] = \int_{\mathbb R} g_x(f) d\mathbb P[F_x \in \cdot\ ] = \sum_{i=0}^{l-1} \int_{(f_i, f_{i+1}]} g_x(f) d\mathbb P[F_x \in \cdot\ ].
    \end{align*}
    Each of these integrals can then be numerically evaluated using the trapezoidal rule. Moreover, we can upper bound the approximation error of numerical integration. Indeed, due to the triangle inequality, the fact that \(g_x\) increases monotonously, and through a telescoping sum, one has
    \begin{align*}
        & \bigg| \mathbb P[x \in X^*] - \sum_{i=0}^{l-1} \frac{g_x(f_{i+1}) + g_x(f_i)}{2} \mathbb P[F_x \in (f_i, f_{i+1}]]\bigg|\nonumber\\
        \leq & \sum_{i=0}^{l-1} \bigg|\int_{(f_i, f_{i+1}]} g_x(f) d\mathbb P[F_x \in \cdot\ ] - \frac{g_x(f_{i+1}) + g_x(f_i)}{2} \mathbb P[F_x \in (f_i, f_{i+1}]]\bigg|\nonumber\\
        \leq & \sum_{i=0}^{l-1} \frac{g_x(f_{i+1})-g_x(f_i)}{2} \mathbb P[F_x \in (f_i, f_{i+1}]]  \leq \sum_{i=0}^{l-1} \frac{g_x(f_{i+1})-g_x(f_i)}{2} \max_{i = 0, \ldots, l-1} \mathbb P[F_x \in (f_i, f_{i+1}]]\nonumber\\
        = & \max_{i = 0, \ldots, l-1} \frac{\mathbb P[F_x \in (f_i, f_{i+1}]]}{2}.
    \end{align*}
    Finally, for the partitioning \(\mathbb R = (-\infty, f_1] \cup \bigcup\nolimits_{l=1}^{l-2} (f_i, f_{i+1}] \cup (f_{l-1}, \infty)\) to ensure that \(\mathbb P[F_x \in (f_i, f_{i+1}]] \leq 2 \epsilon\) for all \(x \in \mathcal X\) simultaneously, we require that
    \begin{gather*}
        \max_{x \in \mathcal X}\mathbb P[F_x \leq f_1] \leq 2 \epsilon, \qquad
        \max_{x \in \mathcal X}\mathbb P[F_x > f_{l-1}] \leq 2 \epsilon, \qquad
        \text{and} \qquad
        \max_{x \in \mathcal X} \mathbb P[F_x \in (f_i, f_{i+1}]] \leq 2 \epsilon \quad \forall i=1, \ldots, l-2.
    \end{gather*}
    These are exactly the conditions that the Theorem demands.
\end{proof}

\begin{restatable}[]{prop}{maximumOfGaussiansConvergesInProbability}\label{cor:maximum_of_gaussians_converges_in_probability}
    Assume i.i.d. \(Z_1, Z_2, \ldots \sim \mathcal N(\mu,\sigma^2)\). Then \(\exists (a_n)_{n \in \mathbb N} \) s.t.
    \begin{equation*}
        \forall \epsilon > 0 \lim_{n \to \infty} \mathbb P[|\max_{i \leq n} Z_i - a_n| > \epsilon] = 0.
    \end{equation*}
    One such sequence is given by \(a_n = \mu + \sigma \cdot \Phi^{-1}(1- \frac{1}{n})\). The rate of convergence is illustrated in Figure~\ref{fig:max_i_i_d_gaussians}.
    \begin{figure}[ht]
        \centering
        \begin{subfigure}[b]{0.4\textwidth}
            \centering
            \includegraphics[width=\linewidth]{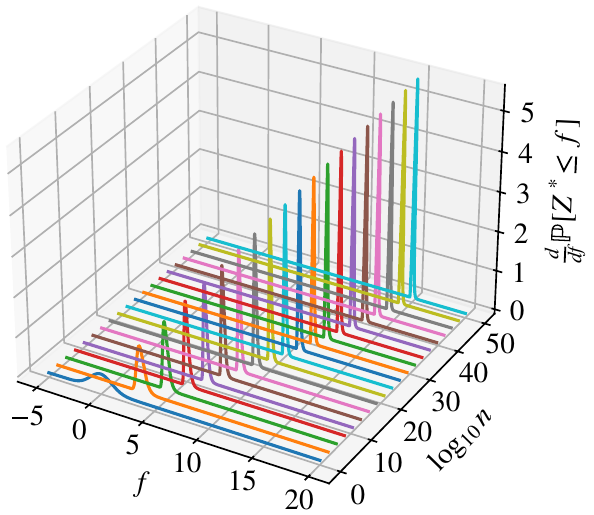}
            \caption{Probability density function of maximum of i.i.d. standard normals.}
            \label{subfig:distribution_max_i_i_d_gaussians}
        \end{subfigure}\hfill
        \begin{subfigure}[b]{0.5\textwidth}
            \centering
            \includegraphics[width=\linewidth]{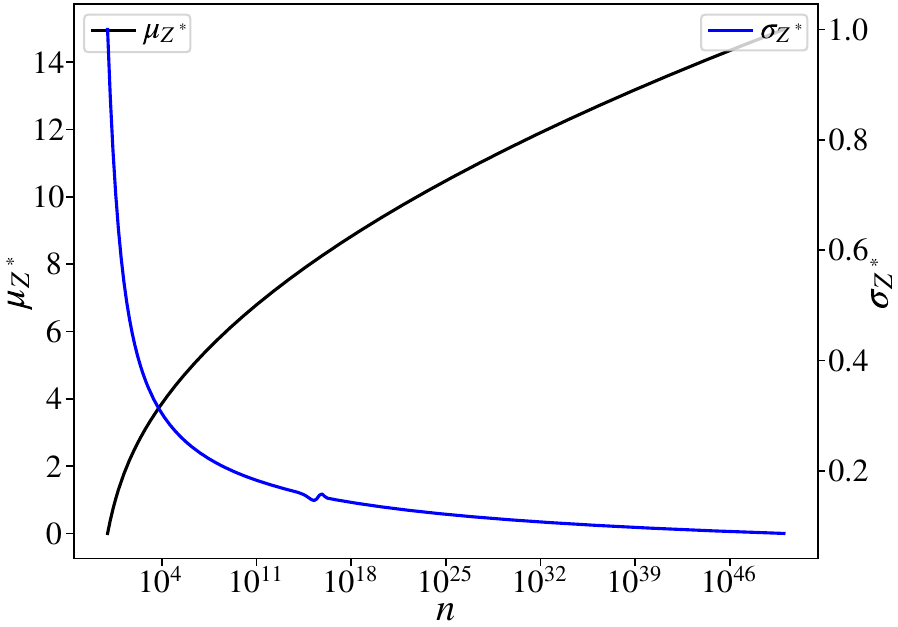}
            \caption{Mean and standard deviation of maximum of i.i.d. standard normals.}
            \label{subfig:mean_std_max_i_i_d_gaussians}
        \end{subfigure}
        \caption{The distribution of the maximum of i.i.d. \(Z_1, Z_2, \ldots, Z_n \sim \mathcal N(0,1)\) very slowly approaches that of a deterministic quantity as \(n\) is increased.}
        \label{fig:max_i_i_d_gaussians}
    \end{figure}
\end{restatable}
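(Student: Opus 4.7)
The plan is to first reduce to standard normals via the affine rescaling $Z_i = \mu + \sigma \tilde Z_i$, so it suffices to show $M_n - a_n \to 0$ in probability for $M_n = \max_{i \leq n} \tilde Z_i$, $\tilde Z_i \sim \mathcal N(0,1)$ i.i.d., and $a_n = \Phi^{-1}(1 - 1/n)$. By independence, the maximum has the closed-form CDF $\mathbb P[M_n \leq t] = \Phi(t)^n$, so for any $\epsilon > 0$ the event $\{|M_n - a_n| > \epsilon\}$ splits cleanly into the two tails
\begin{equation*}
  \mathbb P[M_n \leq a_n - \epsilon] = \Phi(a_n - \epsilon)^n, \qquad
  \mathbb P[M_n > a_n + \epsilon] = 1 - \Phi(a_n + \epsilon)^n,
\end{equation*}
each of which I would drive to zero separately.

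The key ingredient is the Mill's-ratio asymptotic $1 - \Phi(x) \sim \phi(x)/x$ as $x \to \infty$, available from Lemma~\ref{lem:asymptotics_of_gaussian_cdf_and_its_inverse}. Since $a_n$ is defined by $1 - \Phi(a_n) = 1/n$ and $a_n \to \infty$, this asymptotic yields $\phi(a_n) \sim a_n/n$. Combined with the elementary identity $\phi(a_n \pm \epsilon) = \phi(a_n) \exp(\mp \epsilon a_n - \epsilon^2/2)$, I get
\begin{equation*}
  1 - \Phi(a_n \pm \epsilon) \sim \frac{\exp(\mp \epsilon a_n - \epsilon^2/2)}{n}.
\end{equation*}

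Substituting then finishes both tails. For the lower tail, $\Phi(a_n - \epsilon)^n = (1 - c_n/n)^n$ with $c_n \sim e^{\epsilon a_n - \epsilon^2/2} \to \infty$ (using $a_n \to \infty$), hence $\Phi(a_n - \epsilon)^n \to 0$. For the upper tail, $n \cdot (1 - \Phi(a_n + \epsilon)) \sim e^{-\epsilon a_n - \epsilon^2/2} \to 0$, so via $\log(1 - y) = -y + O(y^2)$ one concludes $\Phi(a_n + \epsilon)^n \to 1$, i.e., $\mathbb P[M_n > a_n + \epsilon] \to 0$.

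The main technical obstacle I anticipate is converting the $\sim$ equivalences into rigorous inequalities valid uniformly in the $\pm\epsilon$ shift. This is precisely what the two-sided bound $\phi(x)(1/x - 1/x^3) \leq 1 - \Phi(x) \leq \phi(x)/x$ from Lemma~\ref{lem:asymptotics_of_gaussian_cdf_and_its_inverse} delivers: it sandwiches $1 - \Phi(a_n \pm \epsilon)$ between explicit quantities of order $e^{\mp \epsilon a_n}/n$ (up to polynomial factors in $a_n$, which are harmless), after which controlling $(1 - y_n)^n$ by taking logarithms is routine bookkeeping.
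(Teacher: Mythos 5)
Your proposal is correct. It differs from the paper's route in a meaningful way: the paper first proves a general Gnedenko-type ``law of large numbers for the maximum'' (Lemma~\ref{thm:law_of_large_numbers_for_maximum}), which states that relative stability of $\max_{i\le n} X_i$ is equivalent to the tail-ratio condition $\lim_{x\to\infty}\frac{1-F(x+\epsilon)}{1-F(x)}=0$ and supplies the sequence $a_n=\inf F^{-1}(\{1-\tfrac1n\})$; the proposition is then proved by checking that condition for the Gaussian via L'H\^opital, which collapses to $\phi(x+\epsilon)/\phi(x)=e^{-\epsilon x-\epsilon^2/2}\to 0$. You instead unroll the whole argument for the Gaussian case: you exploit the closed form $\mathbb P[M_n\le t]=\Phi(t)^n$, and use the Mills-ratio bounds of Lemma~\ref{lem:asymptotics_of_gaussian_cdf_and_its_inverse} to show directly that $n(1-\Phi(a_n+\epsilon))\to 0$ and $n(1-\Phi(a_n-\epsilon))\to\infty$, which is exactly the pair of conditions the paper's general lemma reduces to internally (there obtained more slickly by writing $n(1-F(a_n\pm\epsilon))=\frac{1-F(a_n\pm\epsilon)}{1-F(a_n)}$, using $1-F(a_n)=\tfrac1n$). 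Your computations check out: $\phi(a_n)\sim a_n/n$, the Gaussian identity $\phi(a_n\pm\epsilon)=\phi(a_n)e^{\mp\epsilon a_n-\epsilon^2/2}$, the bound $(1-c_n/n)^n\le e^{-c_n}$ for the lower tail, and the logarithmic expansion for the upper tail are all valid. What the paper's approach buys is reusability (the general lemma also supports the necessity direction and non-Gaussian $F$); what yours buys is a shorter, self-contained verification that avoids invoking the general machinery, at the cost of redoing for $\Phi$ the reduction the lemma already performs, and of having to handle the $\pm\epsilon$ shifts explicitly through the two-sided Mills-ratio sandwich, as you correctly anticipate.
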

\begin{proof}
    By shifting and scaling we can assume without loss of generality that $\mu=0$ and $\sigma=1$. Furthermore, 
    \begin{align*}
        \lim_{x\to\infty} \frac{1- \Phi(x+\epsilon)}{1- \Phi(x)} & = \lim_{x\to\infty} \frac{\phi(x+\epsilon)}{\phi(x)} = \lim_{x\to\infty} \exp(-\frac{(x+\epsilon)^2 - x^2}{2}) = \lim_{x\to\infty} \exp(-\epsilon x - \epsilon^2/2) = 0\qquad \forall \epsilon > 0
    \end{align*}
    where L'Hôpital's rule was applied. We now directly apply Lemma~\ref{thm:law_of_large_numbers_for_maximum}.
\end{proof}

\subsection{Lemmas}

\exampleNecessityFullCovarianceMatrixForUnbiasedEstimation*
 \begin{proof}
     We first verify that $\Sigma_F^s := I + s(e_i e_j^T + e_j e_i^T)$ is indeed symmetric positive semi-definite. Symmetry is trivial. On the other hand, positive semi-definiteness follows from
     \begin{equation*}
         z^T \Sigma_F z = \|z\|^2 + 2 s \cdot z^T e_i \cdot e_j^T z = 1 + 2 s z_i z_j \geq 0\quad \forall z : \|z\| = 1
     \end{equation*}
     where we have used that $z_i^2 + z_j^2 \leq 1$ implies $|z_i| \leq \sqrt{1-z_j^2}$ which in turn gives $|z_i|\cdot |z_j| \leq \sqrt{z_j^2 - z_j^4} \leq \frac{1}{2}$ for any $z_j \in [-1, 1]$. The more general case allowing $\|z\| \not= 1$ follows from linearity. Next, let us verify the \textit{probability of maximality} in the limit. To that end, consider the explicit Cholesky decomposition of $\Sigma_F^s$ given by
     \begin{equation*}
         (\Sigma_F^s)^{1/2} = I + s \cdot e_i e_j^T + (\sqrt{1-s^2}-1)  \cdot e_i e_i^T,
     \end{equation*}
     which can be verified by evaluating $(\Sigma_F^s)^{1/2} ((\Sigma_F^s)^{1/2})^T$ to $\Sigma_F^s$ through rigorous algebra. Alternatively, we may consider the element-wise representation as
     \begin{equation*}
         ((\Sigma_F^s)^{1/2})_{k,h} = \begin{cases}
             1 & (k,h) \in \{(a,a) : a \not = i\}\\
             \sqrt{1-s^2} & (k,h) = (i,i)\\
             s & (k,h) = (i,j)\\
             0 & otherwise
         \end{cases}.
     \end{equation*}
     So, for $\varepsilon \sim \mathcal N(0, I)$ it holds that $F \overset{d}{=} (\Sigma_F^s)^{1/2} \varepsilon $, which can be parsed as $F_z = \varepsilon_z$ for all $z \not = i$ and $F_i = \sqrt{1-s^2} \cdot \varepsilon_i + s \cdot \varepsilon_j$. Now it should be clear that as $s \to 1^-$, $F_i \to \varepsilon_j = F_j$. However, for any $s < 1$ the maximiser $X^*$ is almost surely unique. Consequently, the probability of maximality will be evenly distributed in the limit of $s \to 1^-$ except for the halving of the probability mass among index $i$ and $j$, since up to an infinitesimally small perturbation in the form of $\varepsilon_i$ the entries $F_i$ and $F_j$ are identical. This proves the Lemma.
 \end{proof}

 \nieErrorPropagation*
\begin{proof}
    We start by noting that with $p := 1/\Phi^{-1}(0.75) \approx 1.48$ it holds that
    \begin{align*}
        |m - \bar m| & \leq \frac{1}{2}(|q_3 - \bar q_3| + |q_1 - \bar q_1|) \leq \nu,\nonumber\\
        |s - \bar s| & \leq \frac{1}{2 \Phi^{-1}(0.75)} (|q_3 - \bar q_3| + |q_1 - \bar q_1|) \leq p\,\nu,\nonumber\\
        |s^2 - \bar s^2| & = (s + \bar s) \cdot |s - \bar s| \leq (s + \bar s) p\, \nu \leq 2 \bar s p\, \nu + p^2 \nu^2.
    \end{align*}
    We will use these inequalities at various places throughout this proof.
    By a Taylor series expansion around $\delta = 0$ we have
    \begin{equation*}
        \left|\frac{1}{\sqrt{1+z}} - \frac{1}{\sqrt{1+z+\delta}}\right| = \frac{|\delta|}{2(1+z)^{3/2}} + \mathcal O(\frac{\delta^2}{(1+z)^{5/2}}).
    \end{equation*}
    Setting $z = \bar s^2 / \sigma_{F_x}^2$, $\delta %
    = \frac{s^2 - \bar s^2}{\sigma_{F_x}^2}$, and multiplying with $1/\sigma_{F_x}$ yields
    \begin{align*}
        \left|\frac{1}{\sqrt{\sigma_{F_x}^2+s^2}} - \frac{1}{\sqrt{\sigma_{F_x}^2+\bar s^2}}\right| & = \frac{|s^2 - \bar s^2|}{2(\sigma_{F_x}^2+\bar s^2)^{3/2}} + \mathcal O(\frac{(s^2 - \bar s^2)^2} {(\sigma_{F_x}^2+\bar s^2)^{5/2}}) \leq \frac{|s^2 - \bar s^2|}{2 \bar s^3} + \mathcal O(\frac{(s^2 - \bar s^2)^2} {\bar s^5})\nonumber\\
        & \leq \frac{p}{\bar s^2} \nu + \mathcal O(\nu^2) = {\varepsilon_1}.
    \end{align*}
    Now we can directly get a hold on the difference between the entries of $\Phi$ in Equation~\eqref{eq:nie_absolute_error} using the triangle inequality of the absolute value:
    \begin{align*}
        \varepsilon_2 & = |\frac{\mu_{F_x} - m}{\sqrt{\sigma_{F_x}^2 + s^2}} - \frac{\mu_{F_x} - \bar m}{\sqrt{\sigma_{F_x}^2 + \bar s^2}}| =
        |\frac{\mu_{F_x} - m}{\sqrt{\sigma_{F_x}^2 + s^2}} - \frac{\mu_{F_x} - m}{\sqrt{\sigma_{F_x}^2 + \bar s^2}} + \frac{\mu_{F_x} - m}{\sqrt{\sigma_{F_x}^2 + \bar s^2}} -\frac{\mu_{F_x} - \bar m}{\sqrt{\sigma_{F_x}^2 + \bar s^2}}|\nonumber\\
        & \leq \varepsilon_1 |\mu_{F_x} - m| + |m - \bar m| / \sqrt{\sigma_{F_x}^2 + \bar s^2} \leq \varepsilon_1 (|\mu_{F_x} - \bar m| + \nu) + |m - \bar m| / \bar s \nonumber\\
        & \leq p\frac{|\mu_{F_x} - \overline m| + \bar s}{\bar s^2} \nu + \mathcal O(\nu^2) \leq p\, \epsilon + \mathcal O(\epsilon^2)
    \end{align*}
    Finally, by the mean value theorem, $\exists\, c \in [\frac{\mu_{F_x} - m}{\sqrt{\sigma_{F_x}^2 + s^2}}, \frac{\mu_{F_x} - \bar m}{\sqrt{\sigma_{F_x}^2 + \bar s^2}}]$ such that
    \begin{equation*}
        \left |\Phi(\frac{\mu_{F_x} - m}{\sqrt{\sigma_{F_x}^2 + s^2}}) - \Phi(\frac{\mu_{F_x} - \bar m}{\sqrt{\sigma_{F_x}^2 + \bar s^2}}) \right | \leq \varepsilon_2 \phi(c) \leq \varepsilon_2 / \sqrt{2 \pi} \leq \epsilon + \mathcal O(\epsilon^2).
    \end{equation*}
\end{proof}

\begin{lem}[Asymptotics of Gaussian cumulative distribution function and its inverse]\label{lem:asymptotics_of_gaussian_cdf_and_its_inverse}
    For all $x < 0$ it holds that \begin{equation*}
        \phi(x) \left(\frac{1}{-x} - \frac{1}{-x^3}\right) \leq \Phi(x) \leq \frac{\phi(x)}{-x}.
    \end{equation*}
    Moreover, we have the following asymptotic behavior:
    \begin{align*}
        \Phi(x) & \sim \frac{\phi(x)}{-x} \text{ as } x \to - \infty,\\
        \Phi^{-1}(y) & \sim - \sqrt{-2\ln y} \text{ as } y \to 0^+,\\
        \Phi^{-1}(y) = -\Phi^{-1}(1-y) & \sim \sqrt{-2 \ln(1-y)} \text{ as } y \to 1^{-},
    \end{align*}
    where \(a_n \sim b_n \iff \lim_{n \to \infty} b_n / a_n = 1\).
    \begin{figure}[ht]
        \centering
        \begin{subfigure}[b]{0.46\textwidth}
            \centering
            \includegraphics[width=\linewidth]{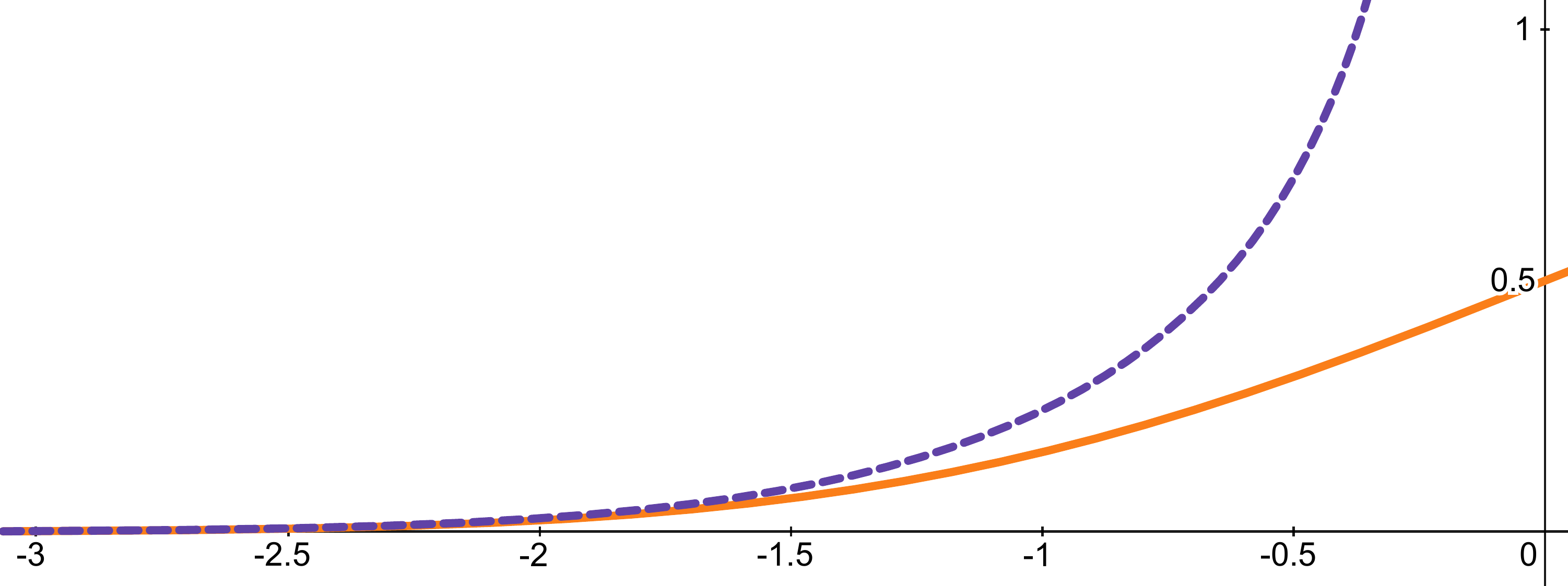}
            \caption{Linear-linear plot of \(\Phi(x)\) (solid orange line) and \(-\phi(x)/x\) (dashed purple line).}
            \label{subfig:asymptotic_of_Phi}
        \end{subfigure}\hfill
        \begin{subfigure}[b]{0.5\textwidth}
            \centering
            \includegraphics[width=\linewidth]{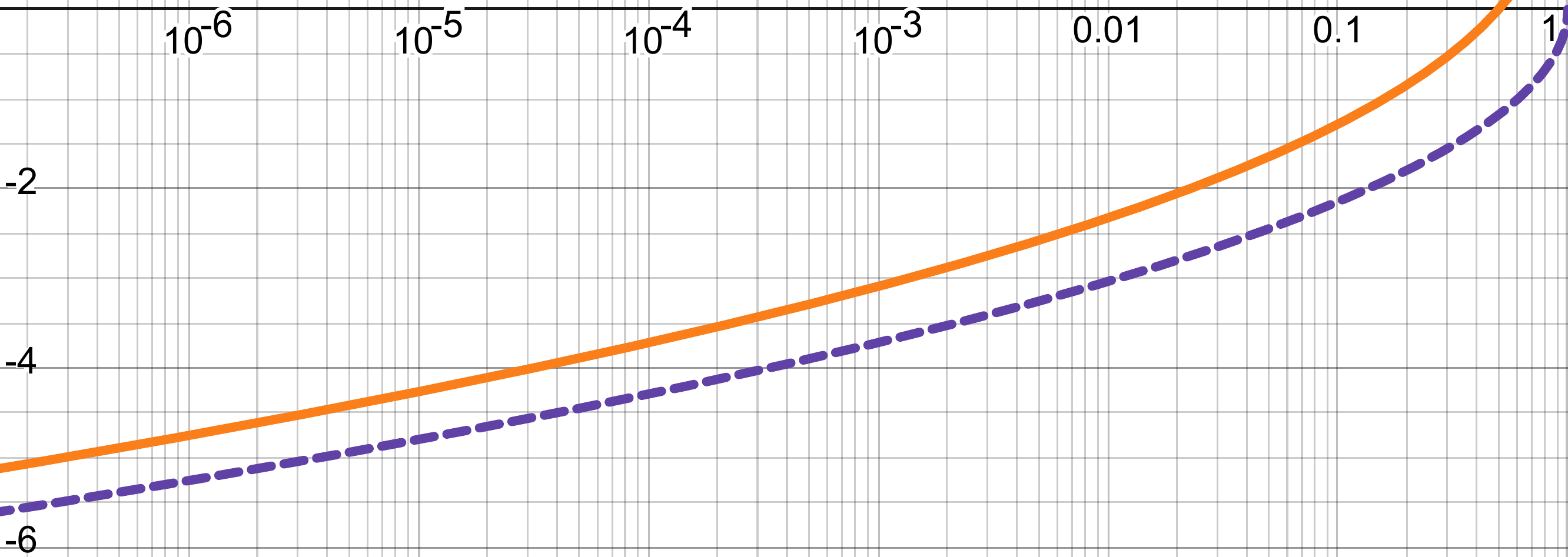}
            \caption{Linear-log Plot of \(\Phi^{-1}(y)\) (solid orange line) and \(-\sqrt{-2 \ln y}\) (dashed purple line).}
            \label{subfig:asymptotic_of_inv_Phi}
        \end{subfigure}
        \caption{Finite-value behavior of the Gaussian cumulative distribution function and its inverse compared against their respective asymptotics.}
        \label{fig:nonlimiting_behaviour_of_gaussian_CDF_and_inverse_compared_against_asymptotics}
    \end{figure}
\end{lem}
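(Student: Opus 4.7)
The plan is to first establish the two-sided Mills-ratio-type inequality by integration by parts, then extract the first asymptotic as an immediate corollary of those bounds, derive the second asymptotic by inverting the relation $y = \Phi(x)$ on a logarithmic scale, and finally obtain the third asymptotic from symmetry $\Phi^{-1}(y) = -\Phi^{-1}(1-y)$.

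For the two-sided bound, I would use the identity $\phi'(t) = -t\phi(t)$, equivalently $\phi(t) = -\phi'(t)/t$ for $t < 0$. Integrating by parts once,
\begin{equation*}
\Phi(x) = \int_{-\infty}^x \phi(t)\,dt = \int_{-\infty}^x \frac{\phi'(t)}{-t}\,dt = \frac{\phi(x)}{-x} - \int_{-\infty}^x \frac{\phi(t)}{t^2}\,dt,
\end{equation*}
where the boundary term at $-\infty$ vanishes because $\phi$ decays faster than any polynomial. Since the remainder integral is positive, this yields the upper bound $\Phi(x) \le \phi(x)/(-x)$. Applying integration by parts a second time to the remainder with $u = -1/t^3$ and $dv = \phi'(t)\,dt$ gives
\begin{equation*}
\int_{-\infty}^x \frac{\phi(t)}{t^2}\,dt = \frac{\phi(x)}{-x^3} - 3\int_{-\infty}^x \frac{\phi(t)}{t^4}\,dt \le \frac{\phi(x)}{-x^3},
\end{equation*}
and substituting back gives the lower bound $\Phi(x) \ge \phi(x)\bigl(1/(-x) - 1/(-x^3)\bigr)$.

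Dividing the two-sided bound by $\phi(x)/(-x)$ sandwiches $\Phi(x)\cdot(-x)/\phi(x)$ between $1 - 1/x^2$ and $1$, so the first asymptotic $\Phi(x) \sim \phi(x)/(-x)$ as $x \to -\infty$ follows directly. For the second, set $x := \Phi^{-1}(y)$ so $x \to -\infty$ as $y \to 0^+$. Taking logarithms in the two-sided bound gives
\begin{equation*}
\ln \Phi(x) = -\tfrac{x^2}{2} + O(\ln(-x))\qquad \text{as } x\to -\infty,
\end{equation*}
hence $\ln y \sim -x^2/2$, which rearranges to $x \sim -\sqrt{-2\ln y}$ (the negative root is forced by $x<0$). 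The third asymptotic is then just symmetry: $\Phi^{-1}(y) = -\Phi^{-1}(1-y) \sim \sqrt{-2\ln(1-y)}$ as $y\to 1^-$, obtained by applying the second asymptotic to $1-y \to 0^+$.

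The only delicate step is ensuring the boundary terms at $-\infty$ in the integration-by-parts steps really vanish and that the error term $O(\ln(-x))$ in $\ln \Phi(x)$ is negligible compared to $x^2/2$; both are routine because $\phi$ decays super-polynomially. No other step poses a genuine obstacle — once the Mills bound is in hand, everything reduces to algebraic manipulation and taking $\limsup$/$\liminf$ ratios.
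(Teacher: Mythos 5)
Your proposal is correct and follows essentially the same route as the paper's proof: the two-sided Mills bound via two successive integrations by parts using $\phi'(t) = -t\phi(t)$, the inverse asymptotic via the substitution $x = \Phi^{-1}(y)$ and comparison of $\ln\Phi(x)$ with $-x^2/2$, and the final claim by point symmetry of $\Phi$. The boundary-term and error-term checks you flag as "delicate" are indeed routine and handled implicitly in the paper as well.
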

\begin{proof}
    Integration by parts provides upper and lower bounds on \(\Phi(x)\) for \(x < 0\):
    \begin{align*}
        \Phi(x) & = \int_{-\infty}^x \phi(s) ds = \int_{-\infty}^x \frac{\frac{d}{ds} \phi(s)}{-s} ds = \frac{\phi(x)}{-x} - \underbrace{\int_{-\infty}^x \frac{\phi(s)}{s^2} ds}_{\geq 0}\\
        & = \frac{\phi(x)}{-x} - \int_{-\infty}^x \frac{\frac{d}{ds}\phi(s)}{-s^3} ds = \frac{\phi(x)}{-x} - \frac{\phi(x)}{-x^3} + 3 \underbrace{\int_{-\infty}^x \frac{\phi(s)}{s^4} ds}_{\geq 0}.
    \end{align*}
    Regarding \(\Phi^{-1}(x)\) we perform a change of variable based on \(\lim_{x \to -\infty} \Phi(x) = 0\):
    \begin{align*}
        \lim_{y \to 0^+} \frac{-\sqrt{-2\ln y}}{\Phi^{-1}(y)} = \lim_{x \to -\infty} \frac{-\sqrt{-2\ln \Phi(x)}}{\Phi^{-1}(\Phi(x))} = \lim_{x \to -\infty} \frac{-\sqrt{-2\ln \Phi(x)}}{x}.
    \end{align*}
    Now, since \(-2\ln \phi(x) = x^2 +\ln 2\pi\) one may reuse the upper and lower bounds on \(\Phi\) to obtain that \(\lim\nolimits_{y \to 0^+} -\sqrt{-2\ln y} \big / \Phi^{-1}(y) = 1\).
    Finally, by point symmetry $y = \Phi(x) = 1 - \Phi(-x)$ and hence $\Phi^{-1}(y) = x$ and $-\Phi^{-1}(1-y) = x$, which together give the last equation of the theorem.
\end{proof}

\begin{lem}\label{lem:logarithmic_search_to_obtain_probability_of_optimality}
    Let \(\{F_x : x \in \mathcal X\}\) be a stochastic process with \(1 < |\mathcal X| < \infty\). In order to find \(\kappa^* \in \mathbb R\) such that $\sum\nolimits_{x \in \mathcal X} \mathbb P[F_x \geq \kappa^*] \overset{!}{=} 1$,
    we can use logarithmic search with search window derived from
    \begin{equation}\label{eq:logarithmic_search_to_obtain_probability_of_optimality_suggested_conditions_constraining_relevant_region}
        \max_{z \in \mathcal X} \mathbb P[F_z \geq \kappa^*] \geq \frac{1}{|\mathcal X|} \text{ and } \min_{z \in \mathcal X} \mathbb P[F_z \geq \kappa^*] \leq \frac{1}{|\mathcal X|}.
    \end{equation}
\end{lem}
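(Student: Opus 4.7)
The plan is to establish two things: first, that the function $s(\kappa) := \sum_{x \in \mathcal X} \mathbb P[F_x \geq \kappa]$ is continuous and monotonically decreasing, so that $\kappa^*$ is well-defined and the root-finding problem is amenable to binary search; second, that both inequalities in Equation~\eqref{eq:logarithmic_search_to_obtain_probability_of_optimality_suggested_conditions_constraining_relevant_region} hold, which will then translate into explicit bracketing values for the search window.

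For continuity and monotonicity, I would argue that each summand $\kappa \mapsto \mathbb P[F_x \geq \kappa] = 1 - \mathbb P[F_x < \kappa]$ is, in general, a non-increasing right-continuous function as the complementary CDF of a real-valued random variable, and becomes continuous under the standing assumption that the marginals of $F_x$ have no atoms (as is the case for all applications of this lemma in the paper). The finite sum $s(\kappa)$ therefore inherits both properties. Since $s(\kappa)\to |\mathcal X|$ as $\kappa\to-\infty$ and $s(\kappa)\to 0$ as $\kappa\to+\infty$, the intermediate value theorem yields existence of $\kappa^*$ with $s(\kappa^*)=1$, and monotonicity gives the usual sign test ($s(\kappa)>1$ vs.\ $s(\kappa)<1$) that binary search relies on.

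For the two inequalities in Equation~\eqref{eq:logarithmic_search_to_obtain_probability_of_optimality_suggested_conditions_constraining_relevant_region}, the argument is a pure averaging observation and requires no distributional assumptions. The $|\mathcal X|$ nonnegative quantities $\mathbb P[F_x \geq \kappa^*]$ sum to $1$ by definition of $\kappa^*$, so their arithmetic mean equals $1/|\mathcal X|$. Since the maximum of any finite collection of reals is at least the mean and the minimum is at most the mean, both desired inequalities follow directly.

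The only step requiring any thought is how to turn these two inequalities into an explicit search window. The idea is that because $s$ is monotone, any $\kappa_{up}$ for which $\min_z \mathbb P[F_z \geq \kappa_{up}] \leq 1/|\mathcal X|$ must satisfy $\kappa_{up} \geq \kappa^*$, and similarly any $\kappa_{low}$ with $\max_z \mathbb P[F_z \geq \kappa_{low}] \geq 1/|\mathcal X|$ must satisfy $\kappa_{low}\leq\kappa^*$; concrete values are obtained by inverting each per-coordinate survival function at $1/|\mathcal X|$ and taking extremes over $z$. For Gaussian marginals this specialization is exactly the step carried out in Proposition~\ref{cor:logarithmic_search_to_obtain_gaussian_probability_of_optimality}, where it yields the bracket $\mu_F^{min} + \sigma_F^{min}\cdot -\Phi^{-1}(1/|\mathcal X|) \leq \kappa^* \leq \mu_F^{max} + \sigma_F^{max}\cdot -\Phi^{-1}(1/|\mathcal X|)$. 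The remaining algorithmic claim — that binary search on a monotone function with a bracketed root converges at a linear rate in the log of the window width — is standard and amounts to observing that each iteration halves the width of the interval containing $\kappa^*$; the quantitative error propagation is deferred to Lemma~\ref{lem:rate_of_convergence_of_binary_search_to_obtain_probability_of_optimality}.
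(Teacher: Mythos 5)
Your proposal is correct and follows essentially the same route as the paper: monotonicity of each survival function gives monotonicity of the sum (hence applicability of binary search), and the averaging observation $|\mathcal X|\min_z \mathbb P[F_z \geq \kappa^*] \leq \sum_x \mathbb P[F_x \geq \kappa^*] = 1 \leq |\mathcal X|\max_z \mathbb P[F_z \geq \kappa^*]$ yields the two bracketing inequalities. Your additional remarks on continuity/existence and on inverting the per-coordinate survival functions are details the paper defers to the Gaussian specialization in Proposition~\ref{cor:logarithmic_search_to_obtain_gaussian_probability_of_optimality}, but they do not change the argument.
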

\begin{proof}
    Due to \(\kappa \mapsto \mathbb P[F_x \geq \kappa]\) being monotonously decreasing \(\forall x \in \mathcal X\), it follows that \(\kappa \mapsto \sum\nolimits_{x \in \mathcal X} \mathbb P[F_x \geq \kappa]\) is also monotonously decreasing. Consequently, logarithmic search allows one to quickly find the normalising \(\kappa^*\).
    The search window is initialised based on the insight that
    \begin{equation*}
        |\mathcal X| \min_{z \in \mathcal X} \mathbb P[F_z \geq \kappa] \leq \underbrace{\sum_{x \in \mathcal X} \mathbb P[F_x \geq \kappa]}_{\overset{!}{=} 1} \leq |\mathcal X| \max_{z \in \mathcal X} \mathbb P[F_z \geq \kappa]
    \end{equation*}
\end{proof}

\begin{lem}\label{lem:rate_of_convergence_of_binary_search_to_obtain_probability_of_optimality}
    Let \(\{F_x : x \in \mathcal X\}\) be a stochastic process with \(L_x\)-Lipschitz continuous \(\kappa \mapsto \mathbb P[F_x \geq \kappa]\) and \(1< |\mathcal X| < \infty\). Let \(\kappa^* \in \mathbb R\) be unknown such that \(\sum\nolimits_{x \in \mathcal X} \mathbb P[F_x \geq \kappa^*] = 1\). Run \(k\) steps of binary search based on a search window \(\kappa^* \in [a,b]\) according to Lemma~\ref{lem:logarithmic_search_to_obtain_probability_of_optimality} resulting in best approximant \(\kappa^{(k)}\). Then
    \begin{align*}
        |\kappa^* - \kappa^{(k)}| & \leq \frac{b-a}{2^{k+1}}\\
        |\mathbb P[F_x \geq \kappa^*] - \mathbb P[F_x \geq \kappa^{(k)}]| & \leq L_x \frac{b-a}{2^{k+1}}\quad \forall x \in \mathcal X,\\
        |\sum_{x \in \mathcal X} \mathbb P[F_x \geq k^{(k)}] - 1| & \leq |\mathcal X| L_x \frac{b-a}{2^{k+1}}.
    \end{align*}
\end{lem}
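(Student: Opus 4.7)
The plan is to establish the first inequality via the standard bisection invariant, then deduce the remaining two bounds by a direct application of Lipschitz continuity and the triangle inequality. Concretely, I would maintain as a loop invariant that after $j$ steps the search window $[a_j, b_j]$ still contains $\kappa^*$ and has width $(b-a)/2^j$. The base case holds by Lemma~\ref{lem:logarithmic_search_to_obtain_probability_of_optimality}. In the inductive step one evaluates $s(m_j) := \sum_{x \in \mathcal X} \mathbb P[F_x \geq m_j]$ at the midpoint $m_j = (a_j+b_j)/2$; since $s$ is monotonically decreasing in $\kappa$ (a fact already used in the proof of Lemma~\ref{lem:logarithmic_search_to_obtain_probability_of_optimality}), comparing $s(m_j)$ to $1$ determines which half of $[a_j,b_j]$ still contains $\kappa^*$, and the window is replaced by that half. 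Taking $\kappa^{(k)}$ to be the midpoint of the final window $[a_k,b_k]$ then gives $|\kappa^* - \kappa^{(k)}| \leq (b_k - a_k)/2 = (b-a)/2^{k+1}$, which is the first inequality.

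The second inequality follows immediately from the Lipschitz hypothesis: for every $x\in\mathcal X$,
\begin{equation*}
    |\mathbb P[F_x \geq \kappa^*] - \mathbb P[F_x \geq \kappa^{(k)}]| \leq L_x\, |\kappa^* - \kappa^{(k)}| \leq L_x \frac{b-a}{2^{k+1}}.
\end{equation*}
For the third inequality, I would subtract the normalizing identity $\sum_x \mathbb P[F_x \geq \kappa^*] = 1$ and apply the triangle inequality together with the previous step:
\begin{equation*}
    \Big|\sum_{x \in \mathcal X} \mathbb P[F_x \geq \kappa^{(k)}] - 1\Big| \leq \sum_{x \in \mathcal X} |\mathbb P[F_x \geq \kappa^{(k)}] - \mathbb P[F_x \geq \kappa^*]| \leq \frac{b-a}{2^{k+1}} \sum_{x \in \mathcal X} L_x,
\end{equation*}
which recovers the stated bound upon reading the $L_x$ on the right-hand side as the uniform upper bound $\max_{z\in\mathcal X} L_z$ (the only plausible interpretation, since the left-hand side does not depend on a distinguished index).

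The proof is essentially routine; the only real point requiring care is the bookkeeping of the halving factor, namely that $\kappa^{(k)}$ is defined as the midpoint of the surviving interval rather than as one of its endpoints, which is precisely what yields a denominator of $2^{k+1}$ instead of $2^{k}$. A secondary minor obstacle is the aforementioned notational ambiguity in the third bound, which I would resolve explicitly by introducing $L := \max_{z\in\mathcal X} L_z$ at the start of that step.
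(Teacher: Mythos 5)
Your proposal is correct and follows essentially the same route as the paper's (much terser) proof: maintain the bisection invariant to get $|\kappa^* - \kappa^{(k)}| \leq (b-a)/2^{k+1}$ with $\kappa^{(k)}$ the midpoint of the final window, then apply Lipschitz continuity and the triangle inequality for the remaining two bounds. Your observation that the $L_x$ in the third bound must be read as a uniform bound such as $\max_{z\in\mathcal X} L_z$ is a fair resolution of a genuine notational slip in the statement.
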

\begin{proof}
    Binary search reduces the size of the search window after \(k\) steps to \(\tfrac{b-a}{2^k}\). Define \(\kappa^{(k)}\) as the half-point of the search interval giving \(|\kappa^* - \kappa^{(k)}| \leq \tfrac{b-a}{2^{k+1}}\). Then the Lemma follows immediately by definition of Lipschitz continuity and the triangle inequality of the absolute value.
\end{proof}

\begin{restatable}%
[]{lem} {asymptoticsOfQuasiShannonInformation}\label{lem:asymptotics_of_quasi_shannon_information} The quasi-surprisal shares the asymptotics of the surprisal, i.e.
    \begin{equation*}
        \tilde I(1) = 0 = - \ln(1) \text{ and } \tilde I(r_x) \sim - \ln r_x \text{ as } r_x \to 0^+.
    \end{equation*}
\end{restatable}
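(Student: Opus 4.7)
The plan is to verify both assertions directly from the definition $\tilde I(u) = (\phi(\Phi^{-1}(u))/u)^2 / 2$ by invoking the Gaussian asymptotics already established in Lemma~\ref{lem:asymptotics_of_gaussian_cdf_and_its_inverse}. No new estimates are needed; everything reduces to careful substitution and taking limits.

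For the first claim $\tilde I(1) = 0$, I would note that $\Phi^{-1}(1) = +\infty$ and $\phi(t) \to 0$ as $t \to +\infty$, so the numerator $\phi(\Phi^{-1}(1))$ vanishes while the denominator equals $1$. Squaring and halving then yields $0$, matching $-\ln 1 = 0$.

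For the second claim $\tilde I(u) \sim -\ln u$ as $u \to 0^+$, my plan is the substitution $x := \Phi^{-1}(u)$, so that $x \to -\infty$ as $u \to 0^+$. By Lemma~\ref{lem:asymptotics_of_gaussian_cdf_and_its_inverse}, we have $u = \Phi(x) \sim \phi(x)/(-x)$, i.e. $\phi(x)/u \sim -x$. Squaring gives
\begin{equation*}
    2 \tilde I(u) = \left(\frac{\phi(x)}{u}\right)^2 \sim x^2 = (\Phi^{-1}(u))^2.
\end{equation*}
Applying the inverse asymptotic $\Phi^{-1}(u) \sim -\sqrt{-2\ln u}$ from the same lemma yields $(\Phi^{-1}(u))^2 \sim -2 \ln u$, so $\tilde I(u) \sim -\ln u$, as desired.

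The only subtlety worth being careful about is that the symbol $\sim$ denotes ratio convergence to $1$, which is preserved under multiplication, squaring, and composition with continuous operations along the asymptotic regime; this justifies chaining the two $\sim$-relations from the lemma. Beyond this bookkeeping, the argument is routine, so I do not anticipate any real obstacle.
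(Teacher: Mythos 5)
Your argument is correct and follows essentially the same route as the paper's proof: substitute $x=\Phi^{-1}(u)$, use $\Phi(x)\sim\phi(x)/(-x)$ to get $\phi(x)/u\sim -x$, then chain with $\Phi^{-1}(u)\sim-\sqrt{-2\ln u}$ and square. The only difference is that you also spell out the (trivial) check of $\tilde I(1)=0$, which the paper's proof omits.
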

\begin{proof}
    By the definition of $\tilde I$ it holds that \(\tilde I(r_x) := \frac{1}{2}(\phi(\Phi^{-1}(r_x))/r_x)^2\). Since \(t_x := \Phi^{-1}(r_x) \to - \infty\) as \(r_x \to 0^+\) and according to Lemma~\ref{lem:asymptotics_of_gaussian_cdf_and_its_inverse} both \(\phi(t_x) \sim -t_x \Phi(t_x)\) as \(t_x \to - \infty\) and \(\Phi^{-1}(r_x) \sim - \sqrt{-2\ln r_x}\) as \(r_x \to 0^+\) it follows that
    \begin{equation*}
        \phi(\Phi^{-1}(r_x))/r_x \sim - \Phi^{-1}(r_x) \cancel{r_x / r_x} \sim \sqrt{- 2 \ln r_x} \text{ as } r_x \to 0^+.
    \end{equation*}
\end{proof}

\begin{lem}[Law of large numbers for maximum, see Theorem 1 in \cite{Gnedenko1992}]\label{thm:law_of_large_numbers_for_maximum}
Suppose i.i.d. \(X_1, X_2, \ldots\) with \(\mathbb P[X_i \leq x] = F(x) \ \forall i \in \mathbb N\), where \(x \mapsto F(x)\) is continuous at all \(x > x_0\) for some \(x_0 \in \mathbb R\) and \(F(x) < 1\ \forall x \in \mathbb R\). Then
\begin{equation}\label{eq:law_of_large_numbers_for_maximum_condition}
    \exists (a_n)_{n \in \mathbb N} : \forall \epsilon > 0\ \lim_{n \to \infty} \mathbb P[|\max_{i \leq n} X_i - a_n| > \epsilon] = 0
    \iff \forall \epsilon > 0\ \lim_{x\to\infty} \frac{1- F(x+\epsilon)}{1- F(x)} = 0,
\end{equation}
giving a necessary and sufficient condition for "convergence in probability to a deterministic sequence". If such a sequence exists it can be selected as \(a_n = \inf F^{-1}(\{1- \frac{1}{n}\})\) \(\forall n \geq n_0,  a_n = 0\  \forall n < n_0\), where \(n_0\) is s.t. \(1-\frac{1}{n_0} > F(x_0)\).
\end{lem}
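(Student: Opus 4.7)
The plan is to exploit the simple observation that, by i.i.d.ness, $\mathbb P[\max_{i\leq n} X_i \leq x] = F(x)^n$, so the event $\{|\max_{i\leq n}X_i - a_n| > \epsilon\}$ decomposes into the upper-tail event $\{\max_i X_i \geq a_n + \epsilon\}$, with probability $1 - F(a_n+\epsilon)^n$, and the lower-tail event $\{\max_i X_i \leq a_n - \epsilon\}$, with probability $F(a_n-\epsilon)^n$. Using the elementary bounds $1 - e^{-n(1-F)} \leq 1 - F^n \leq n(1-F)$ and $F^n \leq e^{-n(1-F)}$ valid for $F \in [0,1]$, both tail probabilities are controlled by the products $n(1-F(a_n\pm\epsilon))$. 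So the whole proof reduces to arranging these products to go to $0$ (upper) and $\infty$ (lower), then translating back to the ratio condition.

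For the ``$\Longleftarrow$'' direction, set $a_n := \inf F^{-1}(\{1-1/n\})$ once $n \geq n_0$, which is well-defined because $F$ is continuous on $(x_0,\infty)$ and $F(x)<1$ everywhere, so $a_n \to \infty$. By construction $n(1 - F(a_n)) = 1$. Then I rewrite
\[
n(1 - F(a_n+\epsilon)) \;=\; \frac{1 - F(a_n+\epsilon)}{1 - F(a_n)},\qquad n(1 - F(a_n-\epsilon)) \;=\; \frac{1 - F(a_n-\epsilon)}{1 - F(a_n)},
\]
and invoke the assumed limit $(1-F(x+\epsilon))/(1-F(x)) \to 0$ as $x\to\infty$ (applied at $x=a_n$ for the first, and, after replacing $\epsilon$ by $\epsilon$ with shifted argument $x = a_n - \epsilon$, for the second, where the ratio of $(1-F(a_n))$ to $(1-F(a_n-\epsilon))$ likewise tends to zero). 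Feeding these into the two elementary bounds yields $\mathbb P[\max_i X_i > a_n+\epsilon] \to 0$ and $\mathbb P[\max_i X_i < a_n-\epsilon] \to 0$.

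For the ``$\Longrightarrow$'' direction, suppose $(a_n)$ exists. The upper-tail vanishing forces $F(a_n+\epsilon)^n \to 1$, hence $n(1-F(a_n+\epsilon)) \to 0$, while the lower-tail vanishing forces $F(a_n-\epsilon)^n \to 0$, hence $n(1-F(a_n-\epsilon)) \to \infty$. Dividing gives $(1-F(a_n+\epsilon))/(1-F(a_n-\epsilon)) \to 0$, and a monotonicity argument (since $1-F$ is non-increasing, the ratio $(1-F(x+\epsilon))/(1-F(x))$ is pinched between this value along the subsequence $x=a_n-\epsilon$ and its monotone envelope) upgrades this along-the-sequence statement to the uniform limit as $x \to \infty$. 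Here I will need that $a_n \to \infty$, which follows from $F(x)<1\ \forall x$ (so the support is unbounded above) together with the requirement $n(1-F(a_n-\epsilon))\to\infty$, which forbids $a_n$ from clustering at any finite value.

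The main obstacle is the last step of the forward direction: passing from ``the ratio is small along the sequence $a_n$'' to ``the ratio is small for every large $x$''. The clean way is to replace $\epsilon$ by $\epsilon/2$ in the assumption, exploit monotonicity of $1-F$ to sandwich an arbitrary $x$ between consecutive $a_n$'s (using that for any $x$ large enough one has $a_{n(x)} - \epsilon/2 \leq x \leq a_{n(x)} + \epsilon/2$ for some $n(x)\to\infty$), and bound $(1-F(x+\epsilon))/(1-F(x))$ above by $(1-F(a_{n(x)}+\epsilon/2))/(1-F(a_{n(x)}-\epsilon/2))$; the other technicality, handling non-uniqueness of $F^{-1}(\{1-1/n\})$ at plateaus of $F$, is dissolved by the $\inf$ in the definition of $a_n$ together with continuity of $F$ on $(x_0,\infty)$.
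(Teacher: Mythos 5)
Your overall architecture matches the paper's proof: decompose the event into the two tails $1-F(a_n+\epsilon)^n$ and $F(a_n-\epsilon)^n$, reduce both directions to the conditions $n(1-F(a_n+\epsilon))\to 0$ and $n(1-F(a_n-\epsilon))\to\infty$ (the paper does this via $\ln F = -(1-F)(1+o(1))$, you via elementary exponential bounds --- both fine), and for sufficiency choose $a_n=\inf F^{-1}(\{1-1/n\})$ so that $n(1-F(a_n))=1$ turns the products into the assumed ratios. That part is correct.

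The gap is in the last step of the necessity direction. Your sandwich is mis-oriented: if $a_{n(x)}-\epsilon/2\le x\le a_{n(x)}+\epsilon/2$, then monotonicity gives $1-F(x+\epsilon)\le 1-F(a_{n(x)}+\epsilon/2)$ for the numerator, but for the denominator you need a \emph{lower} bound on $1-F(x)$, which requires $x\le a_{n(x)}-\epsilon/2$; the two requirements are incompatible except at a single point, so the claimed inequality $\frac{1-F(x+\epsilon)}{1-F(x)}\le\frac{1-F(a_{n(x)}+\epsilon/2)}{1-F(a_{n(x)}-\epsilon/2)}$ does not hold. Moreover, the premise that every large $x$ lies within $\epsilon/2$ of some $a_{n(x)}$ does not follow from $a_n\to\infty$ alone; it is true, but only after showing that consecutive gaps $|a_{n+1}-a_n|$ are eventually small, which itself uses \emph{both} limit conditions. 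The clean fix (and what the paper does) is to take $a_n$ increasing without loss of generality and bracket $x$ between \emph{consecutive} terms, $a_{n_x-1}\le x\le a_{n_x}$; then
\begin{equation*}
\frac{1-F(x+\epsilon)}{1-F(x-\epsilon)}\;\le\;\frac{1-F(a_{n_x-1}+\epsilon)}{1-F(a_{n_x}-\epsilon)}\;=\;\frac{n_x\,(1-F(a_{n_x-1}+\epsilon))}{n_x\,(1-F(a_{n_x}-\epsilon))}\;\longrightarrow\;0,
\end{equation*}
with both inequalities correctly oriented regardless of the gap size, and finally one recovers $\frac{1-F(x+\epsilon)}{1-F(x)}$ by applying this with $\epsilon/2$ at the shifted point $x+\epsilon/2$. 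With that replacement your argument goes through; as written, the forward direction does not close.
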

\begin{proof}
    Define for any \(n \in \mathbb N\) the shorthand notation \(F^n(x) = (F(x))^n\). Using
    \begin{align*}
        \mathbb P[|\max_{i \leq n} X_i - a_n | > \epsilon] & = \mathbb P[\max_{i \leq n} X_i - a_n > \epsilon] + \mathbb P[\max_{i \leq n} X_i - a_n < -\epsilon] = (1-F^n(a_n + \epsilon)) + \lim_{\delta \to 0^+} F^n(a_n - \epsilon - \delta)
    \end{align*}
    we get another representation for the left-hand-side of Equation~\eqref{eq:law_of_large_numbers_for_maximum_condition} given by
    \begin{equation*}
        1 \overset{!}{=}\lim_{n \to \infty} 1 - \mathbb P[|\max_{i \leq n} X_i - a_n| > \epsilon] = \lim_{n \to \infty} F^n(a_n + \epsilon) - \lim_{\delta \to 0^+} F^n(a_n - \epsilon - \delta),
    \end{equation*}
    which is equivalent to the conditions\footnote{Here we used that \(\lim_{n \to \infty} F^n(a_n + \epsilon) = 1 \Rightarrow \lim_{n \to \infty} F(a_n + \epsilon) = 1 \Rightarrow a_n \to \infty\) and that \(x \mapsto F(x)\) is continuous for \(x > x_0\), thus \(\lim_{\delta \to 0^+} F(a_n - \epsilon - \delta) = F(a_n - \epsilon)\) as \(n \to \infty\).}
    \begin{gather*}
        \lim_{n \to \infty} F^n(a_n + \epsilon) = 1 \land \lim_{n \to \infty} F^n(a_n - \epsilon) = 0\nonumber\\
        \iff\nonumber\\
        \lim_{n \to \infty} n \ln F(a_n + \epsilon) = 0 \land \lim_{n \to \infty} n \ln F(a_n - \epsilon) = -\infty.
    \end{gather*}
    Using a Taylor expansion of \(x \mapsto \ln x\) around \(1\), i.e. \(\ln F(x) = \ln(1- (1-F(x))) = -(1-F(x)) (1+o(1))\), one can simplify the conditions further to
    \begin{equation}\label{eq:law_of_large_numbers_for_maximum_simplified_conditions_for_LHS}
        \lim_{n \to \infty} n (1-F(a_n + \epsilon)) = 0 \land \lim_{n \to \infty} n (1- F(a_n - \epsilon)) = \infty.
    \end{equation}
    Given this simplified form, we now show both directions of the equivalence relation in Equation~\eqref{eq:law_of_large_numbers_for_maximum_condition}.

    Let us start with sufficiency, i.e. "\(\impliedby\)". Assume
    \begin{equation*}
        \forall \epsilon > 0\ \lim_{x\to\infty} \frac{1- F(x+\epsilon)}{1- F(x)} = 0.
    \end{equation*}
    Based on the continuity of the cumulative distribution function \(F(x)\) for \(x\) large enough, define\footnote{Strictly speaking the preimage could be empty. However, for \(n \geq n_0\) where \(n_0\) is such that \(1- \frac{1}{n_0} > F(x_0)\) continuity of \(F\) prevents this from occurring. Set \(a_n = 0\ \  \forall n < n_0\).} \(a_n := \inf F^{-1}(\{1-\frac{1}{n}\})\) which satisfies \(\lim_{n\to\infty} a_n = \infty\).
    Then
    \begin{equation*}
        \forall \epsilon > 0\ \ n(1- F(a_n+\epsilon)) = \frac{1- F(a_n+\epsilon)}{1- F(a_n)} \to 0 \text{ as } n \to \infty
    \end{equation*}
    and
    \begin{equation*}
        n(1- F(a_n-\epsilon)) = \frac{1- F(a_n-\epsilon)}{1- F(a_n)} = 1 \big/\frac{1- F(a_n)}{1- F(a_n-\epsilon)} \to \infty \text{ as } n \to \infty,
    \end{equation*}
    which are exactly the conditions given in Equation~\eqref{eq:law_of_large_numbers_for_maximum_simplified_conditions_for_LHS}.

    Let us proceed with necessity, i.e. "\(\implies\)". Assume \(\exists (a_n)_{n\in\mathbb N}\) such that
    \begin{equation*}
        \forall \epsilon > 0\ \ \lim_{n \to \infty} n (1-F(a_n + \epsilon)) = 0 \land \lim_{n \to \infty} n (1- F(a_n - \epsilon)) = \infty.
    \end{equation*}
    It follows that \(\lim\nolimits_{n \to \infty} F(a_n + \epsilon) = 1\) and hence \(\lim\nolimits_{n\to\infty} a_n = \infty\). Without loss of generality take \(a_n\) increasing. For any \(x \geq a_1\) determine matching \(n_x\) s.t. \(a_{n_x-1} \leq x \leq a_{n_x}\). Then \(\lim_{x \to \infty} n_x = \infty\) and it holds that for any \(\epsilon > 0\)
    \begin{equation*}
        0 \leq \frac{1- F(x+\epsilon)}{1- F(x-\epsilon)} \leq \frac{1- F(a_{n_{x}-1}+\epsilon)}{1- F(a_{n_x}-\epsilon)} = \frac{n_x (1-F(a_{{n_x}-1} + \epsilon))}{n_x (1- F(a_{n_x}-\epsilon))} \overset{x\to\infty}{\longrightarrow} 0
    \end{equation*}
    and consequently we obtain the desired result:
    \begin{equation*}
        \forall \epsilon > 0\ \lim_{x \to \infty} \frac{1- F(x+\epsilon)}{1- F(x)} = \lim_{x \to \infty} \frac{1- F(x+\epsilon/2)}{1- F(x-\epsilon/2)} = 0.
    \end{equation*}

\end{proof}

\subsection{Theoretical Basis for VAPOR in the Bandit Setting}

\begin{defn}[Cumulant generating function]\label{def:cumulant_generating_function}
    Let \(X:\Omega \to \mathbb R\) be a random variable on \((\Omega, \Sigma, \mathbb P)\). Then we define the cumulant generating function as
    \begin{equation*}
        \Psi_X: \mathbb D \to [0, \infty) \quad  \beta \mapsto \ln \mathbb E [\exp(\beta(X-\mathbb E [X])]
    \end{equation*}
    where \(\mathbb D \subseteq \mathbb R\) denotes the interior of the interval of well-definedness. 
\end{defn}

\begin{restatable}[VAPOR, adapted from Lemma 4 in \cite{tarbouriech2024probabilistic}]{prop}{vaporSaddlePointOptimisation}\label{thm:vapor_saddle_point_optimisation_problem}
    Let $F \in \mathbb R^{|\mathcal X|}$ be a random vector with \(\sigma_{F_x }\)-sub-Gaussian entries \(F_x\). Then the maximin optimisation problem
\begin{align}\label{eq:theorem_max_min_optimisation_statement}
    	\max_{p \in \Delta(\mathcal X)} \min_{\tau \in \mathbb R_+^{|\mathcal X|}} \mathcal V_\tau(p) \text{ for } \mathcal V_\tau(p) := & \sum_{x \in \mathcal X} p_x \cdot \left( \mu_{F_x} + \sigma_{F_x}^2 /(2\tau_x) - \ln ( p_x) \cdot \tau_x \right) = H_\tau(p) + \sum_{x \in \mathcal X} p_x \cdot ( \mu_{F_x} + \sigma_{F_x}^2 / (2 \tau_x))
    \end{align}
    has inner minimiser \(\tau^*_x = \sigma_{F_x} / \sqrt{-2\ln p_x} \), which simplifies the optimisation problem to
    \begin{align*}
        \max_{p \in \Delta(\mathcal X)} \!\mathcal V(p) \text{ for } \mathcal V(p) := \mathcal V_{\tau^*}(p)  & = \!\!\sum_{x \in \mathcal X} p_x \cdot \left( \mu_{F_x} + \sqrt{2 \ln (1/p_x}) \sigma_{F_x} \right) = \left\langle p, \mu_F + \sqrt{2 I(p)} \odot \sigma_F \right\rangle
    \end{align*}
    Crucially, the objective \(p \mapsto \mathcal V(p)\) is a concave\footnote{It is even strictly concave if \(\sigma_{F_x} > 0\ \forall x \in \mathcal X\).} functional. Finally, under Assumption~\ref{ass:unique_maximiser} we get the lower bounds:
    \begin{equation*}
        \max_{p \in \Delta(\mathcal X)} \mathcal V(p) \geq \mathcal V(\mathbb P[\ \cdot\ \in X^*])  \geq  \mathbb E[F^*].
    \end{equation*}
\end{restatable}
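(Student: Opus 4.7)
The plan is to tackle the four claims in order: first solve the inner minimization in closed form, then substitute back to obtain $\mathcal V(p)$, then verify concavity by direct second-derivative analysis, and finally establish the two inequalities $\mathcal V(\mathbb P[\,\cdot\,\in X^*])\geq \mathbb E[F^*]$ using a standard Chernoff/Cramér-type argument enabled by sub-Gaussianity.

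For the inner minimization, I would fix $p$ with $p_x\in(0,1)$ and differentiate $\mathcal V_\tau(p)$ coordinate-wise in $\tau_x$. The partial derivative is
\begin{equation*}
    \frac{\partial \mathcal V_\tau(p)}{\partial \tau_x} \;=\; p_x\!\left(-\frac{\sigma_{F_x}^2}{2 \tau_x^2} - \ln p_x\right),
\end{equation*}
which vanishes precisely at $\tau_x^* = \sigma_{F_x}/\sqrt{-2\ln p_x}$; the second derivative is positive, so this is a global minimum on $\mathbb R_+$. Plugging $\tau^*$ back in and simplifying $\sigma_{F_x}^2/(2\tau_x^*) = \sigma_{F_x}\sqrt{-\ln p_x/2}$ together with $-\ln p_x \cdot \tau_x^* = \sigma_{F_x}\sqrt{-\ln p_x/2}$ yields the claimed closed form $\mathcal V(p) = \sum_x p_x(\mu_{F_x} + \sigma_{F_x}\sqrt{2\ln(1/p_x)})$.

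Concavity reduces to concavity of the scalar map $f(u):=u\sqrt{-2\ln u}$ on $(0,1)$, since the linear term $\sum_x p_x \mu_{F_x}$ does not affect concavity. Differentiating gives $f'(u) = \sqrt{-2\ln u} - 1/\sqrt{-2\ln u}$ and hence
\begin{equation*}
    f''(u) \;=\; -\frac{1}{u\sqrt{-2\ln u}} \;-\; \frac{1}{u\,(-2\ln u)^{3/2}} \;<\; 0,
\end{equation*}
which shows $f$ is strictly concave on $(0,1)$, and boundary behavior extends concavity to $[0,1]$. Summing a nonnegative combination (weighted by the indicator $\sigma_{F_x}>0$) of these strictly concave functions and a linear term gives concavity of $\mathcal V$, strict as soon as some $\sigma_{F_x}>0$.

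The main obstacle is the final inequality $\mathcal V(p^*)\geq \mathbb E[F^*]$, where $p_x^*:=\mathbb P[x\in X^*]$. Under Assumption~\ref{ass:unique_maximiser} the partition identity $\mathbb E[F^*] = \sum_x p_x^*\,\mathbb E[F_x\mid x\in X^*]$ holds, so it suffices to bound each conditional mean by $\mu_{F_x} + \sigma_{F_x}\sqrt{2\ln(1/p_x^*)}$. For any $\beta>0$, Jensen's inequality and sub-Gaussianity of $F_x$ give
\begin{equation*}
    \exp\!\bigl(\beta\bigl(\mathbb E[F_x\mid x\in X^*] - \mu_{F_x}\bigr)\bigr) \;\leq\; \frac{\mathbb E[\exp(\beta(F_x-\mu_{F_x}))]}{p_x^*} \;\leq\; \frac{\exp(\beta^2 \sigma_{F_x}^2/2)}{p_x^*},
\end{equation*}
so $\mathbb E[F_x\mid x\in X^*] - \mu_{F_x} \leq \beta\sigma_{F_x}^2/2 + \ln(1/p_x^*)/\beta$. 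Optimizing at $\beta^* = \sqrt{2\ln(1/p_x^*)}/\sigma_{F_x}$ yields the tight bound $\sigma_{F_x}\sqrt{2\ln(1/p_x^*)}$, and summing against $p_x^*$ gives $\mathbb E[F^*]\leq \mathcal V(p^*)$. The other inequality $\max_p \mathcal V(p)\geq \mathcal V(p^*)$ is immediate from $p^*\in\Delta(\mathcal X)$, completing the proof.
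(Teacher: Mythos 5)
Your proposal is correct and follows the same overall architecture as the paper's proof: first-order conditions for the inner minimization in $\tau$, concavity via the scalar map $u \mapsto u\sqrt{-2\ln u}$, and the final inequality via the law of total expectation $\mathbb E[F^*] = \sum_x p_x^*\,\mathbb E[F_x \mid x \in X^*]$ (valid under Assumption~\ref{ass:unique_maximiser}) combined with a bound on each conditional mean. The one genuine difference is how you obtain the key estimate $\mathbb E[F_x \mid x \in X^*] \leq \mu_{F_x} + \sigma_{F_x}\sqrt{-2\ln p_x^*}$. The paper routes this through a chain of lemmas — the Donsker--Varadhan variational form of the KL divergence, the identity $D_{KL}[\mathbb P[\,\cdot\mid\mathcal B]\,\|\,\mathbb P] = -\ln\mathbb P[\mathcal B]$, and an inversion of the convex conjugate of the cumulant generating function — whereas you prove it directly in three lines via Jensen's inequality, the crude bound $\mathbb E[e^{\beta Y}\mathds 1_{\mathcal B}] \leq \mathbb E[e^{\beta Y}]$, and a Chernoff-style optimization over $\beta$. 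Your route is more elementary and self-contained; the paper's buys reusable information-theoretic machinery (the general bound $\mathbb E[X\mid\mathcal B] \leq \mathbb E[X] + (\Psi_X^*)^{-1}(-\ln\mathbb P[\mathcal B])$, which applies beyond sub-Gaussians). Both are valid, and your optimization over $\beta$ is exactly the same calculus that appears in the paper's Lemma on sub-Gaussian conditional expectations.

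One small inaccuracy in a side remark: you claim $\mathcal V$ is strictly concave ``as soon as some $\sigma_{F_x}>0$.'' That is false for $|\mathcal X|\geq 3$: if only one coordinate $x_0$ has positive variance, two distributions agreeing at $x_0$ but differing elsewhere are connected by a segment on which $\mathcal V$ is affine. The correct condition, as in the paper's footnote, is $\sigma_{F_x}>0$ for \emph{all} $x$. This does not affect the main claim, which only asserts (non-strict) concavity.
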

\begin{proof} The minimiser \(\tau^*\) and the minimum \(\mathcal V(p) = \mathcal V_{\tau^*}(p)\) of the (inner) minimisation in Equation~\eqref{eq:theorem_max_min_optimisation_statement} follow directly from Lemma~\ref{thm:upper_bound_on_conditional_expectation_for_subgaussians}. Let us now show (strict) concavity of \(\mathcal V(\cdot)\). It is straightforward to show that the function $g_x(a) = a(\mu_{F_x}+\sigma_{F_x} \sqrt{-2\ln a})$ is concave for \(\sigma_{F_x} \geq 0\ \forall x \in \mathcal X\) and strictly concave if \(\sigma_{F_x} > 0\ \forall x \in \mathcal X\). Now, let \(p,q \in \Delta(\mathcal X)\) and \(\lambda \in (0,1)\). Then
    \begin{align*}
        \mathcal V(\lambda p + (1-\lambda) q) & = \sum_{x \in \mathcal X} g_x(\lambda p_x + (1-\lambda) q_x)) \geq \sum_{x \in \mathcal X} \lambda g_x(p_x) + (1-\lambda) g_x(q_x) = \lambda \mathcal V(p) + (1-\lambda) \mathcal V(q),
    \end{align*}
    where the inequality is strict given \(\sigma_{F_x} > 0\ \forall x \in \mathcal X\). We have shown (strict) concavity of the objective \(\mathcal V:\Delta(\mathcal X) \to \mathbb R\). In order to establish \(\max_{p \in \Delta(\mathcal X)} \mathcal V(p)\) being lower bounded by \(\mathbb E[F^*]\) we again invoke Lemma~\ref{thm:upper_bound_on_conditional_expectation_for_subgaussians}, which yields
\begin{equation*}
        \mathbb E[F_x |\ x \in X^*] \leq \mu_{F_x } + \sigma_{F_x} \sqrt{-2\ln \mathbb P[x \in X^*]}
\end{equation*}
for any \(x \in \mathcal X\) s.t. \(\mathbb P[x \in X^*] > 0\). With this upper bound and the additional assumption of an almost surely unique optimum (Assumption~\ref{ass:unique_maximiser}), we obtain
\begin{align*}
        \mathbb E [F^*] & = \mathbb E[ E [F^* | X^*]] = \sum_{x \in \mathcal X} \mathbb P[x \in X^*] \mathbb E[F_x | x \in X^*] \leq \sum_{x \in \mathcal X} \mathbb P[x \in X^*] \left(\mu_{F_x } + \sigma_{F_x} \sqrt{-2\ln \mathbb P[x \in X^*]}\right)\\
        & = \mathcal V(\mathbb P[\ \cdot\ \in X^*]),
\end{align*}
finishing the proof.
\end{proof}

\begin{restatable}[Variational form of the KL-divergence, generalised from Theorem 3.2 in \cite{gray2011entropy}, which is limited to discrete probability spaces]{lem}{varFormOfKLDivergence}\label{thm:variational_form_of_KL_divergence}
    Fix two probability distributions \(p:\Sigma \to [0,1]\) and \(q:\Sigma \to [0,1]\) over the measurable space \((\Omega, \Sigma)\) such that \(p\) is absolutely continuous with respect to \(q\) (\(p \ll q\)). Then
    \begin{equation*}
        D_{KL}[p||q] = \sup_{X}\{\mathbb E_{p} [X] - \ln \mathbb E_{q} [\exp X]\},
    \end{equation*}
    where the supremum is taken over all measurable \(X:\Omega \to \mathbb R\) such that \(\mathbb E_{p} [X]\) and \(\mathbb E_{q} [\exp X]\) are well-defined.
\end{restatable}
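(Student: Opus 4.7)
The plan is the standard Donsker--Varadhan argument in two parts: an upper bound via a tilted measure plus non-negativity of KL, and a matching lower bound via the log-likelihood ratio as a concrete choice of $X$.

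For the upper bound, fix a measurable $X:\Omega\to\mathbb R$ for which $\mathbb E_p[X]$ and $Z_X:=\mathbb E_q[\exp X]$ are well-defined, and assume $Z_X\in(0,\infty)$ (degenerate cases are handled separately). Define the tilted probability measure $q_X$ by $dq_X/dq = \exp(X)/Z_X$. Since $p\ll q\ll q_X$ (the density is strictly positive $q$-a.e.\ on the support of $p$ after the standard truncation argument, or by restricting attention to the essential support), the chain rule for Radon--Nikodym derivatives yields
\begin{equation*}
\ln \frac{dp}{dq_X} \;=\; \ln \frac{dp}{dq} \;-\; X \;+\; \ln Z_X \qquad p\text{-a.e.}
\end{equation*}
Taking expectations under $p$ and rearranging gives the key identity
\begin{equation*}
D_{KL}[p\|q] \;=\; D_{KL}[p\|q_X] \;+\; \mathbb E_p[X] \;-\; \ln Z_X.
\end{equation*}
Because $D_{KL}[p\|q_X]\ge 0$ (Gibbs' inequality, via Jensen applied to $-\ln$), this immediately implies $D_{KL}[p\|q]\ge \mathbb E_p[X]-\ln Z_X$, and taking the supremum over admissible $X$ yields one direction.

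For the reverse inequality, I would exhibit an approximately maximizing sequence. The natural candidate is $X^\star := \ln(dp/dq)$, which is well-defined $q$-a.e.\ because $p\ll q$. A direct substitution gives $\mathbb E_q[\exp X^\star] = \mathbb E_q[dp/dq] = p(\Omega) = 1$, hence $\ln Z_{X^\star}=0$, and $\mathbb E_p[X^\star] = \mathbb E_p[\ln(dp/dq)] = D_{KL}[p\|q]$, so equality is attained whenever $X^\star$ lies in the admissible class, i.e.\ whenever $D_{KL}[p\|q]<\infty$.

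The main obstacle is the case $D_{KL}[p\|q]=\infty$, where $X^\star$ itself may fail to satisfy $\mathbb E_p[X^\star]$ being well-defined in the required sense. The plan here is a truncation argument: set $X_n := (X^\star\wedge n)\vee(-n)$, which is bounded and hence admissible. Then $\mathbb E_q[\exp X_n]\le e^n<\infty$, and $\mathbb E_p[X_n] - \ln\mathbb E_q[\exp X_n]$ is a valid lower bound for the supremum. Using $\mathbb E_q[\exp X_n]\to \mathbb E_q[\exp X^\star]=1$ (monotone/dominated convergence, noting $\exp X_n\le \exp X^\star \vee 1$ which is $q$-integrable) together with $\mathbb E_p[X_n]\to \mathbb E_p[X^\star]=\infty$ (monotone convergence on the positive part, with the negative part controlled since $\mathbb E_p[(X^\star)^-]<\infty$ follows from $\int \ln^-(dp/dq)\,dp \le 1/e$ by an elementary inequality $-u\ln u\le 1/e$ applied with $u=dq/dp$), shows the supremum equals $+\infty$, matching $D_{KL}[p\|q]$. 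Care is needed to verify the integrability of $(X^\star)^-$ under $p$, but this is a standard bound, and the remainder is bookkeeping.
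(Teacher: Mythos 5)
Your proposal is correct and follows essentially the same route as the paper's proof: the lower bound on the supremum by plugging in $X=\ln(dp/dq)$, and the upper bound by introducing the tilted measure $dq_X/dq=\exp(X)/\mathbb E_q[\exp X]$ and invoking non-negativity of $D_{KL}[p\|q_X]$. Your truncation argument for the case $D_{KL}[p\|q]=\infty$ (and the $-u\ln u\le 1/e$ bound on the negative part) is a welcome extra bit of care that the paper's proof simply asserts implicitly, but it does not change the underlying argument.
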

\begin{proof}
    Since \(p \ll q\), there exists a Radon-Nykodym derivative\footnote{The Radon-Nykodym derivative is uniquely defined up to a set of \(q\)-measure zero.} \(\frac{d p}{dq}(\omega)\) such that $p(\mathcal A) = \int_{\mathcal A} \frac{d p}{dq}(\omega) dq(\omega)$. Setting \(X(\omega) = \ln (\tfrac{d p}{dq}(\omega))\) gives
    \begin{align*}
        \mathbb E_p [X] - \ln(\mathbb E_q [\exp X]) & = \int_{\Omega} \ln \left( \frac{dp}{dq}(\omega)\right) dp(\omega) - \ln\left(\int_{\Omega} \frac{dp}{dq}(\omega) dq(\omega) \right) = D_{KL}[p||q],
    \end{align*}
    from which well-definedness of \(\mathbb E_{p} [X]\) and \(\mathbb E_{q} [\exp X]\) also follows. Hence, we derived that \(D_{KL}[p||q] \leq \sup\nolimits_{X}\{\mathbb E_{p} [X] - \ln \mathbb E_{q} [\exp X]\}\). On the other hand, let \(X:\Omega \to \mathbb R\) be any random variable such that \(\mathbb E_{p} [X]\) and \(\mathbb E_{q} [\exp X]\) are well-defined. Then
    \begin{align}
        D_{KL}[p||q] - \left(\mathbb E_p [X] - \ln(\mathbb E_q [\exp X])\right) & = \mathbb E_p [\ln \left(\frac{dp}{dq}(\omega)\right) ] - \mathbb E_p [\ln\frac{\exp X}{\mathbb E_q [\exp X]}]\nonumber\\
        = \mathbb E_p [\ln \left(\frac{d p}{d q} (\omega) \frac{\mathbb E_q [\exp X]}{\exp (X)}\right)] & = \mathbb E_p [\ln \frac{d p}{d \lambda}] = D_{KL}[p || \lambda ]\geq 0,\label{eq:proof_variational_form_of_KL_divergence}
    \end{align}
    where we have defined the probability measure
    \begin{equation*}
        \lambda(\mathcal A) = \int_{\mathcal A} \exp (X(\omega)) / \mathbb E_q [\exp X] dq(\omega)
    \end{equation*}
    with Radon-Nykodym derivative
    \begin{equation*}
        \frac{d\lambda}{dq}(\omega) = \frac{\exp (X(\omega))}{\mathbb E_q [\exp X]} \text{ inducing another derivative } \frac{dq}{d\lambda}(\omega) = \frac{\mathbb E_q [\exp X]}{\exp (X(\omega))},
    \end{equation*}
    due to \(\lambda \ll q\) and \(q \ll \lambda\) holding both. The final key is that with \(p \ll q \ll \lambda\) one further obtains
    \begin{equation*}
        \frac{d p}{d q} (\omega) \frac{\mathbb E_q [\exp X]}{\exp (X)} = \frac{d p}{d q} \frac{dq}{d\lambda}(\omega) = \frac{d p}{d \lambda} (\omega),
    \end{equation*}
    justifying Equation~\eqref{eq:proof_variational_form_of_KL_divergence}.
\end{proof}

\begin{restatable}[Conditioned KL-divergence]{lem}{conditionedKLDivergence}\label{thm:upper_bound_on_conditional_KL_divergence}
    Consider the probability space \((\Omega, \Sigma, \mathbb P)\) and an event \(\mathcal B \in \Sigma\) of non-zero probability, i.e. \(\mathbb P[\mathcal B] > 0\). Then
    \begin{equation*}
        D_{KL}[\mathbb P[\ \cdot\ | \mathcal B]\ ||\ \mathbb P] = - \ln \mathbb P[\mathcal{B}].
    \end{equation*}
\end{restatable}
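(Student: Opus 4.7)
The plan is to proceed by direct computation from the definition of the Kullback-Leibler divergence, using the observation that the conditional law $\mathbb P[\,\cdot\, | \mathcal B]$ admits a trivially explicit Radon-Nykodym derivative with respect to $\mathbb P$.

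First, I would verify absolute continuity $\mathbb P[\,\cdot\, | \mathcal B] \ll \mathbb P$: if $\mathbb P[A] = 0$ then $\mathbb P[A \cap \mathcal B] \leq \mathbb P[A] = 0$ and hence $\mathbb P[A | \mathcal B] = \mathbb P[A \cap \mathcal B]/\mathbb P[\mathcal B] = 0$, which uses the hypothesis $\mathbb P[\mathcal B] > 0$ to ensure the quotient is well-defined. Next, for any $A \in \Sigma$ I would write
\begin{equation*}
    \mathbb P[A | \mathcal B] = \frac{\mathbb P[A \cap \mathcal B]}{\mathbb P[\mathcal B]} = \int_A \frac{\mathds{1}_{\mathcal B}(\omega)}{\mathbb P[\mathcal B]} \, d\mathbb P(\omega),
\end{equation*}
which by uniqueness of the Radon-Nykodym derivative identifies $\frac{d\mathbb P[\,\cdot\, | \mathcal B]}{d\mathbb P}(\omega) = \mathds{1}_{\mathcal B}(\omega)/\mathbb P[\mathcal B]$ up to $\mathbb P$-nullsets.

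Then I would plug this derivative into the definition of the KL divergence:
\begin{equation*}
    D_{KL}[\mathbb P[\,\cdot\, | \mathcal B] \,\|\, \mathbb P] = \int_\Omega \ln\!\left(\frac{\mathds{1}_{\mathcal B}(\omega)}{\mathbb P[\mathcal B]}\right) d\mathbb P[\,\cdot\, | \mathcal B](\omega).
\end{equation*}
Because $\mathbb P[\mathcal B^c | \mathcal B] = 0$, the integrand can be taken equal to $-\ln \mathbb P[\mathcal B]$ on a set of full $\mathbb P[\,\cdot\, | \mathcal B]$-measure (the usual convention $0 \cdot \ln 0 = 0$ handles the $\mathcal B^c$ nullset in the standard way). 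Factoring this constant out of the integral and using that $\mathbb P[\mathcal B | \mathcal B] = 1$ yields the claim.

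The only subtle point is the apparent singularity $\ln \mathds{1}_{\mathcal B^c}(\omega) = -\infty$ on $\mathcal B^c$; however, since $\mathcal B^c$ has measure zero under the conditional law $\mathbb P[\,\cdot\, | \mathcal B]$, this singularity is irrelevant under the standard $0 \cdot \ln 0 = 0$ convention in information theory. As an alternative route, one could instead invoke Lemma~\ref{thm:variational_form_of_KL_divergence} with $X := \ln(\mathds{1}_{\mathcal B}/\mathbb P[\mathcal B])$ (or, to avoid the $-\infty$ value, the truncated version $X_n := \ln((\mathds{1}_{\mathcal B} + 1/n)/\mathbb P[\mathcal B])$ followed by a dominated-convergence limit), but the direct computation above is cleaner and avoids any limiting argument.
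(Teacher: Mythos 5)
Your proposal is correct and follows essentially the same route as the paper's proof: identify the Radon--Nikodym derivative $\frac{d\mathbb P[\,\cdot\,|\mathcal B]}{d\mathbb P} = \mathds{1}_{\mathcal B}/\mathbb P[\mathcal B]$ and evaluate the KL integral directly, using that $\mathcal B^c$ is a nullset under the conditional law. Your explicit treatment of the absolute-continuity check and the $\ln 0$ singularity is slightly more careful than the paper's, but the argument is the same.
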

\begin{proof} By the definition of conditional expectation it holds
\begin{equation*}
    \mathbb P[\mathcal A\ | \mathcal B] = \frac{\mathbb P[\mathcal A \cap \mathcal B]}{\mathbb P[\mathcal B]} = \int_{\mathcal A} \frac{\mathds{1}_{\omega \in \mathcal B}}{\mathbb P[\mathcal B]}\ d\mathbb P(\omega) \qquad \forall \mathcal A \in \Sigma.
\end{equation*}
where we recognise absolute continuity \(\mathbb P[\ \cdot\ | \mathcal B] \ll \mathbb P\) and the Radon-Nykodym derivative
\begin{equation*}
    \frac{d \mathbb P[\ \cdot\ |\mathcal B]}{d \mathbb P}(\omega) = \frac{\mathds{1}_{\omega \in \mathcal B}}{\mathbb P[\mathcal B]}.
\end{equation*}
Hence, we obtain the following expression for the Kullback-Leibler divergence:
\begin{align*}
    D_{KL}[\mathbb P[\ \cdot\ | \mathcal B]\ ||\ \mathbb P] & = \int_\Omega \ln \frac{d \mathbb P[\ \cdot\  |\mathcal B]}{d \mathbb P} d \mathbb P[\ \cdot\ | \mathcal B] = \int_\Omega \ln \frac{\mathds{1}_{\omega \in \mathcal B}}{\mathbb P[\mathcal B]} d \mathbb P[\ \cdot\ | \mathcal B] = - \ln \mathbb P[\mathcal B]
\end{align*}
\end{proof}

\begin{lem}[Information theoretic upper bound on conditional expectation, see Theorem 1 in~\cite{o2021variational} and Lemma 11 in~\cite{tarbouriech2024probabilistic}]\label{thm:information_theoretic_upper_bound_on_conditional_expectation}
    Let \(X:\Omega \to \mathbb R\) be a random variable on \((\Omega, \Sigma, \mathbb P)\) %
    such that the cumulative generating function restricted to \(\mathbb R^+\)
    \begin{equation*}
        \Psi_X: \mathbb D \subseteq \mathbb R^+ \to [0,\infty)\quad \beta \mapsto \ln \mathbb E [\exp(\beta(X-\mathbb E [X]))]
    \end{equation*}
    exists. Assume further that \(\mathbb P[\mathcal B] > 0\) such that \(\mathbb P[\cdot | \mathcal B]\) is well-defined. Then with \(\Psi_X^*\) the convex conjugate of \(\Psi_X\) it holds
    \begin{equation*}
        \mathbb E[X | \mathcal B] \leq \mathbb E [X] + (\Psi_X^*)^{-1} (D_{KL}[\mathbb P[\cdot | \mathcal B]\ ||\ \mathbb P]).
    \end{equation*}

\end{lem}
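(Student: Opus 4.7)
The plan is to combine the variational (Donsker--Varadhan) representation of the KL-divergence from Lemma~\ref{thm:variational_form_of_KL_divergence} with the parametric family of test functions $\beta X$, and then pass to the Legendre--Fenchel transform by taking a supremum over $\beta > 0$. This converts the KL constraint into a bound on the recentered conditional mean $\mathbb{E}[X \mid \mathcal{B}] - \mathbb{E}[X]$, which after inversion of $\Psi_X^*$ yields exactly the claim.

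First I would apply Lemma~\ref{thm:variational_form_of_KL_divergence} with $p = \mathbb{P}[\,\cdot\, \mid \mathcal{B}]$ and $q = \mathbb{P}$, but specialize the supremum to the one-parameter family $\{\beta X : \beta \in \mathbb{D}\}$, where $\mathbb{D} \subseteq \mathbb{R}^+$ is the effective domain of $\Psi_X$. Using $\mathbb{E}_{\mathbb{P}[\,\cdot\, \mid \mathcal{B}]}[\beta X] = \beta\,\mathbb{E}[X \mid \mathcal{B}]$ and $\ln \mathbb{E}[\exp(\beta X)] = \beta\,\mathbb{E}[X] + \Psi_X(\beta)$ (by the centering in Definition~\ref{def:cumulant_generating_function}), the variational inequality becomes
$$\beta\bigl(\mathbb{E}[X \mid \mathcal{B}] - \mathbb{E}[X]\bigr) - \Psi_X(\beta) \;\leq\; D_{KL}[\mathbb{P}[\,\cdot\, \mid \mathcal{B}] \,\|\, \mathbb{P}]$$
for every $\beta \in \mathbb{D}$. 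Taking the supremum of the left-hand side over $\beta \in \mathbb{D}$ turns it into $\Psi_X^{*}\bigl(\mathbb{E}[X \mid \mathcal{B}] - \mathbb{E}[X]\bigr)$ by the very definition of the convex conjugate.

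To finish, I would apply $(\Psi_X^*)^{-1}$ to both sides, obtaining the stated inequality. The step that requires care is the invertibility: $\Psi_X$ is convex with $\Psi_X(0) = 0$ and, by the centering, $\Psi_X'(0) = 0$, so $\Psi_X^*$ is nonnegative, convex, vanishes at the origin, and is strictly increasing on $[0, \infty)$ whenever $X$ is non-degenerate. Consequently $(\Psi_X^*)^{-1}$ is well-defined on $\Psi_X^*\bigl([0,\infty)\bigr)$, and since we restricted $\beta$ to $\mathbb{R}^+$ the relevant conjugate is exactly this one-sided Legendre transform.

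The main obstacle is the boundary case $\mathbb{E}[X \mid \mathcal{B}] < \mathbb{E}[X]$, which lies outside the monotone branch of $\Psi_X^*$ (the sup over $\beta > 0$ is then attained at $\beta = 0$, giving a trivial $0$ on the left). Here the inequality holds trivially because the right-hand side $(\Psi_X^*)^{-1}(D_{KL}[\cdots])$ is nonnegative while the conditional-mean gap is negative; it therefore suffices to treat the case $\mathbb{E}[X \mid \mathcal{B}] \geq \mathbb{E}[X]$ above. Combining this lemma with Lemma~\ref{thm:upper_bound_on_conditional_KL_divergence} (so that $D_{KL}[\mathbb{P}[\,\cdot\,\mid\mathcal{B}] \,\|\, \mathbb{P}] = -\ln \mathbb{P}[\mathcal{B}]$) and the quadratic bound $\Psi_X(\beta) \leq \sigma^2 \beta^2 / 2$ for sub-Gaussian $X$ then recovers the square-root surprisal term used in Proposition~\ref{thm:vapor_saddle_point_optimisation_problem}.
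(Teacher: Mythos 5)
Your proposal is correct and follows essentially the same route as the paper: restrict the Donsker--Varadhan representation of the KL-divergence (Lemma~\ref{thm:variational_form_of_KL_divergence}) to the linear family of test functions, recognize the resulting supremum over $\beta \in \mathbb{R}^+$ as the one-sided convex conjugate $\Psi_X^*\bigl(\mathbb{E}[X \mid \mathcal{B}] - \mathbb{E}[X]\bigr)$, and invert. Your explicit treatment of the case $\mathbb{E}[X \mid \mathcal{B}] < \mathbb{E}[X]$ is a welcome extra care that the paper's proof glosses over, but it does not change the argument.
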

\begin{proof}
    Since Given \(\Psi_X\) exists, \(\Psi^*_X\) is well-defined, as the cumulant generating function is non-negative and convex. 
    Let us apply Lemma~\ref{thm:variational_form_of_KL_divergence} to \(p(\mathcal E) = \mathbb P[\mathcal E| \mathcal B]\), \(q(\mathcal E) = \mathbb P[\mathcal E]\), and restrict the supremum over the random variables \(\{\lambda(X-\mathbb E X)\}_{\lambda \in \mathbb R_+}\). Then
    \begin{align*}
        D_{KL}[\mathbb P[\cdot| \mathcal B] || \mathbb P] & \geq \sup_{\lambda \in \mathbb R_+} \{ \lambda \mathbb E [X-\mathbb E[X] \ |\ \mathcal B] - \ln \mathbb E [\exp(\lambda(X-\mathbb E X))]\}\\
        & = \sup \{ \lambda (\mathbb E [X | \mathcal B] - \mathbb E[X]) - \Psi_X(\lambda) : \lambda \in \mathbb R_+\}\\
        & = \Psi_X^*(\mathbb E [X | \mathcal B] - \mathbb E[X])
    \end{align*}

    Furthermore, since \(\lambda \in \mathbb R^+\) it follows that \(\Psi_X^*\) is strictly increasing and thus admits a strictly increasing inverse which finishes the proof:
    \begin{equation*}
        (\Psi_X^*)^{-1}(D_{KL}[\mathbb P[\cdot| \mathcal B]\ ||\ \mathbb P]) \geq \mathbb E [X | \mathcal B] - \mathbb E[X].
    \end{equation*}

\end{proof}

\begin{lem}[Upper bound on the inverse of \(\Psi^*\) for sub-Gaussians]\label{lem:lower_bound_on_convex_conjugate_of_cumulant_generating_function_for_subgaussians} Let \(X:\Omega \to \mathbb R\) be a \(\sigma\)-sub-Gaussian random variable on \((\Omega, \Sigma, \mathbb P)\), i.e. \(\mathbb E[\exp(\lambda(X-\mathbb E[X]))] \leq \exp(\sigma^2 \lambda^2/2)\ \forall \lambda \in \mathbb R\). Then the cumulant generating function restricted to \(\mathbb R^+\) exists globally, i.e. \(\Psi_X: \mathbb R^+ \to [0, \infty)\) is well-defined, its convex dual \(\Psi_X^*\) is strictly increasing (and hence admits a strictly increasing inverse), and it holds that
\begin{equation*}
    (\Psi_{X}^*)^{-1}(\lambda) \leq \sigma \sqrt{2\lambda} \qquad \forall \lambda \geq 0
\end{equation*}
\end{lem}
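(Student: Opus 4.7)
The plan is to dominate $\Psi_X^*$ from below by the convex conjugate of the Gaussian cumulant bound $\sigma^2\lambda^2/2$, and then transfer that lower bound through the inverse. First, the sub-Gaussian hypothesis immediately gives $\Psi_X(\lambda) \leq \sigma^2 \lambda^2/2$ for every $\lambda \in \mathbb R$, so $\Psi_X$ is finite globally; non-negativity on $\mathbb R^+$ follows from Jensen applied to the convex exponential (equivalently, $\Psi_X(0)=0$ and $0$ minimizes $\Psi_X$ after centering), so $\Psi_X: \mathbb R^+ \to [0,\infty)$ is well-defined.

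Next, I substitute this upper bound directly into $\Psi_X^*(y) = \sup_{\lambda \geq 0}\{\lambda y - \Psi_X(\lambda)\}$ (the convention used in Lemma~\ref{thm:information_theoretic_upper_bound_on_conditional_expectation}). Replacing $\Psi_X$ by a pointwise upper bound only enlarges $-\Psi_X$, hence
\begin{equation*}
    \Psi_X^*(y) \;\geq\; \sup_{\lambda \geq 0}\{\lambda y - \tfrac{\sigma^2 \lambda^2}{2}\} \;=\; \tfrac{y^2}{2\sigma^2} \qquad \forall y \geq 0,
\end{equation*}
where the closed-form supremum is attained at the admissible $\lambda_0 = y/\sigma^2$. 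Once $\Psi_X^*$ is known to be strictly increasing, this parabolic lower bound inverts into precisely the target $(\Psi_X^*)^{-1}(\lambda) \leq \sigma\sqrt{2\lambda}$, since $\Psi_X^*(\sigma\sqrt{2\lambda}) \geq \lambda$.

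For strict monotonicity on $[0,\infty)$, I would use that $\Psi_X^*$ is convex as a supremum of affine maps, non-decreasing (because $y \mapsto \lambda y - \Psi_X(\lambda)$ is non-decreasing for each $\lambda \geq 0$), and blows up at infinity through the parabolic lower bound. If $\Psi_X^*$ were constant on some interval $[a,b] \subseteq [0,\infty)$ with $a<b$, its right derivative would vanish on $[a,b)$, and the non-decreasing right derivative of a convex function would then force $\Psi_X^*$ constant on $[a,\infty)$ --- contradicting $\Psi_X^*(y) \to \infty$.

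The main delicate point is precisely this strict-monotonicity verification, since a priori $\Psi_X^*$ is only convex and could have flat pieces. The parabolic lower bound $y^2/(2\sigma^2)$ is exactly what excludes such degeneracy, turning the lower bound on $\Psi_X^*$ into the matching upper bound $\sigma\sqrt{2\lambda}$ on $(\Psi_X^*)^{-1}$.
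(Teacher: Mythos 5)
Your route is sound and genuinely different from the paper's. The paper first derives an explicit variational formula for the inverse, $(\Psi_X^*)^{-1}(\lambda) = \inf\{\lambda/\beta + \Psi_X(\beta)/\beta : \beta \in \mathbb R^+\}$, via a chain of equivalences unpacking the supremum defining $\Psi_X^*$, and only then substitutes $\Psi_X(\beta) \leq \sigma^2\beta^2/2$ to evaluate the infimum as $\sigma\sqrt{2\lambda}$. You instead push the Gaussian bound through the conjugate directly, obtaining the parabolic lower bound $\Psi_X^*(y) \geq y^2/(2\sigma^2)$, and invert the inequality. Both hinge on the same two facts (the sub-Gaussian bound on $\Psi_X$ and monotonicity of $\Psi_X^*$); yours avoids the inf-representation of the inverse entirely, which is arguably cleaner, at the price of having to justify the inversion step $\Psi_X^*(\sigma\sqrt{2\lambda}) \geq \lambda \Rightarrow (\Psi_X^*)^{-1}(\lambda) \leq \sigma\sqrt{2\lambda}$ carefully --- which is exactly where your one error sits.

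The flaw is in your strict-monotonicity argument: flat pieces of a convex non-decreasing function propagate \emph{backward}, not forward. If the right derivative of $\Psi_X^*$ vanishes on $[a,b)$, monotonicity of the right derivative of a convex function only tells you it is $\leq 0$ (hence $=0$, by non-decreasingness of $\Psi_X^*$) on $[0,a)$ and $\geq 0$ on $[b,\infty)$; it does \emph{not} force $\Psi_X^*$ to be constant on $[a,\infty)$, so the contradiction with blow-up at infinity does not materialize. The correct contradiction comes from the origin instead: constancy on $[a,b]$ propagates back to constancy on $[0,b]$, whence $\Psi_X^*(b) = \Psi_X^*(0) = \sup_{\lambda\geq 0}\{-\Psi_X(\lambda)\} = 0$ (using $\Psi_X \geq 0$ with $\Psi_X(0)=0$), contradicting your own bound $\Psi_X^*(b) \geq b^2/(2\sigma^2) > 0$. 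Equivalently and more directly: $\Psi_X^*$ is convex with $\Psi_X^*(0)=0$ and $\Psi_X^*(y)>0$ for $y>0$, and for $0<y_1<y_2$ the chord through $0$ and $y_2$ gives $\Psi_X^*(y_1) \leq (y_1/y_2)\Psi_X^*(y_2) < \Psi_X^*(y_2)$, i.e.\ strict increase on $[0,\infty)$. So the parabolic lower bound is indeed the right ingredient, but the mechanism you invoke for excluding flat pieces is reversed; with that one repair the proof is complete.
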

\begin{proof}
    Since \(X\) is sub-Gaussianity, then the cumulant generating function \(\Psi_X:\mathbb R^+ \to [0, \infty)\) exists on all of \(\mathbb R^+\) with
    \begin{equation*}\label{eq:proof_upper_bound_on_inverse_cumulant_generating_function_for_subgaussians_sub_gaussianity_consequence}
        \Psi_{X}(\beta) \leq \beta^2 \sigma^2 / 2 \qquad \forall \beta > 0.
    \end{equation*}
    We then get its strictly increasing convex dual
    \begin{equation*}
        \Psi_{X}^*(\alpha) = \sup\{ \alpha \beta - \Psi_{X}(\beta) : \beta \in \mathbb R^+\}\qquad \alpha \in \mathbb R.
    \end{equation*}
    Consequently, \(\alpha \mapsto \Psi_{X}^*(\alpha)\) admits an inverse derived by
    \begin{align*}
        \alpha = (\Psi_{X}^*)^{-1}(\lambda) & \iff \lambda = \sup\{ \alpha \beta - \Psi_{X}(\beta) : \beta \in \mathbb R^+\}\\
        & \iff \lambda \geq \alpha \beta - \Psi_{X}(\beta)\ \forall \beta \in \mathbb R^+\\
        & \quad\!\, \land \forall \epsilon > 0\ \exists\ \beta_\epsilon \in \mathbb R^+ \text{ s.t. } \lambda - \epsilon < \alpha \beta_\epsilon - \Psi_{X}(\beta_\epsilon)\\
        & \iff \alpha \leq \lambda / \beta + \Psi_X(\beta)/\beta \ \forall \beta \in \mathbb R^+\\
        & \quad\!\, \land \forall \epsilon > 0\ \exists\ \beta_\epsilon \in \mathbb R^+ \text{ s.t. } \alpha + \epsilon > \lambda/\beta_\epsilon + \Psi_{X}(\beta_\epsilon) / \beta_\epsilon \\
        & \iff  (\Psi_{X}^*)^{-1}(\lambda) = \alpha = \inf \{\lambda/\beta  + \Psi_X(\beta)/\beta: \beta \in \mathbb R^+\}.
    \end{align*}
    Finally, plugging in Equation~\eqref{eq:proof_upper_bound_on_inverse_cumulant_generating_function_for_subgaussians_sub_gaussianity_consequence} we obtain
    \begin{equation*}
        (\Psi_{X}^*)^{-1}(\lambda) \leq \inf \{\lambda/\beta  + \beta \sigma^2/2: \beta \in \mathbb R^+\} = \sigma \sqrt{2\lambda}\ \forall \lambda \geq 0.
    \end{equation*}
\end{proof}

\begin{lem}[Upper bound on conditional expectation for sub-Gaussians]\label{thm:upper_bound_on_conditional_expectation_for_subgaussians}
    Let \(X:\Omega \to \mathbb R\) be a \(\sigma\)-sub-Gaussian random variable on \((\Omega, \Sigma, \mathbb P)\), i.e. it satisfies \(\mathbb E[\exp(\lambda(X-\mathbb E[X]))] \leq \exp(\sigma^2 \lambda^2/2)\ \forall \lambda \in \mathbb R\), and let \(\mathcal B \in \Sigma\) such that \(\mathbb P[\mathcal B] > 0\). Then
    \begin{equation*}
        \mathbb E[X | \mathcal B] \leq \mathbb E[X] + \sigma \sqrt{-2\ln \mathbb P[\mathcal B]} = \mathbb E[X] + \min\{\frac{\sigma^2}{2s} - s \ln \mathbb P[\mathcal B]: s > 0\},
    \end{equation*}
    with minimiser \(s^* = \sigma / \sqrt{-2\ln \mathbb P[\mathcal B]}\).
\end{lem}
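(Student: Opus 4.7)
The plan is to prove this by chaining three earlier results in the excerpt, namely the information-theoretic upper bound on conditional expectation (Lemma~\ref{thm:information_theoretic_upper_bound_on_conditional_expectation}), the explicit evaluation of the conditioned KL divergence (Lemma~\ref{thm:upper_bound_on_conditional_KL_divergence}), and the upper bound on the inverse cumulant generating function for sub-Gaussians (Lemma~\ref{lem:lower_bound_on_convex_conjugate_of_cumulant_generating_function_for_subgaussians}). First I would invoke sub-Gaussianity to guarantee that $\Psi_X$ exists on $\mathbb{R}^+$, which legitimizes applying Lemma~\ref{thm:information_theoretic_upper_bound_on_conditional_expectation} to obtain $\mathbb{E}[X\mid\mathcal{B}] \leq \mathbb{E}[X] + (\Psi_X^*)^{-1}(D_{KL}[\mathbb{P}[\cdot\mid\mathcal{B}]\,\|\,\mathbb{P}])$.

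Next, I would plug in $D_{KL}[\mathbb{P}[\cdot\mid\mathcal{B}]\,\|\,\mathbb{P}] = -\ln \mathbb{P}[\mathcal{B}]$ from Lemma~\ref{thm:upper_bound_on_conditional_KL_divergence}, and apply the sub-Gaussian bound $(\Psi_X^*)^{-1}(\lambda) \leq \sigma\sqrt{2\lambda}$ (Lemma~\ref{lem:lower_bound_on_convex_conjugate_of_cumulant_generating_function_for_subgaussians}) with $\lambda = -\ln\mathbb{P}[\mathcal{B}] \geq 0$. This directly yields $\mathbb{E}[X\mid\mathcal{B}] \leq \mathbb{E}[X] + \sigma\sqrt{-2\ln \mathbb{P}[\mathcal{B}]}$, which is the first half of the claimed inequality.

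For the second equality, I would solve the one-dimensional minimization $\min_{s>0}\{\sigma^2/(2s) - s\ln\mathbb{P}[\mathcal{B}]\}$ explicitly: since $-\ln\mathbb{P}[\mathcal{B}]\geq 0$, setting the derivative $-\sigma^2/(2s^2) - \ln\mathbb{P}[\mathcal{B}]$ to zero gives $s^* = \sigma/\sqrt{-2\ln\mathbb{P}[\mathcal{B}]}$; substituting back collapses both terms to $\sigma\sqrt{-2\ln\mathbb{P}[\mathcal{B}]}/2$, so the minimum equals $\sigma\sqrt{-2\ln\mathbb{P}[\mathcal{B}]}$, matching the upper bound obtained above.

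There is essentially no hard step here: the proof is a clean composition of already-established lemmas, with only the routine AM-GM-style minimization in $s$ as a calculation. The only subtlety worth guarding against is the degenerate case $\mathbb{P}[\mathcal{B}] = 1$, where $-\ln\mathbb{P}[\mathcal{B}] = 0$ and the bound trivializes to $\mathbb{E}[X\mid\mathcal{B}] \leq \mathbb{E}[X]$ (which holds with equality), and the minimum over $s>0$ is attained only in the limit $s\to\infty$; this boundary case can be handled by interpreting the infimum appropriately or by continuity.
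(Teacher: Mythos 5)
Your proposal is correct and follows exactly the same route as the paper's proof: chaining Lemma~\ref{thm:information_theoretic_upper_bound_on_conditional_expectation}, Lemma~\ref{thm:upper_bound_on_conditional_KL_divergence}, and Lemma~\ref{lem:lower_bound_on_convex_conjugate_of_cumulant_generating_function_for_subgaussians}, then verifying the minimizer $s^*$ by the first-order condition. Your remark on the boundary case $\mathbb P[\mathcal B]=1$ is a minor point the paper does not address, but otherwise the arguments coincide.
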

\begin{proof}
    According to Lemma~\ref{lem:lower_bound_on_convex_conjugate_of_cumulant_generating_function_for_subgaussians} the cumulant generating function \(\Psi_X\) restricted to \(\mathbb R^+\) exists everywhere and its convex conjugate admits an inverse with upper bound:
    \begin{equation*}
        (\Psi_X^*)^{-1}(\lambda) \leq \sigma \sqrt{2\lambda}.
    \end{equation*}
    Moreover, according to Lemma~\ref{thm:information_theoretic_upper_bound_on_conditional_expectation} and Lemma~\ref{thm:upper_bound_on_conditional_KL_divergence} it holds that
    \begin{equation*}
        \mathbb E[X | \mathcal B] \leq \mathbb E[X] + (\Psi_X^*)^{-1}(-\ln \mathbb P[\mathcal B]).
    \end{equation*}
    Combining the two equations yields the first desired statement:
    \begin{equation*}
        \mathbb E[X | \mathcal B] \leq \mathbb E[X] + \sigma \sqrt{-2\ln \mathbb P[\mathcal B]}.
    \end{equation*}
    Finally, a separate examination of the first order condition of
    \begin{equation*}
        \min\{\frac{\sigma^2}{2s} - s \ln \mathbb P[\mathcal B]: s > 0\}
    \end{equation*}
    results in the minimum \(\sigma \sqrt{-2 \ln \mathbb P[\mathcal B]}\) for the minimiser \(s^* = \sigma / \sqrt{-2\ln \mathbb P[\mathcal B]}\).

\end{proof}

\end{document}